\def\bfm{\mathbf{m}}
\def\bfv{\mathbf{v}}
\def\bfw{\mathbf{w}}
\def\bfx{\mathbf{x}}
\def\bfy{\mathbf{y}}
\def\bfz{\mathbf{z}}
\def\rmd{\mathrm{d}}
\def\rmdx{\mathrm{d}\bfx}
\def\rmdz{\mathrm{d}\bfz}
\def\bbE{\mathbb{E}}
\def\bbR{\mathbb{R}}
\def\bbV{\mathbb{V}}
\def\nablazt{\nabla_{\bfz_t}}
\def\nablaxt{\nabla_{\bfx_t}}
\def\calI{\mathcal{I}}
\def\calR{\mathcal{R}}
\def\calL{\mathcal{L}}
\def\calD{\mathcal{D}}
\def\calN{\mathcal{N}}
\def\hatx{\hat{\bfx}}
\def\bftheta{\boldsymbol{\theta}}
\def\bfeps{\boldsymbol{\epsilon}}
\def\bfphi{\boldsymbol{\phi}}
\def\bfI{\mathbf{I}}
\def\eg{\emph{e.g.}\xspace}
\def\ie{\emph{i.e.}\xspace}
\def\wrt{\emph{w.r.t.}\xspace}
\def\eqref#1{(\ref{#1})}
\def\DAE{\mathrm{DAE}}
\theoremstyle{plain}
\newtheorem{theorem}{Theorem}[section]
\newtheorem{proposition}[theorem]{Proposition}
\newtheorem{lemma}[theorem]{Lemma}
\newtheorem{corollary}[theorem]{Corollary}
\theoremstyle{definition}
\newtheorem{assumption}[theorem]{Assumption}
\theoremstyle{remark}
\newtheorem{remark}[theorem]{Remark}
\def\ourName{GITS\xspace}
\icmltitlerunning{On the Trajectory Regularity of ODE-based Diffusion Sampling}
\begin{document}

\twocolumn[
\icmltitle{On the Trajectory Regularity of ODE-based Diffusion Sampling}




\begin{icmlauthorlist}
\icmlauthor{Defang Chen}{zju}
\icmlauthor{Zhenyu Zhou}{zju}
\icmlauthor{Can Wang}{zju}
\icmlauthor{Chunhua Shen}{zju}
\icmlauthor{Siwei Lyu}{ub}
\end{icmlauthorlist}

\icmlaffiliation{zju}{Zhejiang University, China}
\icmlaffiliation{ub}{University at Buffalo, USA}

\icmlcorrespondingauthor{Defang Chen}{defchern@zju.edu.cn}

\icmlkeywords{Machine Learning, ICML}

\vskip 0.3in
]



\printAffiliationsAndNotice{}  

	
\begin{abstract}
    Diffusion-based generative models use stochastic differential equations (SDEs) and their equivalent ordinary differential equations (ODEs) to establish a smooth connection between a complex data distribution and a tractable prior distribution. 
    In this paper, we identify several intriguing trajectory properties in the ODE-based sampling process of diffusion models. 
    We characterize an implicit denoising trajectory and discuss its vital role in forming the coupled sampling trajectory with a strong shape regularity, regardless of the generated content. 
    We also describe a dynamic programming-based scheme to make the time schedule in sampling better fit the underlying trajectory structure. 
    This simple strategy requires minimal modification to any given ODE-based numerical solvers and incurs negligible computational cost, while delivering superior performance in image generation, especially in $5\sim 10$ function evaluations. 
\end{abstract}

\section{Introduction}
\label{sec:intro}

Diffusion-based generative models~\cite{sohl2015deep,song2019ncsn,ho2020ddpm,song2021sde} have gained significant attention and achieved remarkable results in image~\cite{dhariwal2021diffusion,rombach2022ldm}, audio~\cite{kong2021diffwave}, video~\cite{ho2022video,blattmann2023videoLDM}, and notably in text-to-image synthesis~\cite{saharia2022photorealistic,ruiz2023dreambooth,podell2024sdxl,esser2024scaling}. These models introduce noise into data through a {\em forward process} and subsequently generate data by sampling via a {\em backward process}. Both processes are characterized and modeled using stochastic differential equations (SDEs)~\cite{song2021sde}. In diffusion-based generative models, the pivotal element is the score function, defined as the gradient of the log data density \textit{w.r.t.}\ the input~\cite{hyvarinen2005estimation,lyu2009interpretation,raphan2011least,vincent2011dsm}, irrespective of the model's specific configurations. 
Training such a model involves learning the score function, which is achievable by developing a noise-dependent denoising model. This model is trained to minimize the mean square error in data reconstruction for the data-noise pairings generated during the forward process~\cite{kingma2021vdm,karras2022edm}.
To generate data, diffusion-based generative models solve the acquired score-based backward SDE through a numerical solver. Recent research has shown that the backward SDE can be effectively replaced by an equivalent probability flow ordinary differential equation (PF-ODE), preserving identical marginal distributions~\cite{song2021sde,song2021ddim,lu2022dpm,zhang2023deis,zhou2024fast}. This deterministic ODE-based generation reduces the need for stochastic sampling to just the randomness in the initial sample selection, thereby simplifying and granting more control over the entire generative process~\cite{song2021ddim,song2021sde}. Under the PF-ODE formulation, starting from white Gaussian noise, the \textit{sampling trajectory} is formed by running a numerical solver with discretized time steps. These steps collectively constitute the \textit{time schedule} used in sampling.

\begin{figure}[t]
	\centering
	\includegraphics[width=\columnwidth]{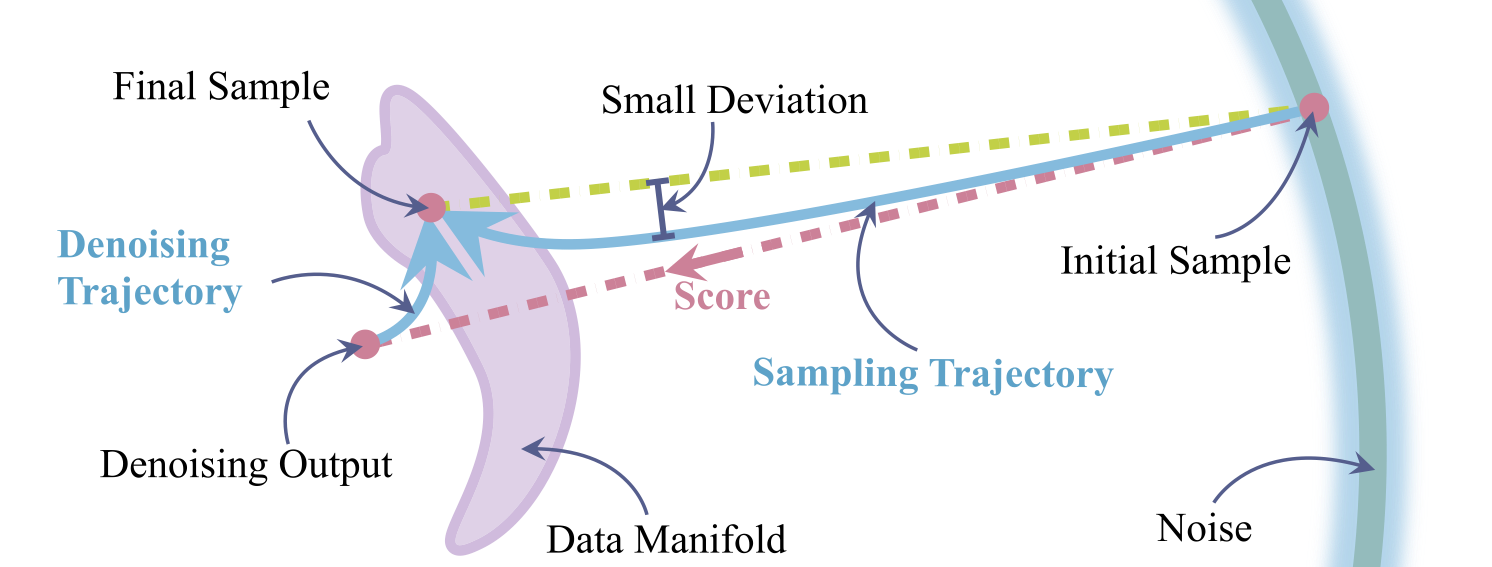}
	\caption{A geometric picture of ODE-based sampling in diffusion models. Each initial sample (from the noise distribution) starts from a big sphere and converges to the final sample (in the data manifold) along a regular sampling trajectory, which is controlled by an implicit denoising trajectory.
    }
    \label{fig:model}
\end{figure}

Despite the impressive generative capabilities exhibited by diffusion-based models, many mathematical and statistical aspects of these models remain veiled in mystery, largely due to the complex nature of SDEs and neural network models, and the substantial data dimensions involved. In particular, empirical studies indicate an intriguing regularity in the sampling trajectories of PF-ODE based diffusion models~\cite{chen2023geometric}, \ie, the tendency of sample paths to exhibit a ``boomerang" shape, or specifically, a linear-nonlinear-linear structure as depicted in Figure~\ref{fig:model}. In addition, we observe that each sampling trajectory barely strays from the straight line joining its beginning and end points, a deviation that can be effectively approximated using two or three orthogonal bases (Section~\ref{sec:trajectory_visualization}). This pattern appears consistently in different trajectories, irrespective of their initial random samples or the corresponding real data samples (see Figure~\ref{fig:traj_3d}). 
This simple structure guarantees the common use of a shared time schedule for synthesizing different samples, and enable us to safely adopt large sampling steps without incurring much truncation error~\cite{song2021ddim,karras2022edm,lu2022dpm}, especially at the first step~\cite{dockhorn2022genie,zhou2024fast}.

We hypothesize that this regularity reflects some underlying geometric structures of the sampling trajectories. This work aims to elucidate this phenomenon. We start by simplifying the ODE-based sampling trajectory, which reveals an implicit \textit{denoising trajectory}. The denoising trajectory corresponds to a rotation of each point on the sampling trajectory and thus determines the curvature of the sampling trajectory (Section~\ref{subsec:denoising_trajectory}). Built upon this insight, we show that the denoising trajectory affords a closed-form solution when we use a kernel density estimation (KDE) of varying widths to approximate the original data distribution from training samples. This is analogous to the classical mean-shift algorithm~\cite{fukunaga1975estimation,cheng1995mean,comaniciu2002mean}, albeit an important difference is that we use time-varying width in KDE (Section~\ref{subsec:meanshift}). Though not feasible for the practical solution of the sampling trajectories, the KDE-based solution converges to the optimal solution based on the real data distribution asymptotically. Its closed form lends itself to theoretical analysis. We show that the linear-nonlinear-linear structure follows naturally from this interpretation of the PF-ODE. This trajectory regularity unifies prior observations and clarifies many existing heuristics to expedite diffusion model sampling. 
Using the shape regularity of the sampling trajectories, we introduce an efficient and effective accelerated sampling approach based on dynamic programming to determine the optimal time schedule. Experimental results demonstrate that trajectory regularity-based accelerated sampling can significantly improve the performance of diffusion-based generative models in a few ($\leq10$) function evaluations. Our main contributions are summarized as follows\footnote{The unpublished, early manuscript of this paper can be found in arXiv~\cite{chen2023geometric}.}:
\begin{itemize} 
    \item We describe and demonstrate a strong shape regularity of trajectories of ODE-based diffusion sampling, \ie, the sampling trajectories approximately follow a similar shape with a linear-nonlinear-linear structure. 
    \item We explain this regularity through the closed form of the denoising trajectory under a KDE-based data modeling with the time-varying bandwidth.
    \item We develop a dynamic programming-based approach that leverages the trajectory regularity to find the optimal time schedule of the sampling steps. It introduces minimal overhead and yields improved image quality.
\end{itemize}

\section{Preliminaries}
\label{sec:preliminaries}
For successful generative modeling, it is essential to connect the data distribution $p_{d}$ with a non-informative, manageable distribution $p_{n}$. Diffusion models fulfill this objective by incrementally introducing white Gaussian noise into the data, effectively obliterating its structures, and subsequently reconstructing the synthesized data from noise samples via a series of denoising steps. The forward step can be modeled as a diffusion process $\{\bfx_t\}_{t=0}^T$ starting from $\bfx_0\sim p_d$, which is the solution of a linear It\^{o} stochastic differential equation (SDE)~\cite{song2021sde,karras2022edm}
\begin{equation}
	\label{eq:forward_sde}
	\rmd \bfx_t = f(t)\bfx_t \rmd t + g(t) \rmd \bfw_t,  
\end{equation}
where $\bfw_t$ denotes the Wiener process; $f(t)\bfx_t$ and $g(t)$ are the drift and diffusion coefficients, respectively. The marginal distribution $p_t(\bfx_t)$ evolves according to the well-known Fokker-Planck equation given the initial condition $p_0(\bfx_0)=p_d(\bfx_0)$~\cite{oksendal2013stochastic}. By properly setting the drift and diffusion coefficients, the data distribution is smoothly transformed to the (approximate) noise distribution $p_T(\bfx_T)\approx p_n$ in a forward manner. The transition kernel in this context is always a Gaussian distribution, \ie, $p_{0t}(\bfx_t | \bfx_0)= \calN(\bfx_t ; s(t)\bfx_0, s^2(t)\sigma^2(t)\bfI)$, where $s(t) = \exp \left(\int_{0}^{t} f(\xi) \rmd \xi\right)$, $\sigma(t) = \sqrt{\int_{0}^{t} \left[g(\xi)/s(\xi)\right]^2\rmd \xi}$, and we denote them as $s_t$, $\sigma_t$ hereafter for notation simplicity. The signal-to-noise ratio (SNR) is defined as $1/\sigma^2_t$~\cite{karras2022edm}. More details are provided in Section~\ref{subsec:equivalence}. 

In the literature, two forms of SDEs are commonly used, namely, the variance-preserving (VP) SDE and the variance-exploding (VE) SDE~\cite{song2021sde}, and they are both widely used in large-scale generative models~\cite{ramesh2022hierarchical,rombach2022ldm,balaji2022ediffi,yuan2023physdiff}.
Our analysis will be based on VE-SDEs and the results can be easily extended to VP-SDEs. In fact, we can safely remove the drift term in \eqref{eq:forward_sde} without changing the essential characteristics of the underlying diffusion model. 
\begin{remark}[Proofs in Section~\ref{subsec:equivalence}]
    \label{remark:ito}
    Linear diffusion processes sharing the same SNR of transition kernels 
    are equivalent according to It\^{o}'s lemma~\cite{oksendal2013stochastic}.
\end{remark}
Because of this, we only consider a standardized VE-SDE
\begin{equation}
	\label{eq:new_sde}
	\rmd \bfx_t = \sqrt{\rmd \sigma^2_t/\rmd t}\, \rmd \bfw_t, \quad \sigma_t: \bbR\rightarrow \bbR, 
\end{equation}
where $\sigma_t$ is a pre-defined increasing noise schedule. 

The reverse of the forward SDE, as expressed in \eqref{eq:new_sde}, is represented by another SDE which facilitates the synthesis of data from noise through a backward sampling  \cite{feller1949theory,anderson1982reverse}. Notably, a probability flow ordinary differential equation (PF-ODE) exists and maintains the same marginal distributions $\{p_t(\bfx_t)\}_{t=0}^T$ as the SDE at each time step throughout the diffusion process~\cite{song2021sde}:
\begin{equation}
	\label{eq:pf_ode}
    \rmd \bfx_t = - \sigma_t\nablaxt \log p_t(\bfx_t) \rmd \sigma_t.
\end{equation}
The deterministic nature of ODE offers several benefits in generative modeling, including efficient sampling, unique encoding, and meaningful latent manipulations~\cite{song2021sde,song2021ddim}. We thus choose this formula to analyze the sampling behavior of diffusion models throughout this paper.

\textbf{Training.} Simulating the above PF-ODE \eqref{eq:pf_ode} requests having the score function $\nablaxt \log p_t(\bfx_t)$~\cite{hyvarinen2005estimation,lyu2009interpretation}, which is typically estimated with \textit{denoising score matching} \cite{vincent2011dsm}. 
Thanks to a profound connection between the score function and the posterior expectation, \ie, $\bbE(\bfx_0|\bfx_t)=\bfx_t+\sigma^2_t\nablaxt \log p_t(\bfx_t)$~\cite{robbins1956eb,efron2011tweedie,raphan2011least}, we can also train a \textit{denoising autoencoder} (DAE)~\cite{vincent2008extracting,bengio2013generalized}, denoted as $r_{\bftheta}$, to estimate the conditional expectation $\bbE(\bfx_0|\bfx_t)$, and then convert it to the score function. The objective function of training such a neural network across different noise levels with the weighting function $\lambda(t)$ is $\calL_{\DAE}\left(\bftheta; \lambda(t)\right):=$
\begin{equation}
	\label{eq:dae}
	\int_{0}^{T} \lambda(t)\bbE_{\bfx_0 \sim p_d} \bbE_{\bfx_t \sim p_{0t}(\bfx_t|\bfx_0)} \lVert r_{\bftheta}(\bfx_t; \sigma_t) - \bfx_0  \rVert^2_2 \rmd t.
\end{equation}
The optimal estimator $r_{\bftheta}^{\star}(\bfx_t; \sigma_t)$, also known as Bayesian least squares estimator,
equals $\bbE(\bfx_0|\bfx_t)$. We thus have $r_{\bftheta}^{\star}(\bfx_t; \sigma_t)=\bfx_t+\sigma^2_t\nablaxt \log p_t(\bfx_t)$. In practice, it is assumed that this connection approximately holds for a converged model ($r_{\bftheta}(\bfx_t; \sigma_t)\approx r_{\bftheta}^{\star}(\bfx_t; \sigma_t)$\footnote{We slightly abuse the notation and still denote the converged model as $r_{\bftheta}(\cdot; \cdot)$ hereafter.}), and we can plug it into \eqref{eq:pf_ode} to derive the \textit{empirical} PF-ODE as follows
\begin{equation}
	\label{eq:epf_ode}
	\rmd \bfx_t = \frac{\bfx_t-r_{\bftheta}(\bfx_t; \sigma_t)}{\sigma_t}\rmd \sigma_t=\bfeps_{\bftheta}(\bfx_t; \sigma_t)\rmd \sigma_t.
\end{equation}
The noise-prediction model $\bfeps_{\bftheta}(\cdot, \cdot)$ introduced above are used in some previous works~\cite{ho2020ddpm,song2021ddim,nichol2021improved,bao2022analytic}. 

\textbf{Sampling.} Given the empirical PF-ODE~\eqref{eq:epf_ode}, we can synthesize novel samples by first drawing pure noises $\hatx_{t_N} \sim p_n$ as the initial condition, and then numerically solving this equation backward with $N$ steps to obtain a sequence $\{\hatx_{t_n}\}_{n=0}^{N}$ with a certain time schedule $\Gamma=\{t_0=\epsilon, \cdots, t_N=T\}$. We adopt the notation $\hatx_{t_n}$ to denote the generated sample by numerical methods, which differs from the exact solutions denoted as $\bfx_{t_n}$. The final sample $\hatx_{t_0}$ is considered to approximately follow the data distribution $p_{d}$. We designate this sequence as a \textit{sampling trajectory}. In practice, there exists various sampling strategies inspired from the classic numerical methods to solve the backward PF-ODE~\eqref{eq:epf_ode}, including Euler~\cite{song2021ddim}, Heun's~\cite{karras2022edm}, Runge-Kutta~\cite{song2021sde,liu2022pseudo,lu2022dpm}, and linear multistep methods~\cite{liu2022pseudo,lu2022dpmpp,zhang2023deis,zhao2023unipc}. 
\section{Regularity of PF-ODE Sampling Trajectory}
\label{sec:trajectory_visualization}

As mentioned in Section~\ref{sec:intro}, the sampling trajectories within a PF-ODE framework of the diffusion model exhibit a certain regularity in their shapes, regardless of the specific content generated. To better demonstrate this concept, we undertake a series of empirical studies.

\noindent{\bf 1-D Projections}. Visualizing the entire sampling trajectory and analyzing its geometric characteristics in the original high-dimensional space is intractable. To address this, we first examine the \textit{trajectory deviation} from the direct line connecting the two endpoints, which helps assess the trajectory's linearity (see Figure~\ref{fig:model}). This approach allows us to align and collectively observe the general behaviors of all sampling trajectories. Specifically, we determine the trajectory deviation as the perpendicular $L^2$ distance from each intermediate sample $\hatx_{t_n}$ to the vector $\hatx_{t_N}-\hatx_{t_0}$, depicted by the red curve with a ``boomerang'' shape in Figure~\ref{fig:deviation}. Additionally, we calculate the $L^2$ distance between $\hatx_{t_n}$ and the final sample $\hatx_{t_0}$ in the trajectory as $\lVert \hatx_{t_n}-\hatx_{t_0}\rVert_2$, depicted by the blue monotone curve in Figure~\ref{fig:deviation}. 

\begin{figure}[t]
    \centering
    \includegraphics[width=\columnwidth]{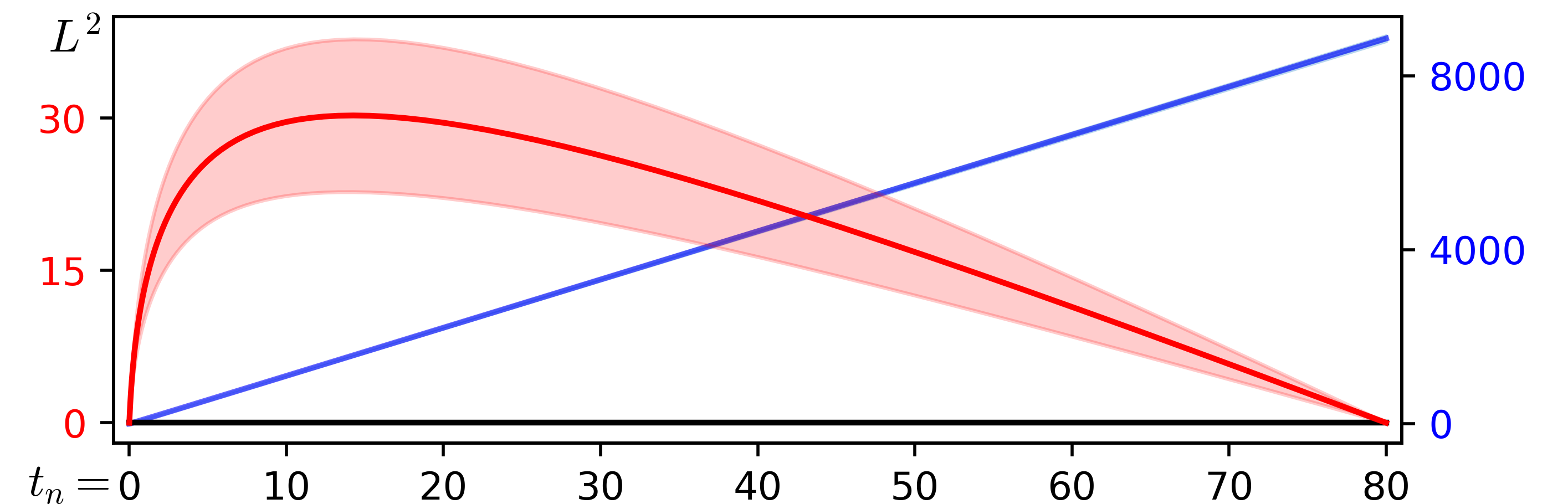}
    \caption{The sampling trajectory exhibits a very small trajectory deviation (red curve) compared to the sample distance (blue curve) in the sampling process starting from $t_{N}=80$ to $t_0=0.002$. 
    }
    \label{fig:deviation}
\end{figure}

\begin{figure*}[t]
    \centering
    \begin{subfigure}[t]{0.2\textwidth}
        \includegraphics[width=\columnwidth]{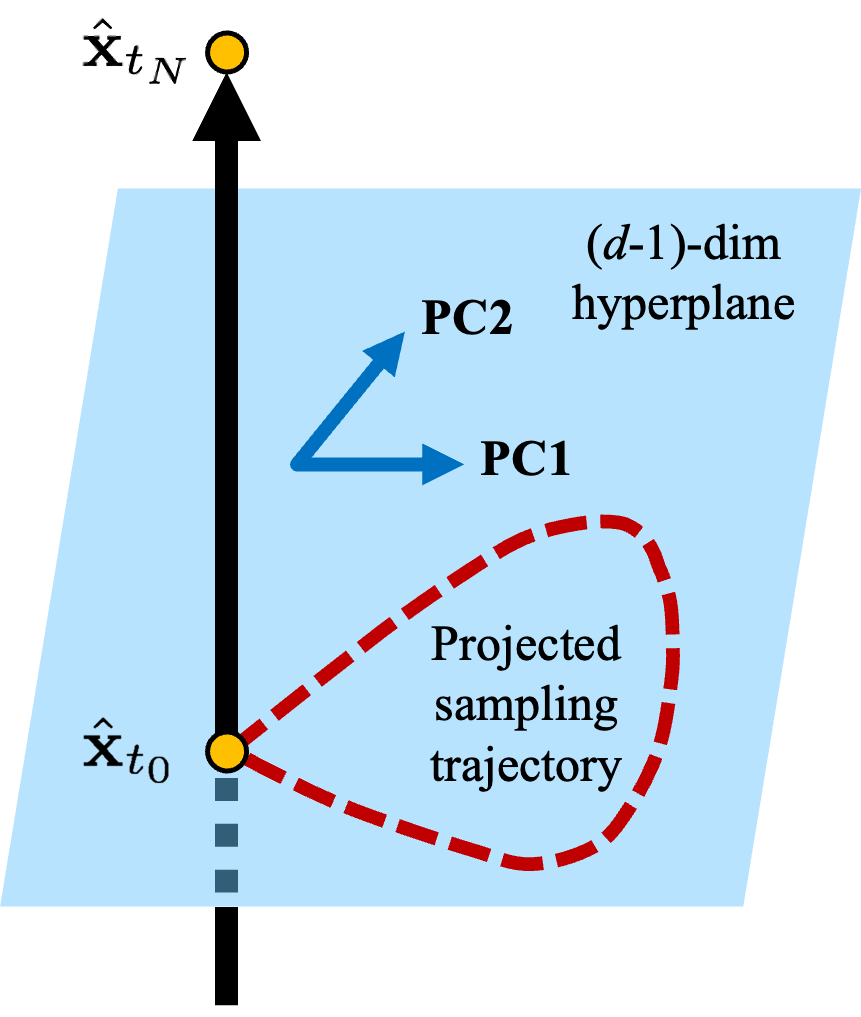}
        \caption{Trajectory projection.}
        \label{fig:recon_sketch}
    \end{subfigure}
    \hfill
    \begin{subfigure}[t]{0.38\textwidth}
        \includegraphics[width=\columnwidth]{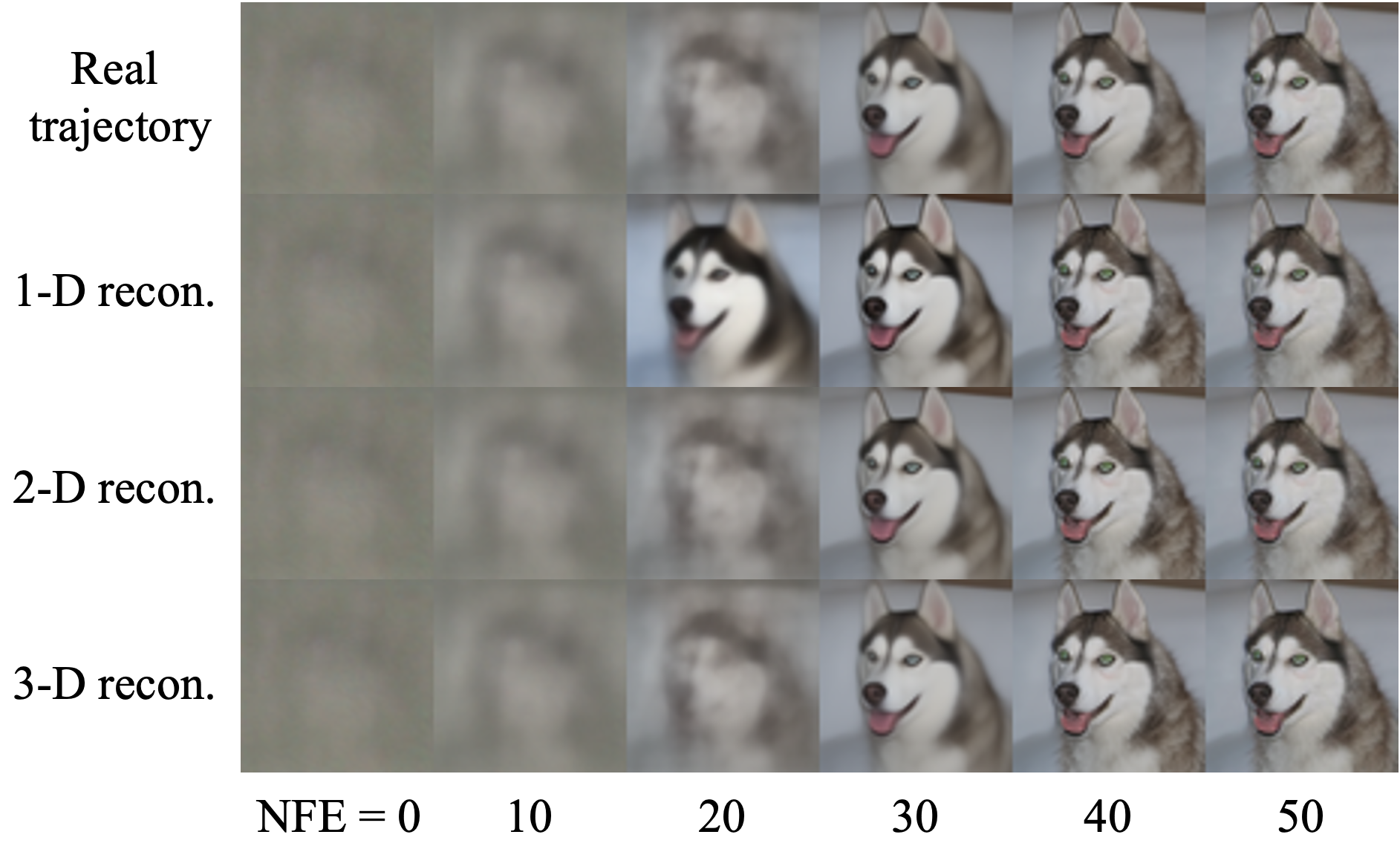}
        \caption{Visual Comparison.}
        \label{fig:recon_visual}
    \end{subfigure}
    \hfill
    \begin{subfigure}[t]{0.21\textwidth}
        \includegraphics[width=\columnwidth]{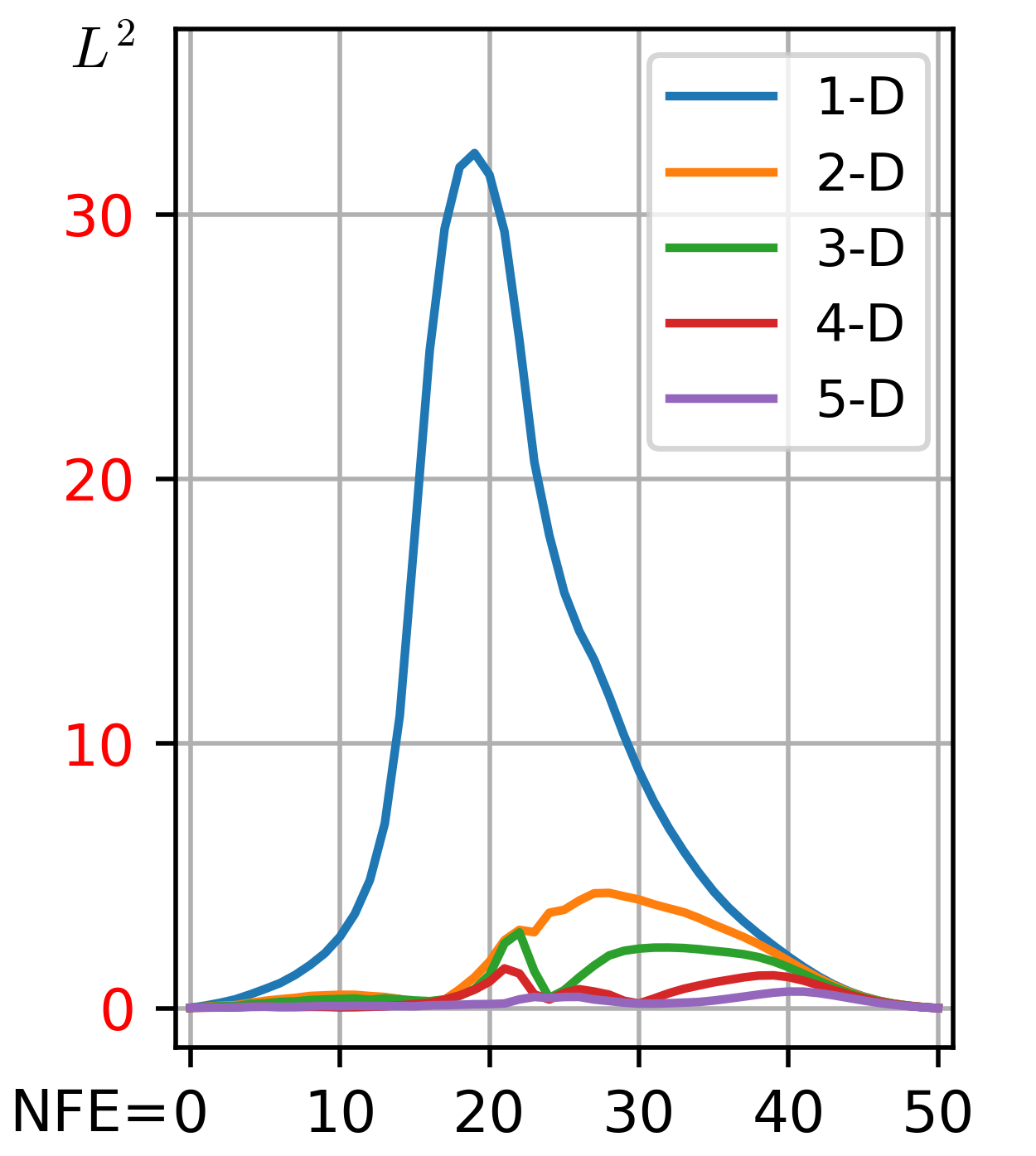}
        \caption{$L^2$ error.}
        \label{fig:recon_l2}
    \end{subfigure}
    \hfill
    \begin{subfigure}[t]{0.17\textwidth}
        \includegraphics[width=\columnwidth]{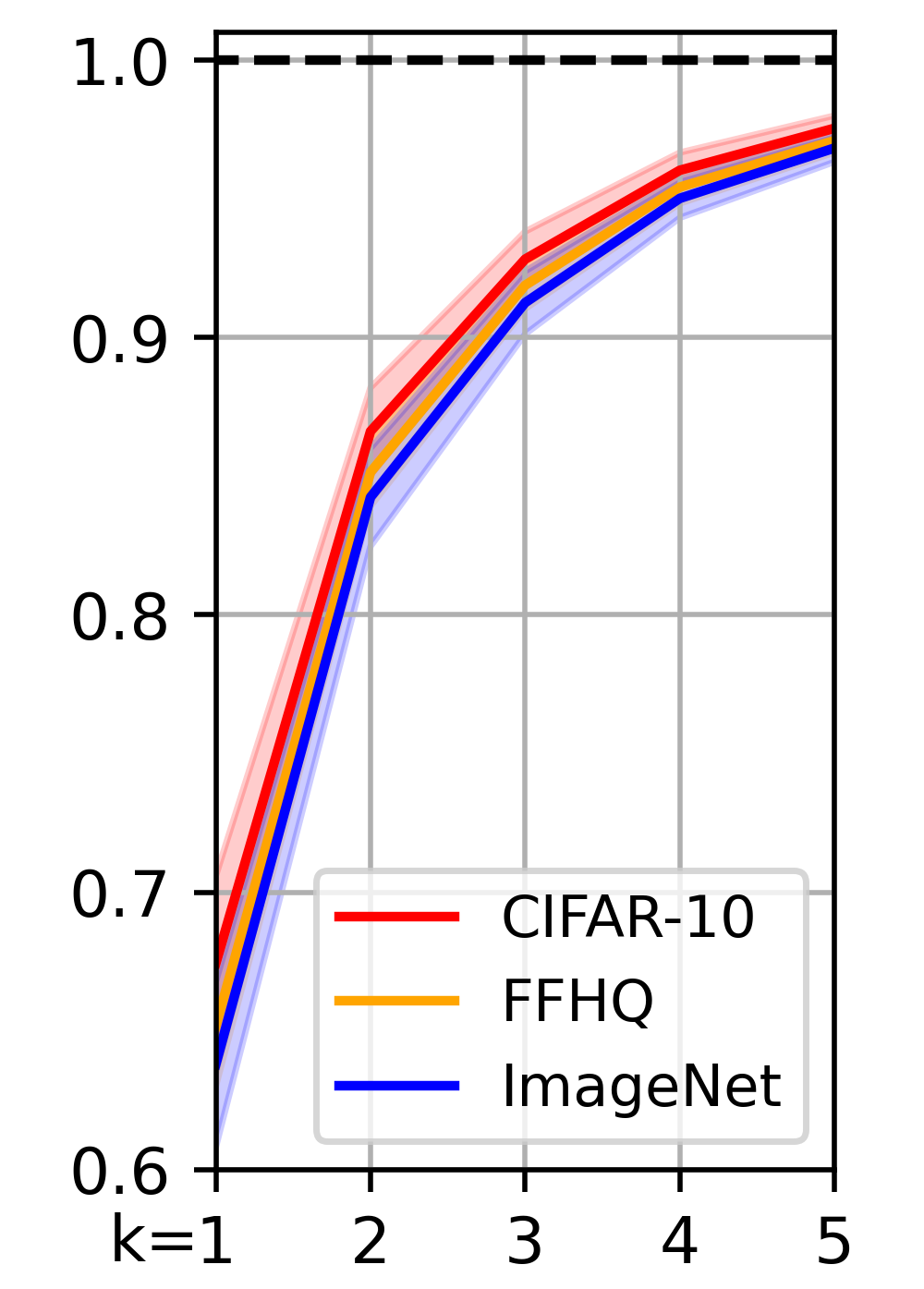}
        \caption{PCA ratio.}
        \label{fig:recon_pca}
    \end{subfigure}
    \caption{
    (a) We adopt $d$-dimensional vector $\hatx_{t_N}-\hatx_{t_0}$ and several top principal components (PCs) on its $(d-1)$-dimensional orthogonal complement to approximate the original $d$-dimensional sampling trajectory.
    (b) The visual comparison of trajectory reconstruction on Imagenet 64$\times$64. We reconstruct the real sampling trajectory (top row) using $\hatx_{t_N}-\hatx_{t_0}$ (1-D recon.) along with its top 1 or 2 principal components (2-D or 3-D recon.). To amplify the visual difference, we present the denoising outputs of these trajectories. 
    (c) We calculate the $L^2$ distance between the real trajectory and the reconstructed trajectories up to 5-D reconstruction. 
    (d) The variance explained by the top $k$ principal components. We report the ratio of the summation of the top $k$ eigenvalues to the summation of all eigenvalues.
    }
    \label{fig:recon}
\end{figure*}

\begin{figure*}[t]
    \centering
    \begin{subfigure}[t]{0.33\textwidth}
        \includegraphics[width=\columnwidth]{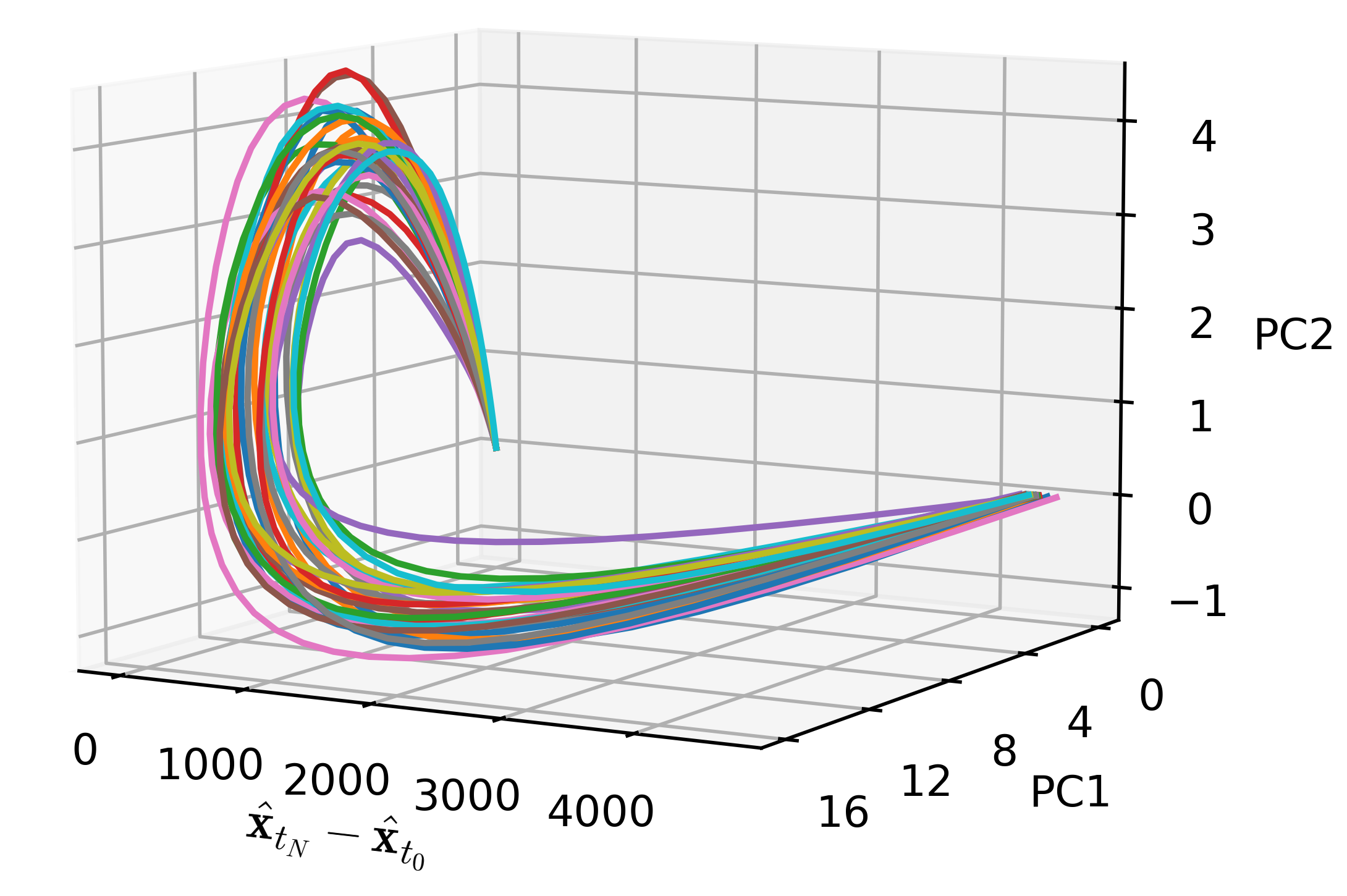}
        \caption{CIFAR-10.}
    \end{subfigure}
    \begin{subfigure}[t]{0.33\textwidth}
        \includegraphics[width=\columnwidth]{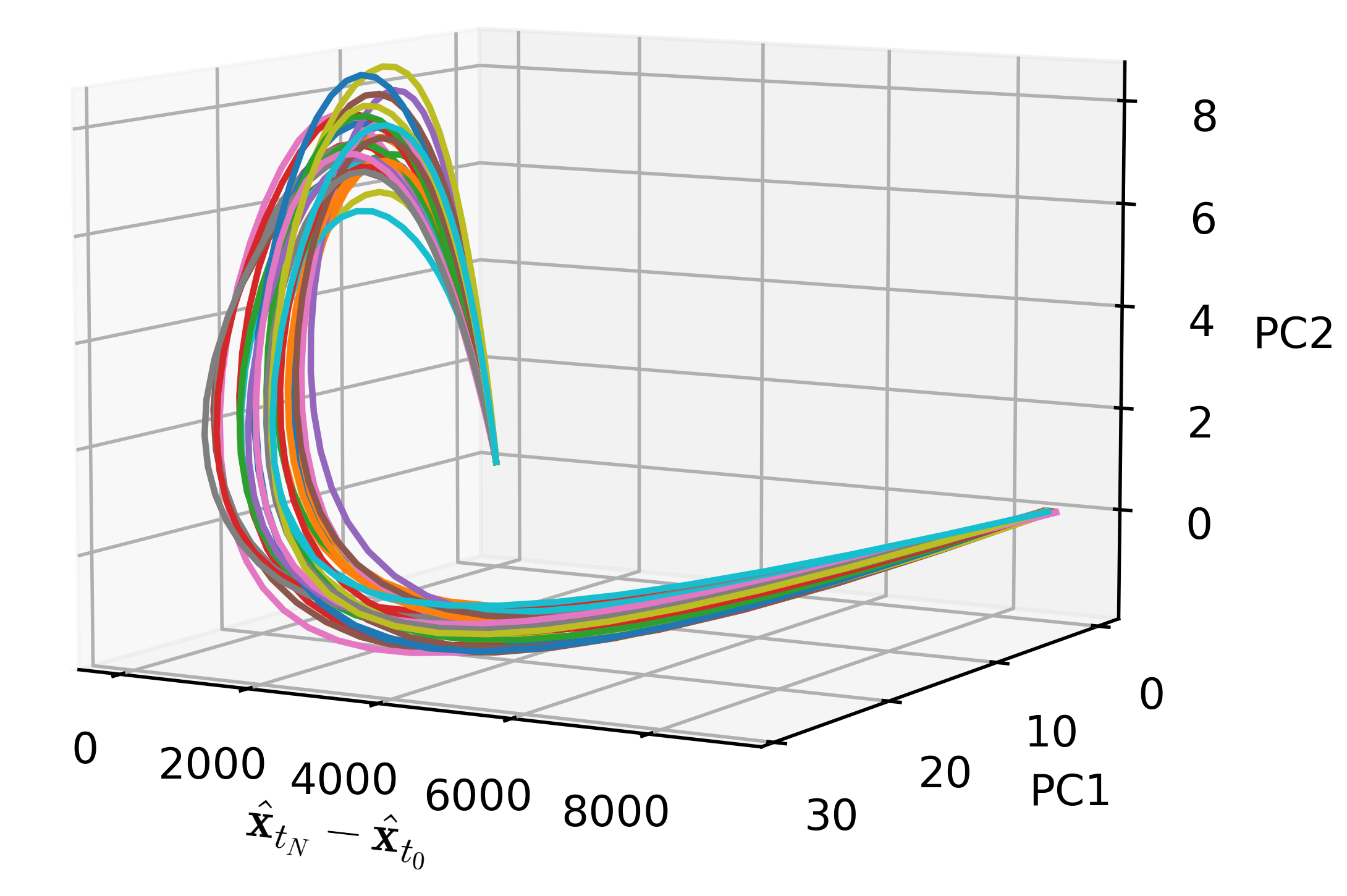}
        \caption{FFHQ.}
    \end{subfigure}
    \begin{subfigure}[t]{0.33\textwidth}
        \includegraphics[width=\columnwidth]{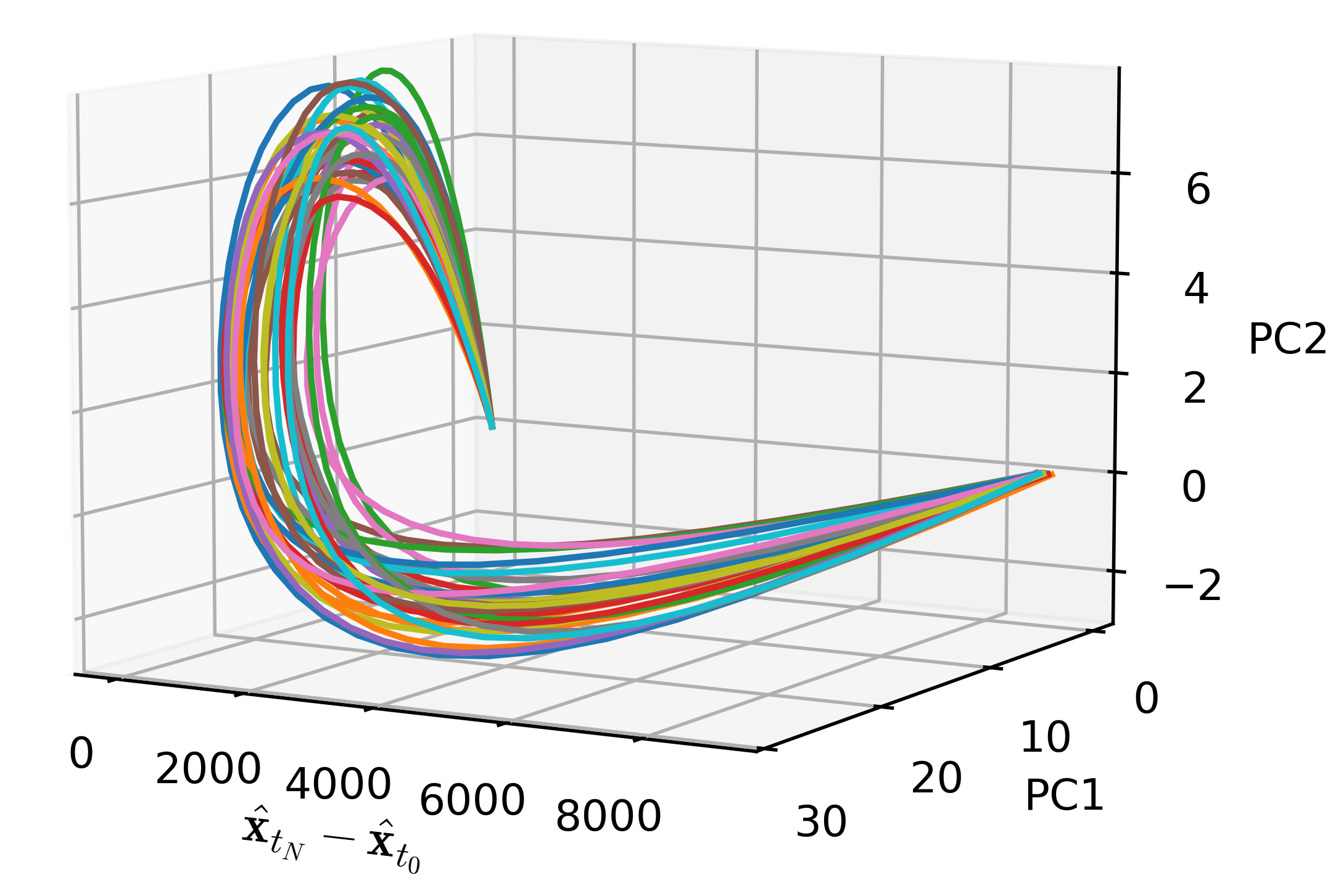}
        \caption{ImageNet $64\times 64$.}
    \end{subfigure}
    \caption{We project 30 high-dimensional sampling trajectories generated on three different datasets into 3-D subspace. These trajectories are first aligned to the direction of $\hatx_{t_N}-\hatx_{t_0}$ (this direction is different for each sample), and then projected to the top 2 principal components in the orthogonal space to $\hatx_{t_N}-\hatx_{t_0}$. See texts for more details.}
    \label{fig:traj_3d}
\end{figure*}


From Figure~\ref{fig:deviation}, we observe that the sampling trajectory's deviation gradually increases from $t=80$ to approximately $t=10$, then swiftly diminishes as it approaches the final samples. This pattern suggests that initially, each sample might be influenced by various modes, experiencing significant impact, but later becomes strongly guided by its specific mode after a certain turning point. This behavior supports the approach of arranging time intervals more densely near the minimum timestamp and sparsely towards the maximum one \cite{song2021ddim,lu2022dpm,karras2022edm,song2023consistency}. 
However, when we consider the ratio of the maximum deviation to the endpoint distance in Figure~\ref{fig:deviation}, we find that the trajectory deviation is remarkably slight (approximately $30/8868 \approx 0.0034$), indicating a pronounced straightness. Additionally, the generated samples along the sampling trajectory tend to move monotonically from their initial points toward their final points. 

The trajectory deviation mathematically equals the reconstruction error if we project all $d$-dimensional points of the sampling trajectory onto the vector $\hatx_{t_N}-\hatx_{t_0}$. As demonstrated in Figure~\ref{fig:recon_visual}-\ref{fig:recon_l2}, the $1$-D approximation proves inadequate, leading to a significant deviation from the actual trajectory. These observations imply that while all sampling trajectories share a similar macro-structure, $1$-D projection cannot accurately capture such trajectory details. 

\noindent{\bf Multi-D Projections}. Moreover, we implement \textit{principal component analysis} (PCA) on the orthogonal complement to the vector $\hatx_{t_N}-\hatx_{t_0}$, which assists in assessing the trajectory's rotational properties. As illustrated in Figure~\ref{fig:recon_sketch}, we begin by projecting each $d$-D sampling trajectory into its $(d-1)$-D orthogonal space relative to the associated $\hatx_{t_N}-\hatx_{t_0}$ vector, followed by conducting PCA. Figure~\ref{fig:recon_visual}-\ref{fig:recon_l2} show that the 2-D approximation using $\hatx_{t_N}-\hatx_{t_0}$ and the first principal component markedly narrows the visual discrepancy with the real trajectory, thereby reducing the $L^2$ reconstruction error. This finding suggests that all points in each $d$-D sampling trajectory diverge slightly from a 2-D plane. Consequently, the tangent and normal vectors of the trajectory can be effectively characterized in this manner.

By incorporating an additional principal component, we enhance our ability to capture the torsion of the sampling trajectory, thereby increasing the total explained variance to approximately 85\% (see Figure~\ref{fig:recon_pca}). This improvement allows for a more accurate approximation of the actual trajectory and further reduces the $L^2$ reconstruction error (see Figure~\ref{fig:recon_visual}-\ref{fig:recon_l2}). In practical terms, this level of approximation effectively captures all the visually pertinent information, with the deviation from the real trajectory being nearly indistinguishable. Consequently, we can confidently utilize a 3-D subspace, formed by two principal components and the vector $\hatx_{t_N}-\hatx_{t_0}$, to understand the geometric structure of high-dimensional sampling trajectories. 

Expanding on this understanding, we present a visualization of $30$ randomly chosen sampling trajectories created by a diffusion model trained on CIFAR-10~\cite{krizhevsky2009learning}, FFHQ~\cite{karras2019style}, or ImageNet $64\times64$~\cite{russakovsky2015ImageNet} in Figure~\ref{fig:traj_3d}. It is important to note that the scale along the axis of $\hatx_{t_N}-\hatx_{t_0}$ is significantly larger than that of the other two principal components by orders of magnitude. As a result, the trajectory remains very close to the straight line connecting its endpoints, corroborating our findings from the trajectory deviation experiment (see Figure~\ref{fig:deviation}). Furthermore, Figure~\ref{fig:traj_3d} accurately depicts the sampling trajectory's behavior, showing its gradual departure from the osculating plane during the sampling process. Interestingly, each trajectory displays a simple linear-nonlinear-linear structure and shares a similar shape. This consistency reveals a strong regularity in all sampling trajectories with different initial starting points, independent of the specific content generated. 

\section{Understanding the Trajectory Regularity}
\label{sec:trajectory}

We next attempt to explain the trajectory regularity observed in the previous section. We first show that there exists an implicit denoising trajectory, which controls the rotation of the sampling trajectory and determines the subsequent points in a convex combination way (Section~\ref{subsec:denoising_trajectory}). We then establish a connection between the sampling trajectory and KDE approximation of data distribution (Section~\ref{subsec:meanshift}), which is the linchpin to understanding the observed regularity.

\subsection{Implicit Denoising Trajectory}
\label{subsec:denoising_trajectory}
Given a parametric diffusion model with the \textit{denoising output} $r_{\bftheta}(\cdot; \cdot)$, the sampling trajectory is simulated by numerically solving the empirical PF-ODE~\eqref{eq:epf_ode}, and meanwhile, an implicitly coupled sequence $\{r_{\bftheta}(\hatx_{t_n}, \sigma_{t_n})\}_{n=1}^{N}$ is formed as a by-product. We designate this sequence, or simplified to $\{r_{\bftheta}(\hatx_{t_n})\}_{n=1}^{N}$ if there is no ambiguity, as a \textit{denoising trajectory}. It follows a PF-ODE $\rmd r_{\bftheta}(\bfx_t; \sigma_t)/\rmd \sigma_t = - \sigma_t \left[\rmd^2 \bfx_t/\rmd \sigma^2_t\right]$ and actually encapsulates the curvature information of the associated sampling trajectory. The following proposition reveals how these two trajectories are inherently related.
\begin{proposition}
    \label{prop:convex}
	Given the probability flow ODE~\eqref{eq:epf_ode} and a current position $\hatx_{t_{n+1}}$, $n\in[0, N-1]$ in the sampling trajectory, the next position $\hatx_{t_{n}}$ predicted by a $k$-order Taylor expansion with the time step size $\sigma_{t_{n+1}}-\sigma_{t_n}$ equals 
	\begin{equation}
    \label{eq:convex}
        \begin{aligned}
		\hatx_{t_{n}}&=\frac{\sigma_{t_n}}{\sigma_{t_{n+1}}} \hatx_{t_{n+1}} +  \frac{\sigma_{t_{n+1}}-\sigma_{t_n}}{\sigma_{t_{n+1}}} \calR_{\bftheta}(\hatx_{t_{n+1}}),
        \end{aligned}
	\end{equation}
which is a convex combination of $\hatx_{t_{n+1}}$ and the generalized denoising output $\calR_{\bftheta}(\hatx_{t_{n+1}})=$
\begin{equation}
    r_{\bftheta}(\hatx_{t_{n+1}})- \sum_{i=2}^{k}\frac{1}{i!}\frac{\rmd^{(i)} \bfx_t}{\rmd \sigma_t^{(i)}}\Big|_{\hatx_{t_{n+1}}}\sigma_{t_{n+1}}(\sigma_{t_n} - \sigma_{t_{n+1}})^{i-1}.
\end{equation}
We have $\calR_{\bftheta}(\hatx_{t_{n+1}})=r_{\bftheta}(\hatx_{t_{n+1}})$ for Euler method ($k=1$), and $\calR_{\bftheta}(\hatx_{t_{n+1}})=r_{\bftheta}(\hatx_{t_{n+1}})+\frac{\sigma_{t_{n}}-\sigma_{t_{n+1}}}{2}\frac{\rmd r_{\bftheta}(\hatx_{t_{n+1}})}{\rmd \sigma_t}$ for second-order numerical methods ($k=2$). 
\end{proposition}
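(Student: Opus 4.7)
The plan is to start from the $k$-order Taylor expansion of the exact solution trajectory of \eqref{eq:epf_ode} around the current iterate $\hatx_{t_{n+1}}$, viewed as a function of $\sigma_t$, and then regroup the Euler term with $\hatx_{t_{n+1}}$ to obtain the convex combination, while the higher-order Taylor correction gets absorbed into the definition of $\calR_{\bftheta}$. Concretely, I would first write
\begin{equation*}
    \hatx_{t_n} = \hatx_{t_{n+1}} + \sum_{i=1}^{k} \frac{1}{i!} \frac{\rmd^{(i)} \bfx_t}{\rmd \sigma_t^{(i)}}\Big|_{\hatx_{t_{n+1}}} (\sigma_{t_n}-\sigma_{t_{n+1}})^{i},
\end{equation*}
which defines exactly what ``$k$-order Taylor expansion with step size $\sigma_{t_{n+1}}-\sigma_{t_n}$'' means in this context.

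Next, I would invoke the empirical PF-ODE \eqref{eq:epf_ode} to identify the first derivative as $\rmd \bfx_t/\rmd \sigma_t = (\bfx_t - r_{\bftheta}(\bfx_t;\sigma_t))/\sigma_t$, and split the $i=1$ summand off from the rest. Using this, the coefficient of $\hatx_{t_{n+1}}$ becomes $1 + (\sigma_{t_n}-\sigma_{t_{n+1}})/\sigma_{t_{n+1}} = \sigma_{t_n}/\sigma_{t_{n+1}}$, and the coefficient of $r_{\bftheta}(\hatx_{t_{n+1}})$ becomes $(\sigma_{t_{n+1}}-\sigma_{t_n})/\sigma_{t_{n+1}}$, which already accounts for the convex-combination skeleton. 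The remaining terms for $i\ge 2$ are then pulled inside the bracket by factoring out $(\sigma_{t_{n+1}}-\sigma_{t_n})/\sigma_{t_{n+1}}$, yielding
\begin{equation*}
    \frac{\sigma_{t_{n+1}}}{\sigma_{t_{n+1}}-\sigma_{t_n}}\cdot \frac{1}{i!}(\sigma_{t_n}-\sigma_{t_{n+1}})^{i} = -\frac{1}{i!}\sigma_{t_{n+1}}(\sigma_{t_n}-\sigma_{t_{n+1}})^{i-1},
\end{equation*}
which is precisely the shape of the correction appearing in the definition of $\calR_{\bftheta}(\hatx_{t_{n+1}})$. Since the two coefficients $\sigma_{t_n}/\sigma_{t_{n+1}}$ and $(\sigma_{t_{n+1}}-\sigma_{t_n})/\sigma_{t_{n+1}}$ are nonnegative (as $\sigma_t$ is increasing so $\sigma_{t_n}\le \sigma_{t_{n+1}}$) and sum to one, convexity comes for free.

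Finally I would specialize the formula to obtain the two sanity-check consequences. For $k=1$ the summation is empty, so $\calR_{\bftheta}=r_{\bftheta}$ and we recover the DDIM-style Euler update directly from \eqref{eq:epf_ode}. For $k=2$, the only extra ingredient needed is the identity $\rmd^{2}\bfx_t/\rmd \sigma_t^{2} = -\tfrac{1}{\sigma_t}\rmd r_{\bftheta}/\rmd \sigma_t$, which follows by differentiating $\rmd\bfx_t/\rmd\sigma_t = (\bfx_t-r_{\bftheta})/\sigma_t$ once more and using the same relation to cancel the $(\bfx_t-r_{\bftheta})/\sigma_t^2$ piece. Plugging this into the general formula produces the claimed $\calR_{\bftheta}(\hatx_{t_{n+1}}) = r_{\bftheta}(\hatx_{t_{n+1}}) + \tfrac{\sigma_{t_n}-\sigma_{t_{n+1}}}{2}\rmd r_{\bftheta}(\hatx_{t_{n+1}})/\rmd \sigma_t$.

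I do not expect a serious obstacle here: the statement is essentially a rewriting of Taylor's theorem applied to a Dahlquist-style linear-in-$\bfx_t$ ODE, and the only thing to be careful about is sign-tracking when converting between $(\sigma_{t_n}-\sigma_{t_{n+1}})^i$ and $(\sigma_{t_{n+1}}-\sigma_{t_n})^{i-1}$ powers while refactoring out the convex-combination prefactor. The $k=2$ case is the most error-prone spot because one has to differentiate the ODE itself to eliminate the second derivative in favor of $\rmd r_{\bftheta}/\rmd \sigma_t$; I would write that step out in full to make sure the $1/\sigma_t$ factor cancels cleanly before substituting into the general formula.
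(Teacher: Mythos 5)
Your proposal is correct and mirrors the paper's proof essentially line-for-line: Taylor-expand around $\hatx_{t_{n+1}}$, substitute the PF-ODE for the first derivative, split off the $i=1$ term to expose the convex coefficients $\sigma_{t_n}/\sigma_{t_{n+1}}$ and $(\sigma_{t_{n+1}}-\sigma_{t_n})/\sigma_{t_{n+1}}$, fold the $i\ge 2$ remainder into $\calR_{\bftheta}$, and for $k=2$ use the identity $\rmd r_{\bftheta}/\rmd \sigma_t = -\sigma_t\,\rmd^2\bfx_t/\rmd\sigma_t^2$ (the denoising-ODE the paper derives immediately before the proof). Your sign-tracking through the $(\sigma_{t_n}-\sigma_{t_{n+1}})^i\mapsto\sigma_{t_{n+1}}(\sigma_{t_n}-\sigma_{t_{n+1}})^{i-1}$ refactorization is exactly right.
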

\begin{corollary}
    \label{cor:jump}
	The denoising output $r_{\bftheta}(\hatx_{t_{n+1}})$ reflects the prediction made by a single Euler step from $\hatx_{t_{n+1}}$ with the time step size $\sigma_{t_{n+1}}$.
\end{corollary}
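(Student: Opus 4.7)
The plan is to invoke the Euler case of Proposition~\ref{prop:convex} and specialize it to the boundary setting $\sigma_{t_n} = 0$. The phrase ``single Euler step with time step size $\sigma_{t_{n+1}}$'' should be read as stepping from the current noise level $\sigma_{t_{n+1}}$ all the way down to $\sigma_{t_n} = 0$, i.e., a virtual one-shot jump to the clean-data side of the process; the displacement in $\sigma_t$ is then $\sigma_{t_{n+1}} - 0 = \sigma_{t_{n+1}}$, matching the claim.

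First I would substitute $k=1$ in Proposition~\ref{prop:convex}, for which $\calR_{\bftheta}(\hatx_{t_{n+1}})$ reduces to $r_{\bftheta}(\hatx_{t_{n+1}})$, giving the convex combination
\[
\hatx_{t_n} = \frac{\sigma_{t_n}}{\sigma_{t_{n+1}}}\hatx_{t_{n+1}} + \frac{\sigma_{t_{n+1}}-\sigma_{t_n}}{\sigma_{t_{n+1}}}\, r_{\bftheta}(\hatx_{t_{n+1}}).
\]
Setting $\sigma_{t_n}=0$ makes the first coefficient vanish and the second coefficient collapse to unity, so $\hatx_{t_n} = r_{\bftheta}(\hatx_{t_{n+1}})$, which is the claimed identity.

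As an independent sanity check that makes the corollary transparent without routing through Proposition~\ref{prop:convex}, I would verify the same conclusion directly from the empirical PF-ODE~\eqref{eq:epf_ode}. A single forward Euler step reads $\hatx_{t_n} = \hatx_{t_{n+1}} + \bfeps_{\bftheta}(\hatx_{t_{n+1}}; \sigma_{t_{n+1}})(\sigma_{t_n}-\sigma_{t_{n+1}})$; substituting $\bfeps_{\bftheta}(\hatx_{t_{n+1}}; \sigma_{t_{n+1}}) = (\hatx_{t_{n+1}} - r_{\bftheta}(\hatx_{t_{n+1}}))/\sigma_{t_{n+1}}$ and $\sigma_{t_n}=0$ telescopes to $r_{\bftheta}(\hatx_{t_{n+1}})$, recovering the Tweedie-style denoising prediction.

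There is no real obstacle here; the corollary is essentially a boundary-case reading of Proposition~\ref{prop:convex}. The only point worth flagging is interpretational, namely parsing ``time step size $\sigma_{t_{n+1}}$'' as the full displacement in $\sigma_t$ from $\sigma_{t_{n+1}}$ to $0$ rather than as a step to the next scheduled $\sigma_{t_n}$; once this is fixed, the result is a one-line substitution.
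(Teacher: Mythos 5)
Your proof is correct and, in its second paragraph, is essentially identical to the paper's own one-line computation: the paper also plugs the target time $0$ and step size $\sigma_{t_{n+1}}$ into the explicit Euler update and cancels to get $r_{\bftheta}(\hatx_{t_{n+1}})$. Your first route, specializing the convex-combination formula of Proposition~\ref{prop:convex} at $\sigma_{t_n}=0$, is an algebraically equivalent restatement of the same fact, and your interpretational note about reading ``time step size $\sigma_{t_{n+1}}$'' as the jump all the way to $\sigma=0$ is exactly the intended reading.
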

\begin{corollary}
     Each previously proposed second-order ODE-based accelerated sampling method corresponds to a specific first-order finite difference of $\rmd r_{\bftheta}(\hatx_{t_{n+1}})/\rmd \sigma_t$. 
\end{corollary}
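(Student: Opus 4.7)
The plan is to use the $k{=}2$ instance of Proposition~\ref{prop:convex} as a template, and then show by direct inspection that the update rule of every commonly used second-order PF-ODE solver can be algebraically rearranged into that template, with the only difference lying in how the derivative $\rmd r_{\bftheta}(\hatx_{t_{n+1}})/\rmd \sigma_t$ is approximated. Concretely, Proposition~\ref{prop:convex} tells us that any second-order step from $\hatx_{t_{n+1}}$ to $\hatx_{t_n}$ must take the form
\begin{equation*}
    \hatx_{t_n} = \frac{\sigma_{t_n}}{\sigma_{t_{n+1}}} \hatx_{t_{n+1}}
    + \frac{\sigma_{t_{n+1}}-\sigma_{t_n}}{\sigma_{t_{n+1}}} \Bigl[r_{\bftheta}(\hatx_{t_{n+1}}) + \tfrac{\sigma_{t_n}-\sigma_{t_{n+1}}}{2}\, D_{n+1}\Bigr],
\end{equation*}
where $D_{n+1}$ is some concrete estimator of $\rmd r_{\bftheta}(\hatx_{t_{n+1}})/\rmd \sigma_t$. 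So it suffices to show that each published solver realizes a specific choice of $D_{n+1}$ that is a first-order finite difference.

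First I would handle the single-step (Runge--Kutta--like) family. For Heun's method as used in EDM, the corrector performs an Euler predictor to some $\tilde{\bfx}_{t_n}$, evaluates $r_{\bftheta}(\tilde{\bfx}_{t_n})$, and averages with $r_{\bftheta}(\hatx_{t_{n+1}})$. Reparametrising the trapezoidal average in the $\sigma$-variable, this average equals $r_{\bftheta}(\hatx_{t_{n+1}}) + \tfrac{\sigma_{t_n}-\sigma_{t_{n+1}}}{2} D^{\mathrm{fwd}}_{n+1}$ with the explicit forward difference $D^{\mathrm{fwd}}_{n+1} = (r_{\bftheta}(\tilde{\bfx}_{t_n}) - r_{\bftheta}(\hatx_{t_{n+1}}))/(\sigma_{t_n} - \sigma_{t_{n+1}})$, which matches the template. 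For DPM-Solver-2 and its variants, a midpoint evaluation at some $\sigma_{t_{n+1/2}}$ plays the analogous role and yields a one-sided forward difference with stretched step, which I would verify by direct substitution into their closed-form update.

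Next I would handle the linear multistep family (iPNDM, DEIS, DPM-Solver++(2M), UniPC in its 2-step specialization). These methods reuse the cached value $r_{\bftheta}(\hatx_{t_{n+2}})$ from the previous step, so their correction term is proportional to $r_{\bftheta}(\hatx_{t_{n+1}}) - r_{\bftheta}(\hatx_{t_{n+2}})$. Rewriting their update in the convex-combination form of Proposition~\ref{prop:convex}, I would identify $D^{\mathrm{bwd}}_{n+1} = (r_{\bftheta}(\hatx_{t_{n+1}}) - r_{\bftheta}(\hatx_{t_{n+2}}))/(\sigma_{t_{n+1}} - \sigma_{t_{n+2}})$, a backward difference, as their choice of derivative estimator; the scalar prefactors in each published formula should collapse exactly to the $(\sigma_{t_n}-\sigma_{t_{n+1}})/2$ weight required by the template.

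The main obstacle is bookkeeping rather than conceptual: each of the cited solvers is originally stated in a different parametrisation (noise-prediction $\bfeps_{\bftheta}$ vs.\ data-prediction $r_{\bftheta}$, log-SNR vs.\ $\sigma$, with or without an exponential integrator reformulation), and one must carefully convert each update rule into the common $(\sigma_t, r_{\bftheta})$ form of \eqref{eq:epf_ode} before the match with Proposition~\ref{prop:convex} becomes transparent. I expect the only real work to be verifying that after these changes of variable, the coefficient in front of the finite difference agrees with $(\sigma_{t_{n+1}}-\sigma_{t_n})(\sigma_{t_n}-\sigma_{t_{n+1}})/(2\sigma_{t_{n+1}})$ as predicted; once this is checked case by case, the corollary follows.
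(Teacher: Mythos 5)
Your overall strategy is the same as the paper's: rewrite each solver's update rule into the $k=2$ convex-combination template of Proposition~\ref{prop:convex} and read off the implied estimator $D_{n+1}$ of $\rmd r_{\bftheta}(\hatx_{t_{n+1}})/\rmd \sigma_t$. However, the specific finite differences you predict are wrong in two places, and the error is more than bookkeeping. For EDM's Heun and for DPM-Solver-2, the published update averages (or midpoint-evaluates) the noise-prediction $\bfeps_{\bftheta}(\cdot;t)=(\bfx_t-r_{\bftheta}(\bfx_t))/t$, whose two evaluations are divided by \emph{different} time scales $t_{n+1}$ and $t_n$ (or $s_n$). Converting this to a difference quotient of $r_{\bftheta}$ therefore produces a multiplicative correction $\gamma = \sigma_{t_{n+1}}/\sigma_{t_n}$ (Heun) or $\gamma=\sigma_{t_{n+1}}/s_n$ (DPM-Solver-2) in front of the forward difference, not the plain $(r_{\bftheta}(\tilde{\bfx}_{t_n})-r_{\bftheta}(\hatx_{t_{n+1}}))/(\sigma_{t_n}-\sigma_{t_{n+1}})$ you assert. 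This $\gamma$ is precisely what the paper records in Table~\ref{tab:sampler} and is one of the nontrivial findings of that section.

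Similarly, your claim that the multistep methods all reduce to the same backward difference with denominator $\sigma_{t_{n+1}}-\sigma_{t_{n+2}}$ only holds for DEIS ($\rho$AB1). S-PNDM's $\tfrac12(3\bfeps_{\bftheta}(\hatx_{t_{n+1}})-\bfeps_{\bftheta}(\hatx_{t_{n+2}}))$ rearranges to a finite difference with the \emph{current} step $\sigma_{t_n}-\sigma_{t_{n+1}}$ in the denominator (and $\gamma=1$ only because the S-PNDM history evaluations happen to sit at the same noise levels as $\hatx_{t_{n+1}}$, $\hatx_{t_{n+2}}$). So the two multistep solvers correspond to genuinely different difference quotients even though the numerators coincide. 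Your proposal would go through if you carried out the algebra carefully for each solver, but the conclusion you "expect" before doing so --- that each case yields one of two canonical differences with unit prefactor --- is not what the calculation gives. You need to allow for, and derive, the solver-specific prefactor $\gamma$ to complete the proof.
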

Proofs and more discussions are provided in Section~\ref{subsec:generalized_denoising_output}. The ratio $\sigma_{t_{n}}/\sigma_{t_{n+1}}$ in~\eqref{eq:convex} reflects our inclination to maintain the current position rather than transitioning to the generalized denoising output at $t_{n+1}$. In this context, different time-schedule functions, such as uniform, quadratic, and polynomial schedules \cite{song2021ddim,lu2022dpm,karras2022edm}, essentially represent various weighting functions. We primarily focus on the Euler method to simplify subsequent discussions, though these insights can be readily extended to higher-order methods. The behavior of the trajectory in the continuous scenario is similarly discernible by examining the sampling process with an infinitesimally small Euler step. A graphical representation of two successive Euler steps is depicted in Figure~\ref{fig:convex}.
\begin{figure}[t]
	\centering
    \includegraphics[width=0.95\columnwidth]{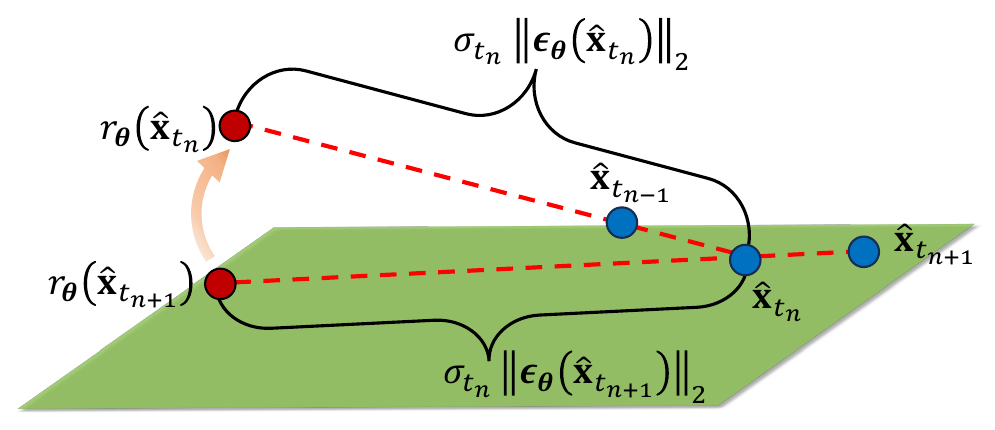}
    \caption{An illustration of two consecutive Euler steps, starting from a current sample $\hatx_{t_{n+1}}$. A single Euler step in the ODE-based sampling is a convex combination of the denoising output and the current position to determine the next position. Blue points form a piecewise linear sampling trajectory, while red points form the denoising trajectory governing the rotation direction.
    }
    \label{fig:convex}
\end{figure}

We further deduce that, approximately, each intermediate point $\hatx_{t_{n}}$, $n\in [1, N-1]$ in the piecewise linear sampling trajectory is determined by the selected time schedule, given that $\lVert r_{\bftheta}(\hatx_{t_{n+1}}) - \hatx_{t_{n}}\rVert_2 = (\sigma_{t_n}/\sigma_{t_{n+1}})\lVert r_{\bftheta}(\hatx_{t_{n+1}}) - \hatx_{t_{n+1}}\rVert_2=\sigma_{t_{n}}\lVert\bfeps_{\bftheta}(\hatx_{t_{n+1}})\rVert_2\approx\sigma_{t_{n}}\lVert\bfeps_{\bftheta}(\hatx_{t_{n}})\rVert_2=\lVert r_{\bftheta}(\hatx_{t_{n}}) - \hatx_{t_{n}}\rVert_2$. In this scenario, the denoising output $r_{\bftheta}(\hatx_{t_{n+1}})$ appears to be oscillating toward $r_{\bftheta}(\hatx_{t_{n}})$ around $\hatx_{t_n}$, akin to a simple gravity pendulum~\cite{young1996university}. The pendulum length effectively shortens by the coefficient $\sigma_{t_{n}}/\sigma_{t_{n+1}}$ in each sampling step, starting from roughly $\sigma_T\sqrt{d}$. This specific structure exits in all trajectories. Theoretical justification and empirical evidence are detailed in Section~\ref{subsubsec:rotation}. Practically, the magnitude of each oscillation is extremely small ($\approx 0^\circ$), and the entire sampling trajectory only marginally deviates from a 2-D plane. Such deviations can be further represented using a few orthogonal bases as discussed in Section~\ref{sec:trajectory_visualization}.
\subsection{Theoretical Analysis of the Trajectory Structure}
\label{subsec:meanshift}


Next we show that using a Gaussian kernel density estimate (KDE) with training data samples, the denoising trajectory has a closed-form solution. 

Given a dataset $\calD\coloneqq\{\bfy_{i}\in \mathbb{R}^d\}_{i\in\calI}$ containing $|\calI|$ i.i.d. data points drawn from the unknown data distribution $p_d$, the marginal density at each time of the forward diffusion process~\eqref{eq:new_sde} becomes a Gaussian KDE with a bandwidth $\sigma_t^2$, \ie, 
$\hat{p}_t(\bfx_t)=\int p_{0t}(\bfx_t|\bfy)\hat{p}_{d}(\bfy) = \frac{1}{|\calI|}\sum_{i\in\calI}\calN\left(\bfx_t;\bfy_i, \sigma_t^2\bfI\right),$
where the empirical data distribution $\hat{p}_{d}(\bfy)$ is a summation of multiple \textit{Dirac delta functions}, \ie, $\hat{p}_{d}(\bfy)=\frac{1}{|\calI|}\sum_{i\in\calI}\delta(\lVert \bfy - \bfy_i\rVert)$. 
In this case, the optimal denoising output of training a denoising autoencoder becomes a convex combination of original data points, $r_{\bftheta}^{\star}(\bfx_t; \sigma_t)=$
\begin{equation}
	\label{eq:optimal}
		\sum_{i} \frac{\exp \left(-\lVert \bfx_t - \bfy_i \rVert^2_2/2\sigma_t^2\right)}{\sum_{j}\exp \left(-\lVert \bfx_t - \bfy_j \rVert^2_2/2\sigma_t^2\right)} \bfy_i=\sum_{i} u_i \bfy_i,
\end{equation}
where each weight $u_i$ is calculated based on the time-scaled and normalized $L^2$ distance between the input $\bfx_t$ and $\bfy_i$ belonging to the dataset $\calD$, and 
$\sum_{i} u_i =1$. The proof is provided in Appendix~\ref{subsubsec:optimal_denoising_output}. The above equation appears to be highly similar to the iterative formula used in mean shift, which is a well-known non-parametric mode-seeking algorithm via iteratively gradient ascent with adaptive step sizes~\cite{fukunaga1975estimation,cheng1995mean,comaniciu2002mean,yamasaki2020mean}. In particular, the time-decreasing bandwidth ($\sigma_t\rightarrow 0$ as $t\rightarrow 0$) in~\eqref{eq:optimal} is strongly reminiscent of \textit{annealed mean shift}, or \textit{multi-bandwidth mean shift}~\cite{shen2005annealedms}, which was developed as a metaheuristic algorithm to escape local maxima, where classical mean shift is susceptible to stuck, by monotonically decreasing the bandwidth in iterations. 

Based on this connection, we characterize the \textit{local behavior} of diffusion sampling, \ie, each sampling trajectory monotonically converges in terms of the sample likelihood, and its coupled denoising trajectory always achieves higher likelihood (see the proof in Appendix~\ref{subsubsec:likelihood}). We also characterize the \textit{global behavior} of diffusion sampling as a linear-nonlinear-linear mode-seeking path. In the optimal case, the denoising output, or annealed mean vector, starts from a spurious mode (dataset mean), \ie, $r_{\bftheta}^{\star}(\bfx_t; \sigma_t)\approx \frac{1}{|\calI|}\sum_{i\in\calI}\bfy_i$ with a sufficiently large bandwidth $\sigma_t$. 
Meanwhile, the sampling trajectory is initially located in an approximately uni-modal Gaussian distribution with a \textit{linear} score function $\nablaxt \log p_t(\bfx_t)=\left(r_{\bftheta}^{\star}(\bfx_t; \sigma_t) - \bfx_t\right)/\sigma_t^2 \approx - \bfx_t/\sigma_t^2$. The approximation is valid as the norm of dataset mean is very close to zero compared with the norm of $\bfx_t$ due to the concentration of measure (see Lemma~\ref{lemma:concentration}). 
As $\sigma_t$ monotonically decreases in the sampling process, the mode numbers of $\hat{p}_t(\bfx_t)$ increase~\cite {silverman1981using}, and the simple distribution surface gradually shifts to the complex multi-modal one. The score function appears \textit{nonlinear} in the medium $\sigma_t$ stage since multiple data points contribute a non-negligible effect. Finally, the sampling trajectory is attracted by a certain real-data mode with a sufficiently small bandwidth $\sigma_t$, and the score function appears \textit{linear} again, \ie, 
$\nablaxt \log p_t(\bfx_t)\approx (\bfy_{k}-\bfx_t)/\sigma_t^2$, where $\bfy_k$ denotes the nearest data point to $\bfx_t$. 
In this way, the global mode is hopefully sought by the sampling trajectory, as annealed mean shift~\cite{shen2005annealedms}. 
Intriguingly, we can provide a guarantee that the whole sampling trajectory length is around $\sigma_T\sqrt{d}$ (see Section~\ref{subsubsec:rotation}). 
The above analysis implies that the optimal sampling trajectories simply replay the dataset, but in practice, a slight score deviation ensures the generative ability of diffusion models (see Section~\ref{subsec:score_deviation}).
\section{Geometry-Inspired Time Scheduling}
\label{sec:dp}

An ODE-based numerical solver such as Euler~\cite{song2021sde} or Runge-Kutta~\cite{liu2022pseudo,zhang2023deis} relies on a pre-defined time schedule $\Gamma=\{t_0=\epsilon, \cdots, t_N=T\}$ in the sampling process. Typically, given the initial time $t_N$ and the final time $t_0$, the intermediate time steps $t_1$ to $t_{N-1}$ are determined by heuristic strategies such as uniform, quadratic~\cite{song2021ddim}, log-SNR~\cite{lu2022dpm,lu2022dpmpp}, and polynomial functions~\cite{karras2022edm,song2023consistency}. In fact, the time schedule reflects our prior knowledge of the sampling trajectory shape. Under the constraint of the total {\em number of score function evaluations} (NFEs), an improved time schedule can reduce the local truncation error in each numerical step, and hopefully minimize the global truncation error. In this way, the sample quality generated by numerical methods could approach that of the exact solutions of the given empirical PF-ODE~\eqref{eq:epf_ode}. 

Our previous discussions in Section~\ref{sec:trajectory_visualization} identified each sampling trajectory as a simple low-dimensional ``boomerang'' curve. We can thus leverage this geometric structure to re-allocate the intermediate timestamps according to the principle that assigning a larger time step size when the trajectory exhibits a relatively small curvature while assigning a smaller time step size when the trajectory exhibits a relatively large curvature. Additionally, we have demonstrated that different trajectories share almost the same shape, which helps us estimate the common structure of the sampling trajectory by using just a few ``warmup'' samples. We name our approach to achieve the above goal as \textit{geometry-inspired time scheduling} (GITS) and elaborate the details as follows. We will also show that GITS is adaptable, easy to implement, and introduces negligible computing overheads.

The allocation of the intermediate timestamps can be formulated as an integer programming problem and solved by dynamic programming to search for optimal time scheduling~\cite{cormen2022introduction}. We first define a searching space denoted as $\Gamma_{g}$, which is a fine-grained grid including all possible intermediate timestamps. Then, we measure the trajectory curvature by the local truncation errors. Specifically, we define the cost from the current position $\bfx_{t_i}$ to the next position $\bfx_{t_j}$ as the difference between an Euler step and the ground-truth prediction, \ie, $c_{t_i\rightarrow t_j}\coloneqq\calD(\hatx_{t_i\rightarrow t_j}, \bfx_{t_i\rightarrow t_j})$, where $t_i$ and $t_j$ are two intermediate timestamps from $\Gamma_{g}$ and $t_i>t_j$. According to the empirical PF-ODE~\eqref{eq:epf_ode}, the ground-truth prediction is calculated as
$\bfx_{t_i\rightarrow t_j}=\int_{t_i}^{t_j}\bfx_{t_i}+\bfeps_{\bftheta}(\bfx_t)\sigma^{\prime}_t \rmd t$, and the Euler prediction is calculated as
$\hatx_{t_i\rightarrow t_j}=\bfx_{t_i}+(\sigma_{t_j}-\sigma_{t_i})\bfeps_{\bftheta}(\hatx_{t_i})\sigma^{\prime}_{t_i}$. The cost function $\calD$ can be defined as the $L^2$ distance in the original pixel space, \ie, $\calD(\bfx, \bfy)=\lVert \bfx - \bfy\rVert_2$, or any other user-specified metric. Given all computed pair-wise costs, which form a cost matrix, this becomes a standard minimum-cost path problem and can be solved with dynamic programming. The algorithm and details are provided in Appendix~\ref{subsubsec:dp}. 

Still, there also exist many different ways to determine the time schedule (\eg, using a trainable neural network) by leveraging our discovered trajectory regularity. 

\renewcommand{\arraystretch}{0.8}
\begin{table}[t!]
    \caption{Sample quality comparison in terms of Fr\'echet Inception Distance (FID~\cite{heusel2017gans}, lower is better) on four datasets (resolutions ranging from $32\times32$ to $256\times256$). $\dagger$: Results reported by authors. More results are provided in Table~\ref{tab:fid-full}. 
    }
    \label{tab:fid-1}
    \vskip 0.1in
    \centering
    \fontsize{7}{10}\selectfont
    \begin{tabular}{lcccc}
        \toprule
        \multirow{2}{*}{METHOD} & \multicolumn{4}{c}{NFE} \\
        \cmidrule{2-5}
        & 5 & 6 & 8 & 10 \\
        \midrule
        \multicolumn{5}{l}{\fontsize{7}{1}\selectfont \textbf{CIFAR-10 32$\times$32}~\cite{krizhevsky2009learning}} \\
        \midrule
        DDIM~\cite{song2021ddim}                        & 49.66 & 35.62 & 22.32 & 15.69 \\
        \rowcolor[gray]{0.9} DDIM + \ourName (\textbf{ours})           & 28.05 & 21.04 & 13.30 & 10.37 \\
        DPM-Solver-2~\cite{lu2022dpm}                   & -     & 60.00 & 10.30 & 5.01  \\
        DPM-Solver++(3M)~\cite{lu2022dpmpp}             & 24.97 & 11.99 & 4.54  & 3.00  \\
        DEIS-tAB3~\cite{zhang2023deis} & 14.39 & 9.40 & 5.55 & 4.09 \\
        UniPC~\cite{zhao2023unipc}                      & 23.98 & 11.14 & 3.99  & 2.89  \\
        AMED-Solver~\cite{zhou2024fast}                 & -     & 7.04  & 5.56  & 4.14  \\
        AMED-Plugin~\cite{zhou2024fast}                 & -     & 6.67  & 3.34  & \textbf{2.48}  \\
        iPNDM~\cite{zhang2023deis}                      & 13.59 & 7.05  & 3.69  & 2.77  \\
        \rowcolor[gray]{0.9} iPNDM + \ourName (\textbf{ours})          & \textbf{8.38} & \textbf{4.88} & \textbf{3.24} & \textbf{2.49} \\
        \midrule
        \multicolumn{5}{l}{\fontsize{7}{1}\selectfont \textbf{FFHQ 64$\times$64}~\cite{karras2019style}} \\
        \midrule
        DDIM~\cite{song2021ddim}                        & 43.93 & 35.22 & 24.39 & 18.37 \\
        \rowcolor[gray]{0.9} DDIM + \ourName (\textbf{ours})           & 29.80 & 23.67 & 16.60 & 13.06 \\
        DPM-Solver-2~\cite{lu2022dpm}                   & -     & 83.17 & 22.84 & 9.46  \\
        DPM-Solver++(3M)~\cite{lu2022dpmpp}             & 22.51 & 13.74 & 6.04  & 4.12  \\
        DEIS-tAB3~\cite{zhang2023deis} & 17.36 & 12.25 & 7.59 & 5.56 \\
        UniPC~\cite{zhao2023unipc}                      & 21.40 & 12.85 & 5.50  & 3.84  \\
        AMED-Solver~\cite{zhou2024fast}                 & -     & 10.28 & 6.90  & 5.49  \\
        AMED-Plugin~\cite{zhou2024fast}                 & -     & 9.54  & 5.28  & 3.66  \\    
        iPNDM~\cite{zhang2023deis}                      & 17.17 & 10.03 & 5.52  & 3.98  \\
        \rowcolor[gray]{0.9} iPNDM + \ourName (\textbf{ours})          & \textbf{11.22} & \textbf{7.00} & \textbf{4.52} & \textbf{3.62} \\
        \midrule
        \multicolumn{5}{l}{\fontsize{7}{1}\selectfont \textbf{ImageNet 64$\times$64}~\cite{russakovsky2015ImageNet}} \\
        \midrule
        DDIM~\cite{song2021ddim}                        & 43.81 & 34.03 & 22.59 & 16.72 \\
        \rowcolor[gray]{0.9} DDIM + \ourName (\textbf{ours})           & 24.92 & 19.54 & 13.79 & 10.83 \\
        DPM-Solver-2~\cite{lu2022dpm}                   & -     & 44.83 & 12.42 & 6.84  \\
        DPM-Solver++(3M)~\cite{lu2022dpmpp}             & 25.49 & 15.06 & 7.84  & 5.67  \\
        DEIS-tAB3~\cite{zhang2023deis} & 14.75 & 12.57 & 6.84 & 5.34 \\
        UniPC~\cite{zhao2023unipc}                      & 24.36 & 14.30 & 7.52  & 5.53  \\
        RES(M)$\dagger$~\cite{zhang2023improved}           & 25.10    & 14.32 & 7.44  & 5.12  \\
        AMED-Solver~\cite{zhou2024fast}                 & -     & 10.63 & 7.71  & 6.06  \\
        AMED-Plugin~\cite{zhou2024fast}                 & -     & 12.05 & 7.03  & 5.01 \\
        iPNDM~\cite{zhang2023deis}                      & 18.99 & 12.92 & 7.20  & 5.11  \\
        \rowcolor[gray]{0.9} iPNDM + \ourName (\textbf{ours})          & \textbf{10.79} & \textbf{8.43} & \textbf{5.82} & \textbf{4.48} \\
        \midrule
        \multicolumn{5}{l}{\fontsize{7}{1}\selectfont \textbf{LSUN Bedroom 256$\times$256}~\cite{yu2015lsun} (pixel-space)} \\
        \midrule
        DDIM~\cite{song2021ddim}                        & 34.34 & 25.25 & 15.71 & 11.42 \\
        \rowcolor[gray]{0.9} DDIM + \ourName (\textbf{ours})           & 22.04 & 16.54 & 11.20 & 9.04  \\
        DPM-Solver-2~\cite{lu2022dpm}                   & -     & 80.59 & 23.26 & 9.61  \\
        DPM-Solver++(3M)~\cite{lu2022dpmpp}             & 23.15 & 12.28 & 7.44  & 5.71  \\
        UniPC~\cite{zhao2023unipc}                      & 23.34 & 11.71 & 7.53  & 5.75  \\
        AMED-Solver~\cite{zhou2024fast}                 & -     & 12.75 & \textbf{6.95}  & 5.38  \\
        AMED-Plugin~\cite{zhou2024fast}                 & -     & 11.58 & 7.48  & 5.70  \\
        iPNDM~\cite{zhang2023deis}                      & 26.65 & 20.73 & 11.78 & 5.57  \\
        \rowcolor[gray]{0.9} iPNDM + \ourName (\textbf{ours})          & \textbf{15.85} & \textbf{10.41} & \textbf{7.31} & \textbf{5.28} \\
        \bottomrule
    \end{tabular}
    \vspace{-1em}
\end{table}

\renewcommand{\arraystretch}{0.8}
\begin{table}[t]
    \caption{Image generation on Stable Diffusion v1.4~\cite{rombach2022ldm} with the default classifier-free guidance scale 7.5 (one sampling step requires two NFEs). We follow the standard FID evaluation, and use the statistics and 30k sampled captions from the MS-COCO~\cite{lin2014microsoft} validation set provided \href{https://github.com/boomb0om/text2image-benchmark}{here}.}
    \label{tab:fid-2}
    \vskip 0.1in
    \centering
    \fontsize{7}{10}\selectfont
    \begin{tabular}{lcccc}
        \toprule
        \multirow{2}{*}{METHOD} & \multicolumn{4}{c}{NFE} \\
        \cmidrule{2-5}
        & 10 & 12 & 14 & 16 \\
        \midrule
        DPM-Solver++(2M)~\cite{lu2022dpmpp}                 & 17.16 & 15.76 & 15.06 & 14.72 \\
        \rowcolor[gray]{0.9} DPM-Solver++(2M) + \ourName (\textbf{ours})   & \textbf{15.53} & \textbf{13.29} & \textbf{12.44} & \textbf{12.26} \\
        \bottomrule
    \end{tabular}
	\vspace{-1em}
\end{table}

\section{Experiments}

We adhere to the setup and experimental designs of the EDM framework~\cite{karras2022edm,song2023consistency}, with $f(t)=0$, $g(t)=\sqrt{2t}$, and $\sigma_t=t$. Under this parameterization, the forward VE-SDE is expressed as $\rmdx_t =\sqrt{2t} \,  \rmd \bfw_t$, while the corresponding empirical PF-ODE is formulated as $\rmdx_t = \frac{\bfx_t - r_{\bftheta}\left(\bfx_t; t\right)}{t}\rmd t$. The temporal domain is segmented using a polynomial function $t_n=(t_0^{1/\rho}+\frac{n}{N}(t_N^{1/\rho}-t_0^{1/\rho}))^{\rho}$, where $t_0=0.002$, $t_N=80$, $n\in [0,N]$, and $\rho=7$. In the visual analysis about 1-D projections (Figure~\ref{fig:deviation}) and Multi-D projections (Figure~\ref{fig:traj_3d}) detailed in Section~\ref{sec:trajectory_visualization}, we simulate each sampling trajectory employing the Euler method with $100$ score function evaluations. The mean and standard deviation for trajectory deviation showcased in Figure~\ref{fig:deviation} are derived from $5,000$ synthesized samples on ImageNet $64\times 64$, and the ratios of explained variance for PCA presented in Figure~\ref{fig:recon_pca} are based on 1,000 synthesized samples. 

We initiate the dynamic programming experiments with $256$ ``warmup'' samples randomly selected from Gaussian noise to create a more refined grid, and then calculate the cost matrix. The number of ``warmup'' samples is not a critical hyper-parameter, but reducing it generally increases the variance, as shown in the Table~\ref{tab:warmup_size}. 
Due to subtle differences exist among sampling trajectories (see Figure~\ref{fig:traj_3d}), we recommend utilizing a reasonable number of ``warmup'' samples to determine the optimal time schedule, such that this time schedule hopefully works well for all the generated samples. 
The ground-truth predictions are generated by iPNDM~\cite{zhang2023deis} (employing a fourth-order multistep Runge-Kutta method with a lower-order warming start) using the polynomial time schedule specified in EDM~\cite{karras2022edm} with 60 NFEs, resulting in a grid size $|\Gamma_g|=61$. Additional findings are available in Appendix~\ref{subsec:appendix_results}. 
The reported results of all compared approaches are obtained from an open-source toolbox: \url{https://github.com/zju-pi/diff-sampler}.

\textbf{Image generation.} As shown in Tables~\ref{tab:fid-1}-\ref{tab:fid-2}, our simple time re-allocation strategy based on iPNDM~\cite{zhang2023deis} consistently beats all existing ODE-based accelerated sampling methods, with a significant margin especially in the few NFE cases. In particular, all time schedules in these datasets are searched based on the Euler method, \ie, DDIM~\cite{song2021ddim}, but they are directly applicable with high-order methods such as iPNDM~\cite{zhang2023deis}. The trajectory regularity we uncovered guarantees that the schedule determined through 256 ``warmup'' samples is effective across all generated content. Furthermore, the experimental outcomes suggest that identifying this trajectory regularity enhances our comprehension of diffusion models' mechanisms. This understanding opens avenues for developing tailored time schedules for more efficient diffusion sampling.
Note that we did not use the analytical first step (AFS) that replaces the first numerical step with an analytical Gaussian score to save one NFE, proposed in~\cite{dockhorn2022genie} and later used in~\cite{zhou2024fast}, as we found this trick is particularly effective for datasets containing the small-resolution images. DPM-Solver-2~\cite{lu2022dpm} and AMED-Solver/Plugin~\cite{zhou2024fast} are thus inapplicable with NFE$=5$ (marked as ``-'') in Table~\ref{tab:fid-1}. Ablation studies on AFS and the grid size for dynamic programming are provided in Appendix~\ref{subsec:appendix_results}. 

\textbf{Time schedule comparison.} From Table~\ref{tab:time_schedule}, we can see that compared with existing handcraft time schedules, our used schedule is expected to better fit the underlying trajectory structure in the sampling of diffusion models and achieves smaller truncation errors with improved sample quality.

\textbf{Running time.} Our strategy incurs a very low cost and does not require accessing the real dataset. We start by a few initial ``warmup'' samples and run the given ODE-solver with fine-grained and coarse-grained steps to compute the cost matrix for dynamic programming. \textit{Such a computation is performed only once on each dataset to simultaneously find optimal time schedules for different NFEs}, thanks to the optimal substructure property~\cite{cormen2022introduction}. This requires less than or about a minute on CIFAR-10, FFHQ, ImageNet $64\times64$, and 10 to 15 minutes for LSUN Bedroom and LAION (Stable Diffusion), as shown in Table~\ref{tab:consumed_time}. 


\renewcommand{\arraystretch}{0.8}
\begin{table}[t]
    \caption{The comparison of FID results on CIFAR-10. Time schedules considerably affect the image generation performance.
    }
    \label{tab:time_schedule}
    \vskip 0.1in
    \centering
    \fontsize{7}{10}\selectfont
    \begin{tabular}{lcccc}
        \toprule
        \multirow{2}{*}{TIME SCHEDULE} & \multicolumn{4}{c}{NFE} \\
        \cmidrule{2-5}
        & 5 & 6 & 8 & 10 \\
        \midrule
        DDIM-uniform                & 36.98 & 28.22 & 19.60 & 15.45 \\
        DDIM-logsnr                 & 53.53 & 38.20 & 24.06 & 16.43 \\
        DDIM-polynomial             & 49.66 & 35.62 & 22.32 & 15.69 \\
        \rowcolor[gray]{0.9} DDIM + \ourName (\textbf{ours})           & \textbf{28.05} & \textbf{21.04} & \textbf{13.30} & \textbf{10.09} \\
        \midrule
        iPNDM-uniform               & 17.34 & 9.75  & 7.56  & 7.35  \\
        iPNDM-logsnr                & 19.87 & 10.68 & 4.74  & 2.94  \\
        iPNDM-polynomial            & 13.59 & 7.05  & 3.69  & 2.77  \\
        \rowcolor[gray]{0.9} iPNDM + \ourName (\textbf{ours})          & \textbf{8.38} & \textbf{4.88} & \textbf{3.24} & \textbf{2.49} \\
        \bottomrule
    \end{tabular}
    \vspace{-1em}
\end{table}

\renewcommand{\arraystretch}{0.8}
\begin{table}[t]
    \caption{Used time (seconds) in different stages of \ourName, evaluated on an NVIDIA A100 GPU. ``warmup'' samples are generated by 60 NFE and the NFE budget for dynamic programming is 10.}
    \label{tab:consumed_time}
    \vskip 0.1in
    \centering
    \fontsize{7}{10}\selectfont
    \begin{tabular}{lcccc}
        \toprule
        \multirow{2}{*}{DATASET} & sample & cost & dynamic & total \\
        & generation & matrix & programming & time (s) \\
        \midrule
        CIFAR-10 $32\times 32$            & 27.47  & 5.29   & 0.015  & 32.78  \\
        FFHQ $64\times 64$            & 51.90  & 10.88  & 0.016  & 62.79  \\
        ImageNet $64\times 64$        & 71.77  & 13.28  & 0.016  & 85.07  \\
        LSUN Bedroom        & 517.63 & 122.13 & 0.015  & 639.78 \\
        LAION (sd-v1.4)     & 877.62 & 24.00  & 0.016  & 901.62 \\
        \bottomrule
    \end{tabular}
\end{table}

\begin{figure}[t]
    \centering
    \begin{subfigure}[t]{0.46\columnwidth}
        \centering
        \includegraphics[width=\columnwidth]{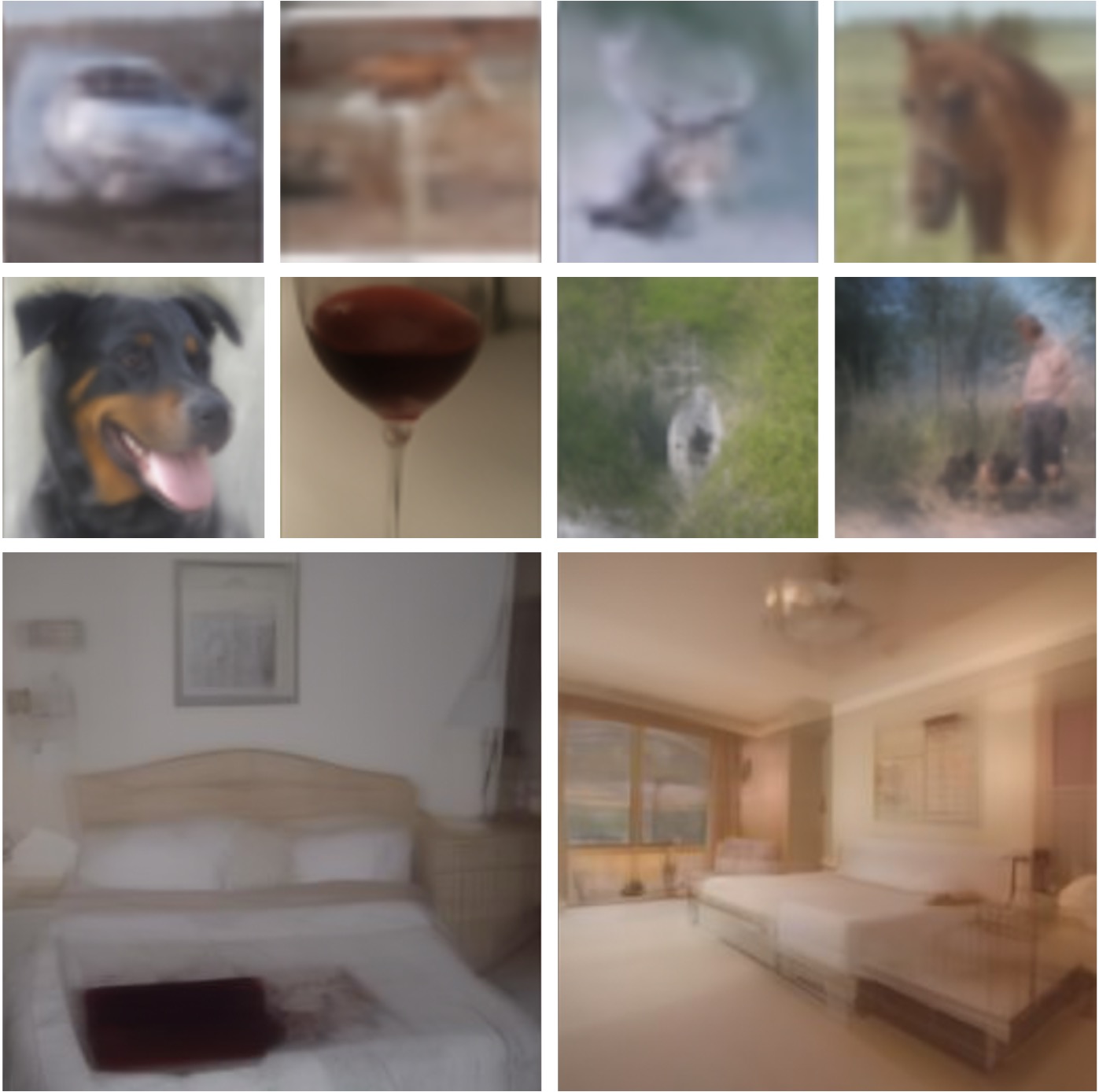}
        \caption{DDIM, NFE = 5.}
    \end{subfigure}
    \quad
    \begin{subfigure}[t]{0.46\columnwidth}
        \centering
        \includegraphics[width=\columnwidth]{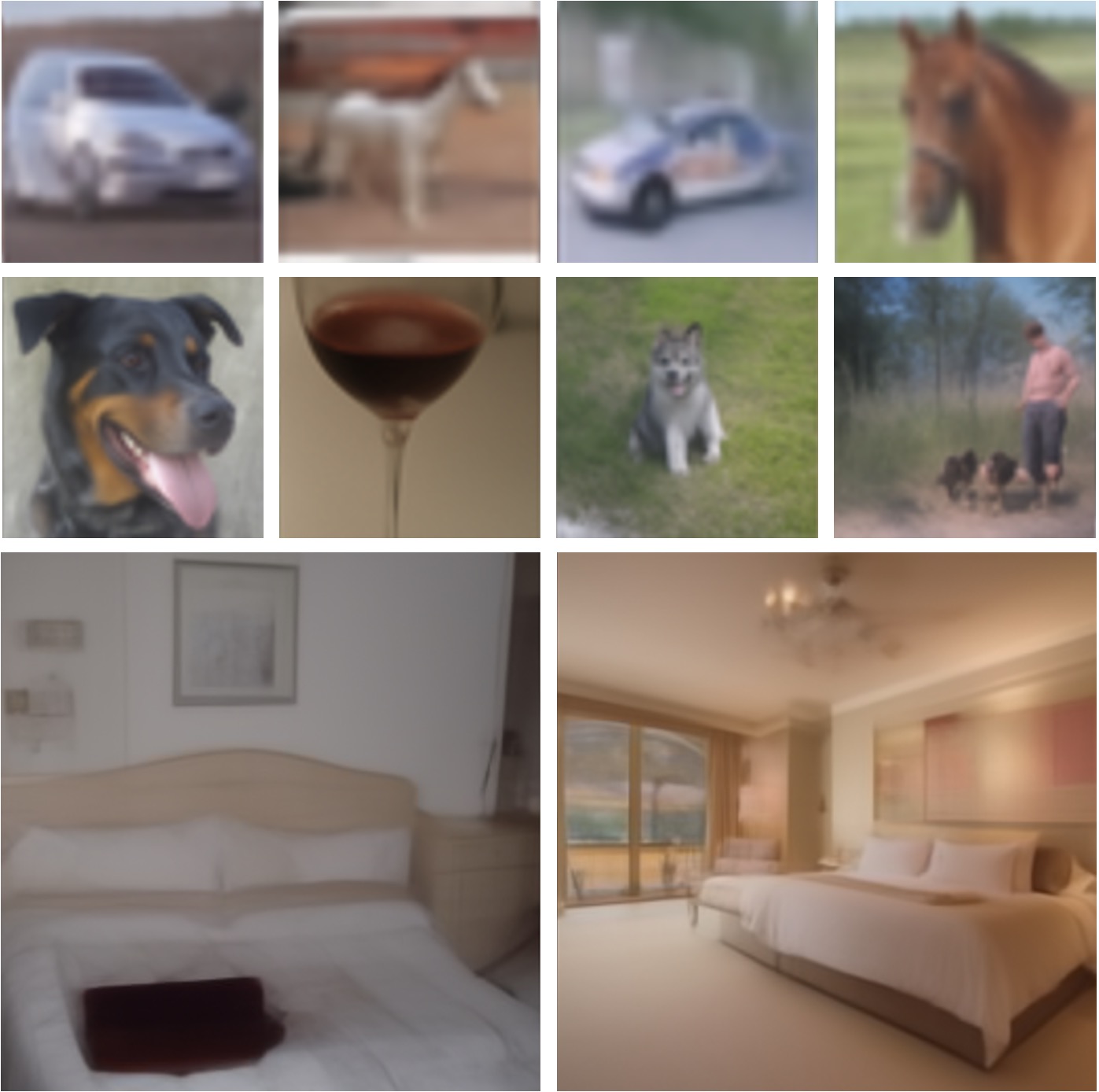}
        \caption{DDIM + \ourName, NFE = 5.}
    \end{subfigure}
    \caption{The visual comparison of samples generated by DDIM and DDIM + \ourName (1st row: CIFAR-10, 2nd row: ImageNet $64\times 64$, 3rd row: LSUN Bedroom). See more results in Appendix~\ref{subsec:appendix_results}.}
    \label{fig:euler_final}
\end{figure}

\begin{figure}[t]
    \centering
    \begin{subfigure}[t]{0.31\columnwidth}
        \centering
        \includegraphics[width=\columnwidth]{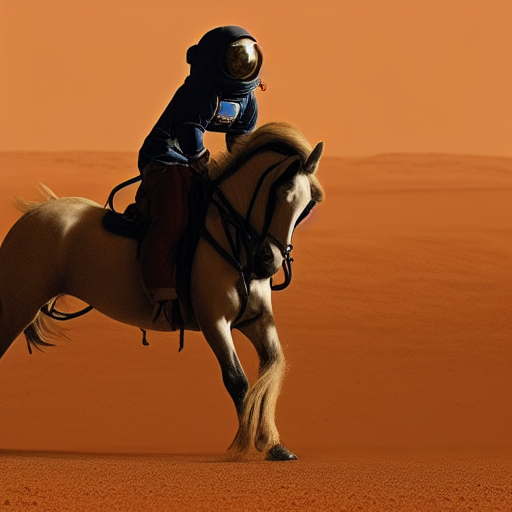}
    \end{subfigure}
    \begin{subfigure}[t]{0.31\columnwidth}
        \centering
        \includegraphics[width=\columnwidth]{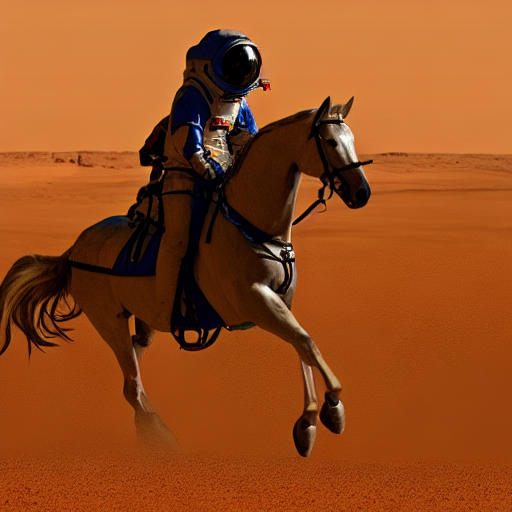}
    \end{subfigure}
    \begin{subfigure}[t]{0.31\columnwidth}
        \centering
        \includegraphics[width=\columnwidth]{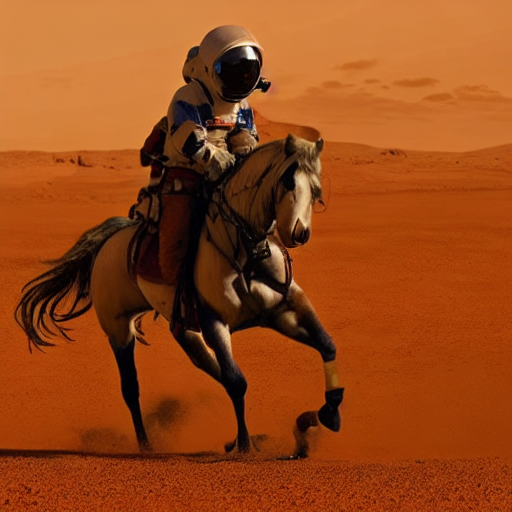}
    \end{subfigure}
    
    \vspace{0.15em}
    
    \begin{subfigure}[t]{0.31\columnwidth}
        \centering
        \includegraphics[width=\columnwidth]{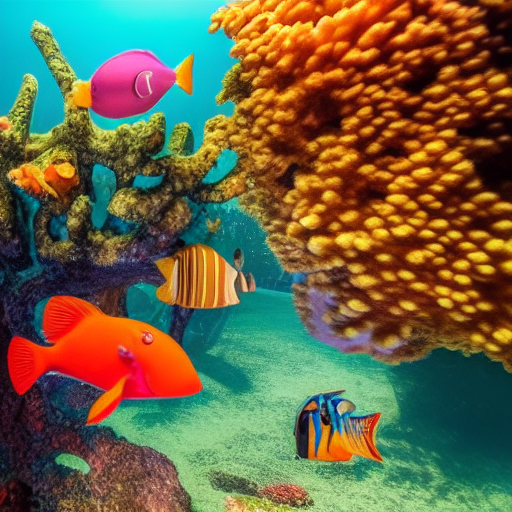}
    \end{subfigure}
    \begin{subfigure}[t]{0.31\columnwidth}
        \centering
        \includegraphics[width=\columnwidth]{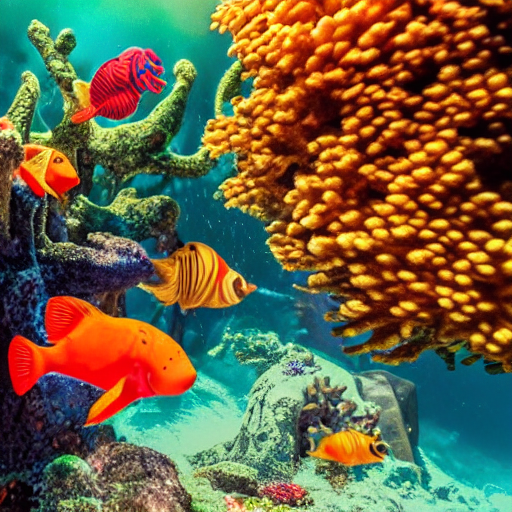}
    \end{subfigure}
    \begin{subfigure}[t]{0.31\columnwidth}
        \centering
        \includegraphics[width=\columnwidth]{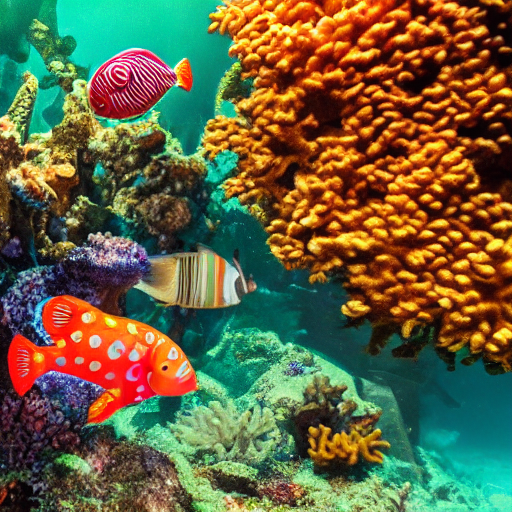}
    \end{subfigure}
    
    \vspace{0.15em}
    
    \begin{subfigure}[t]{0.31\columnwidth}
        \centering
        \includegraphics[width=\columnwidth]{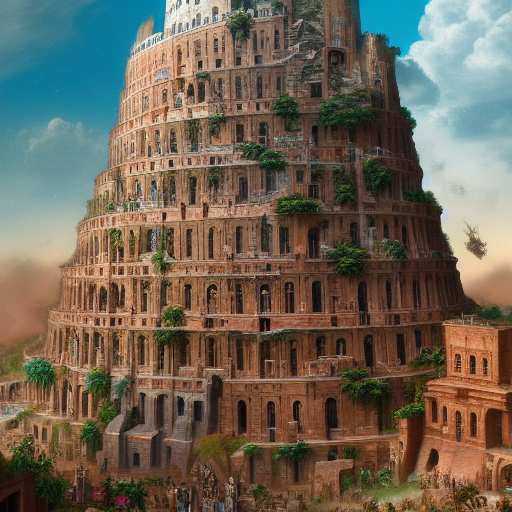}
        \caption{Uniform.}
    \end{subfigure}
    \begin{subfigure}[t]{0.31\columnwidth}
        \centering
        \includegraphics[width=\columnwidth]{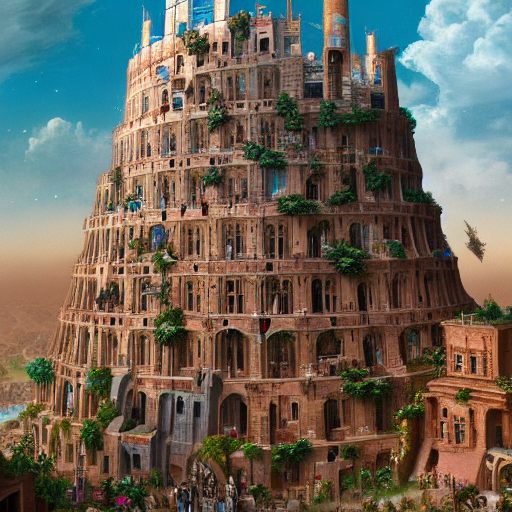}
        \caption{AYS.}
    \end{subfigure}
    \begin{subfigure}[t]{0.31\columnwidth}
        \centering
        \includegraphics[width=\columnwidth]{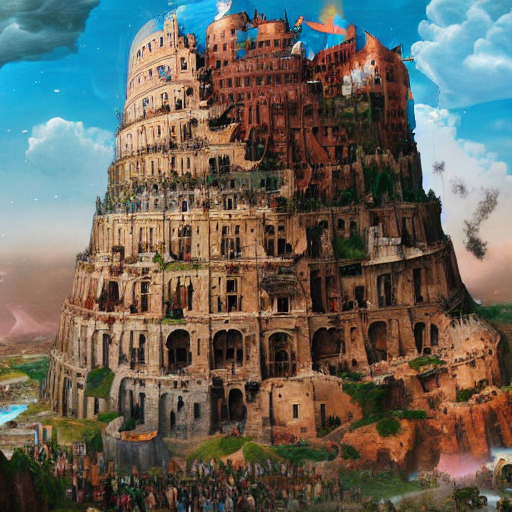}
        \caption{\ourName.}
    \end{subfigure}
    \caption{The visual comparison of samples generated by Stable Diffusion 1.5 with DPM-Solver++(2M), using the uniform, AYS-optimized~\cite{sabour2024align} or \ourName-optimized time schedule and 10 steps. The text prompts are ``a photo of an astronaut riding a horse on mars'' (1st row); ``a whimsical underwater world inhabited by colorful sea creatures and coral reefs'' (2nd row); ``a digital illustration of the Babel  tower 4k detailed trending in artstation fantasy vivid colors'' (3rd row).}
    \label{fig:sd_ays}
\end{figure}

\section{Related Work and Discussions}
\label{sec:related}

The popular variance-exploding (VE) SDEs~\cite{song2019ncsn,song2021sde} are taken as our main examples for analysis, which are equivalent to their variance-preserving (VP) counterparts according to It\^{o}'s lemma (see Remark~\ref{remark:ito} and Appendix~\ref{subsec:equivalence}). The equivalence has been established in their corresponding PF-ODE (rather than SDE) forms by using the change-of-variable formula~(see Proposition 1 of \citet{song2021ddim} and Proposition 3 of \citet{zhang2023deis}).~\citet{karras2022edm} also presented a set of steps to express different specific models in a common framework. 

Instead of training a noise-conditional score model with denoising score matching~\cite{vincent2011dsm,song2019ncsn,song2021maximum} or training a noise-prediction model to estimate the added noise in each step~\cite{ho2020ddpm,song2021ddim,nichol2021improved,vahdat2021lsgm,bao2022analytic}, we follow~\cite{kingma2021vdm,karras2022edm} and train a denoising model that predicts the reconstructed data from its corrupted version. With the help of simplified empirical PF-ODE~\eqref{eq:epf_ode}, we could characterize an implicit denoising trajectory and draw inspiration from classical non-parametric mean shift~\cite{fukunaga1975estimation,cheng1995mean,comaniciu2002mean}.

Denoising trajectory has been observed since the renaissance of diffusion models (see Figure 6 of~\cite{ho2020ddpm}) and later in Figure 3 of~\cite{kwon2023diffusion}, but did not been investigated, perhaps due to the indirect model parameterization. \citet{karras2022edm} first stated that the denoising output reflects the tangent of the sampling trajectory as our Corollary~\ref{cor:jump}, but neither characterizes the denoising trajectory in differential equations nor discusses how it controls the sampling trajectory. In fact, \citet{karras2022edm} mentioned this property to argue the sampling trajectory of \eqref{eq:epf_ode} is approximately linear due to the slow change in denoising output, and verified it in a 1D toy example. In contrast, we provide an in-depth analysis of the high-dimensional trajectories with real data and highlight a strong regularity.

We then describe one potential application to accelerate the sampling process. Different from most existing methods focusing on develop better ODE-solvers~\cite{song2021ddim,karras2022edm,liu2022pseudo,lu2022dpm,zhang2023deis,zhao2023unipc} while selecting a time schedule in a handcraft or trial-and-error way, we leverage trajectory regularity to better allocate the discretized time steps. Our approach is extremely faster than those distillation-based accelerating sampling methods~\cite{luhman2021knowledge,salimans2022progressive,zheng2023fast,song2023consistency} by several orders of magnitude. Besides, \citet{watson2021learning} used dynamic programming to optimize the time schedule based on the decomposable nature of ELBO but even worsen the sample quality. 
There are also many theoretical studies on the convergence analysis, and score estimation of diffusion models, but none of them focus on the trajectory properties~\cite{de2022convergence,pidstrigach2022score,lee2023convergence,chen2023approximation,chen2023restoration}.

Recently, a concurrent work named AYS was proposed to optimize time schedules in sampling by minimizing the mismatch between solving the true backward SDE and its approximated linear counterpart, based on techniques from stochastic calculus~\cite{sabour2024align}. In contrast, our \ourName leverages the strong trajectory regularity in diffusion models and yields time schedules with dynamic programming using only a few number of ``warmup'' samples. Our method also gets rid of the time-consuming Monte-Carlo computation in AYS~\cite{sabour2024align} and therefore is faster by several orders of magnitude. In Figure~\ref{fig:sd_ays}, we compare samples generated by different time schedules, using the colab code from \url{https://research.nvidia.com/labs/toronto-ai/AlignYourSteps/} with the default setting. We also evaluate their FID performance as the procedure used in Table~\ref{tab:fid-2}, and the results are 14.28 (uniform), 12.48 (AYS), and 12.02 (\ourName). 

\section{Conclusion}

In this paper, we illustrate the trajectory regularity that consistently appears in the ODE-based diffusion sampling, regardless of the specific content generated. We explain this regularity by characterizing and analyzing the implicit denoising trajectory, especially its behavior, under kernel density estimation-based data modeling. Such insights about the trajectory structure of diffusion-based generative models can lead to an accelerated sampling method to improve image synthesis quality with negligible computing overheads. 

For future works, we would like to explore deeper shape regularities in the sampling trajectories, describing more precise geometric structures of the regular shapes, and identifying new applications of these insights.

\section*{Impact Statement}

This work examines the theoretical properties of sampling trajectories in diffusion-based generative models. These models can produce images, audio, and videos indistinguishable from humans, raising concerns about their potential use in disseminating misinformation. We are fully aware of these negative impacts and, in subsequent work, will devote ourselves to developing mitigation measures, including detecting and watermarking synthetic contents.



\section*{Acknowledgements}

Defang Chen would like to thank Ziying Guo for helpful discussions on the sampling trajectory length and drawing the illustrations (Figures~\ref{fig:model}, \ref{fig:convex} and~\ref{fig:meanshift}), and thank Yuanzhi Zhu for helpful suggestions on the early version of this work, and thank Haowei Zheng for helpful discussions on the singularities of diffusion models and Remark~\ref{remark:eps_norm}.



\bibliography{ref}
\bibliographystyle{icml2024}

\newpage
\appendix
\onecolumn

\section{Appendix}

\subsection{The Equivalence of Linear Diffusion Models}
\label{subsec:equivalence}
In this section, we show that various linear diffusion models sharing the same signal-to-noise ratio (SNR) are closely connected with It\^{o}'s lemma. In particular, all other model types (\eg, variance-preserving (VP) diffusion process used in DDPMs \citep{ho2020ddpm}) can be transformed into the variance-exploding (VE) counterparts. Therefore, we merely focus on the mathematical properties and geometric behaviors of VE-SDEs in the main text to simplify our discussions. 

\subsubsection{General Notations of Linear Diffusion Models}

We first recap the basic notations in diffusion models. We consider the data perturbation as a continuous stochastic process $\{\bfz_t\}_{t=0}^T$ starting from $\bfz_0\sim p_d$, which is the solution of a linear stochastic differential equation (SDE)~\cite{song2021sde}:
\begin{equation}
    \label{eq:app_forward_sde}
    \rmd \bfz_t = f(t)\bfz_t\rmd t + g(t) \rmd \bfw_t, \quad f(\cdot): \bbR \rightarrow \bbR, \quad g(\cdot): \bbR \rightarrow \bbR,
\end{equation}
where $\bfw_t$ denotes the standard Wiener process; $f(\cdot)$ and $g(\cdot)$ are drift and diffusion coefficients, respectively. These two coefficients are required to be globally Lipschitz \wrt time to ensure the SDE has a unique strong solution~\cite{oksendal2013stochastic}. The probability density function $p_t(\bfz_t)$, starting from the initial condition $p_0(\bfz_0)=p_d(\bfz_0)$, evolves according to the well-known Fokker-Planck equation $\frac{\partial p_t(\bfz_t)}{\partial t} = - \nabla \cdot \left[p_t(\bfz_t)f(t)\bfz_t -\frac{g^2(t)}{2}\nablazt p_t(\bfz_t)\right]$. In this case, the transition kernel has the following form~\cite{sarkka2019applied,karras2022edm}
\begin{equation}
	\label{eq:app_kernel}
    p_{0t}(\bfz_t | \bfz_0)= \calN\left(\bfz_t ; s(t)\bfz_0, s^2(t)\sigma^2(t)\bfI\right),  
    \quad s(t) = \exp \left(\int_{0}^{t} f(\xi) \rmd \xi\right), 
    \quad \text{and} \quad  \sigma(t) = \sqrt{\int_{0}^{t} \left[g(\xi)/s(\xi)\right]^2\rmd \xi}.
\end{equation}
We denote $s(t)$ and $\sigma(t)$ as $s_t$ and $\sigma_t$ respectively hereafter for notation simplicity.
\begin{remark}
    An important implication from \eqref{eq:app_kernel} is that different linear diffusion processes sharing the same $\sigma_t$ actually have the same signal-to-noise ratio (SNR), since SNR is defined as $s^2_t/\left[s_t\sigma_t\right]^2=1/\sigma^2_t$. 
\end{remark}
Therefore, we can equivalently rewrite the forward SDE~\eqref{eq:app_forward_sde} in terms of $s_t$ and $\sigma_t$ as follows
\begin{equation}
	\label{eq:app_new_sde}
    \rmd \bfz_t =\frac{\rmd \log s_t}{\rmd t}\bfz_t \, \rmd t + s_t\sqrt{\frac{\rmd \sigma^2_t}{\rmd t}}\ \rmd \bfw_t, \quad 
    f(t) = \frac{\rmd \log s_t}{\rmd t}, \quad \text{and} \quad g(t)=s_t\sqrt{\frac{\rmd \sigma^2_t}{\rmd t}}. 
\end{equation}
By properly setting the coefficients $s_t$ and $\sigma_t$, we demonstrate that the standard notations of 
the variance-preserving (VP) SDE and the variance-exploding (VE) SDE in the literature~\cite{song2021sde,karras2022edm} can be recovered: 
\begin{itemize}
	\item VP-SDEs~\cite{ho2020ddpm,nichol2021improved,song2021ddim,song2021sde}: By setting $s_t=\sqrt{\alpha_t}$, $\sigma_t=\sqrt{(1-\alpha_t)/\alpha_t}$, $\beta_t = -\rmd \log \alpha_t/\rmd t$, and $\alpha_t\in (0, 1]$ as a decreasing sequence with $\alpha_0=1, \alpha_T\approx 0$, we have 
	\begin{equation}
			\bfz_t = \sqrt{\alpha_t}\ \bfz_0 + \sqrt{1-\alpha_t}\ \bfeps_t, \quad \bfeps_t\sim \calN(\mathbf{0}, \bfI); \qquad  
			\rmd \bfz_t = -\frac{1}{2}\beta_t\bfz_t \, \rmd t +  \sqrt{\beta_t}\rmd \bfw_t.
	\end{equation}
	\item VE-SDEs~\cite{song2019ncsn,song2020improved,song2021sde}: By setting $s_t=1$, we have
	\begin{equation}
			\bfz_t = \bfz_0 + \sigma_t\bfeps_t, \quad \bfeps_t \sim \calN(\mathbf{0}, \bfI); \qquad 
			\rmd \bfz_t =  \sqrt{\frac{\rmd \sigma^2_t}{\rmd t}}\ \rmd \bfw_t.
	\end{equation}
\end{itemize}
\begin{lemma}
    \label{lemma:posterior}
    Given $\bfz_0\sim p_d$ and the transition kernel as~\eqref{eq:app_kernel}, we have $\nablazt \log p_t(\bfz_t)=\left[s_t\sigma_t\right]^{-2}\left(s_t\bbE(\bfz_0|\bfz_t)-\bfz_t\right)$, or equivalently, $\nablazt \log p_t(\bfz_t)=-\left[s_t\sigma_t\right]^{-1}\bbE_{p_{t0}(\bfz_0|\bfz_t)}\bfeps_t$, where $\bfeps_t = \left[s_t\sigma_t\right]^{-1}(\bfz_t - s_t \bfz_0)$. 
\end{lemma}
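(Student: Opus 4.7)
The plan is to derive the score $\nablazt \log p_t(\bfz_t)$ by a direct computation that differentiates the marginal density $p_t(\bfz_t) = \int p_{0t}(\bfz_t \mid \bfz_0)\, p_d(\bfz_0)\, \rmdz_0$ under the integral sign and then recognizes the resulting integrand as a posterior expectation by Bayes' rule. This is essentially the linear-diffusion generalization of Tweedie's formula, so the work is mechanical once the Gaussian gradient is in hand.

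First I would compute $\nablazt \log p_{0t}(\bfz_t \mid \bfz_0)$ from the Gaussian density in~\eqref{eq:app_kernel}. Since $p_{0t}(\bfz_t \mid \bfz_0) = \calN(\bfz_t; s_t\bfz_0, s_t^2\sigma_t^2 \bfI)$, this gradient equals $(s_t \bfz_0 - \bfz_t)/[s_t\sigma_t]^2$, and therefore $\nablazt p_{0t}(\bfz_t \mid \bfz_0) = p_{0t}(\bfz_t \mid \bfz_0)\cdot (s_t \bfz_0 - \bfz_t)/[s_t\sigma_t]^2$. Next I would swap the gradient with the integral (justified by dominated convergence, since Gaussian tails control the integrand uniformly on compact neighborhoods of $\bfz_t$), yielding
\begin{equation*}
\nablazt \log p_t(\bfz_t) = \frac{1}{p_t(\bfz_t)}\int \frac{s_t \bfz_0 - \bfz_t}{[s_t\sigma_t]^2}\, p_{0t}(\bfz_t \mid \bfz_0)\, p_d(\bfz_0)\, \rmdz_0.
\end{equation*}
Then I would apply Bayes' rule, $p_{0t}(\bfz_t \mid \bfz_0)\, p_d(\bfz_0) = p_{t0}(\bfz_0 \mid \bfz_t)\, p_t(\bfz_t)$, to rewrite the right-hand side as $[s_t\sigma_t]^{-2}\, \bbE_{p_{t0}(\bfz_0\mid\bfz_t)}[s_t \bfz_0 - \bfz_t]$, which reduces to $[s_t\sigma_t]^{-2}(s_t\bbE(\bfz_0\mid\bfz_t) - \bfz_t)$ since $\bfz_t$ is a constant under this posterior. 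This gives the first identity.

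For the equivalent noise-form identity, I would substitute the reparameterization $\bfeps_t = [s_t\sigma_t]^{-1}(\bfz_t - s_t\bfz_0)$ into the integrand, observing that $s_t\bfz_0 - \bfz_t = -s_t\sigma_t\, \bfeps_t$, so the factor $[s_t\sigma_t]^{-2}$ combines with $-s_t\sigma_t$ to produce $-[s_t\sigma_t]^{-1}$ in front of the posterior expectation, yielding $\nablazt \log p_t(\bfz_t) = -[s_t\sigma_t]^{-1}\bbE_{p_{t0}(\bfz_0\mid\bfz_t)}\bfeps_t$.

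The whole argument is routine; the only mildly delicate point is the interchange of $\nablazt$ and $\int$, which I would handle once with a brief appeal to dominated convergence using the Gaussian form of $p_{0t}$ and the integrability of $p_d$. Beyond that, no step presents a genuine obstacle.
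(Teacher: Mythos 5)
Your proposal is correct and follows essentially the same route as the paper's proof: differentiate the marginal under the integral sign using the Gaussian gradient of the transition kernel, apply Bayes' rule to recognize a posterior expectation, and then reparameterize in terms of $\bfeps_t$ for the second form. The only difference is that you explicitly flag the interchange of $\nablazt$ and the integral (the paper passes over this silently), which is a reasonable point of care but does not change the structure of the argument.
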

\begin{proof}
 \begin{equation}   \begin{aligned}
       p_t(\bfz_t)=\int p_d(\bfz_0)p_t(\bfz_t|\bfz_0)\rmd \bfz_0
        , \quad &\rightarrow \quad
        \nablazt p_t(\bfz_t)
        =\int \frac{\left(s_t\bfz_0-\bfz_t\right)}{s_t^2\sigma_t^2}	
        p_d(\bfz_0)p_{0t}(\bfz_t|\bfz_0)\rmd \bfz_0
        \\
        s_t^2\sigma_t^2\nablazt p_t(\bfz_t)&=\int s_t\bfz_0 p_d(\bfz_0)p_{0t}(\bfz_t|\bfz_0)\rmd \bfz_0 -\bfz_t p_t(\bfz_t)\\
        s_t^2\sigma_t^2\frac{\nablazt p_t(\bfz_t)}{p_t(\bfz_t)}&=s_t\int \bfz_0 p_t(\bfz_0|\bfz_t)\rmd \bfz_0 -\bfz_t \\
        \nablazt \log p_t(\bfz_t)&=\left[s_t\sigma_t\right]^{-2}\left(s_t\bbE(\bfz_0|\bfz_t)-\bfz_t\right).
    \end{aligned}
\end{equation} 
We further have $\nablazt \log p_t(\bfz_t)=\left[s_t\sigma_t\right]^{-2}\left(s_t\bbE(\bfz_0|\bfz_t)-\bfz_t\right)=\left[s_t\sigma_t\right]^{-1}\bbE\left(\frac{s_t\bfz_0-\bfz_t}{s_t\sigma_t}|\bfz_t\right)=-\left[s_t\sigma_t\right]^{-1}\bbE_{p_{t0}(\bfz_0|\bfz_t)}\bfeps_t$ due to the linearity of expectation, where $\bfz_t = s_t \bfz_0 + s_t\sigma_t \bfeps_t$, $\bfeps_t \sim \calN(\mathbf{0}, \bfI)$ according to the transition kernel \eqref{eq:app_kernel}.
\end{proof}
We can train a data-prediction model $r_{\bftheta}(\bfz_t; \sigma_t)$ to approximate the posterior $\bbE(\bfz_0|\bfz_t)$, or train a noise-prediction model $\bfeps_{\bftheta}(\bfz_t; \sigma_t)$ to approximate $\bbE\left(\frac{s_t\bfz_0-\bfz_t}{s_t\sigma_t}|\bfz_t\right)$ and then substitute the score with the learned model. 

The probability flow ordinary differential equation (PF-ODE) of the forward SDE~\eqref{eq:app_forward_sde} can also be expressed with $s_t$ and $\sigma_t$
\begin{equation}
	\label{eq:app_new_ode}
    \frac{\rmd \bfz_t}{\rmd t} = \frac{\rmd \log s_t}{\rmd t}\bfz_t - \frac{1}{2} s^2_t\frac{\rmd \sigma^2_t}{\rmd t} \nablazt \log p_t(\bfz_t)= \frac{\rmd \log s_t}{\rmd t}\bfz_t - \frac{\rmd \log\sigma_t}{\rmd t} \left(s_t \bbE(\bfz_0|\bfz_t)-\bfz_t\right).
\end{equation}

In practice, we have two formulas to calculate the \textit{exact solution} from the current position $\bfz_{t_{n+1}}$ to the next position $\bfz_{t_{n}}$ ($t_{n+1}>t_{n}$) in the ODE-based sampling to obtain the sampling trajectory from $t_N$ to $t_0$. One is 
\begin{equation}
	\label{eq:app_exact_direct}
	\bfz_{t_n} = \bfz_{t_{n+1}}+\int_{t_{n+1}}^{t_n} \frac{\rmd \bfz_t}{\rmd t}\rmd t,
\end{equation}
and another leverages the semi-linear structure in~\eqref{eq:app_new_ode} to derive the following equation~\cite{lu2022dpm,zhang2023deis} with the \textit{variant of constants} formula
\begin{equation}
    \label{eq:app_exact_semi}    
    \begin{aligned}
    \bfz_{t_n} 
    &= \exp\left(\int_{t_{n+1}}^{{t_n}} f(t) \rmd t\right)\bfz_{t_{n+1}} -
    \int_{t_{n+1}}^{{t_n}} \left( \exp\left(\int_{t}^{{t_n}} f(r) \rmd r\right) \, \frac{g^2(t)}{2} \nablazt \log p_{t}(\bfz_t) \right) \rmd t\\
    &= \frac{s_{t_n}}{s_{t_{n+1}}}\bfz_{t_{n+1}} -
    s_{t_n}\int_{t_{n+1}}^{{t_n}} \left( s_t\sigma_t\sigma_t^{\prime}\nablazt \log p_t(\bfz_t)\right) \rmd t.
    \end{aligned}
\end{equation}
The above integral, whether in \eqref{eq:app_exact_direct} or \eqref{eq:app_exact_semi} is generally intractable. Therefore, the ODE-based sampling in diffusion models is all about how to solve the integral with numerical approximation methods in each step. Typical strategies include Euler method~\cite{song2021ddim}, Heun's method~\cite{karras2022edm}, Runge-Kutta method~\cite{song2021sde,liu2022pseudo,lu2022dpm}, and linear multistep method~\cite{liu2022pseudo,lu2022dpmpp,zhang2023deis,zhao2023unipc}. 

\subsubsection{The Equivalence of Linear Diffusion Models}
\label{subsubsec:equivalence}
We further prove that various linear diffusion models sharing the same SNR are equivalent up to a scaling factor. We also demonstrate that how their score functions and sampling behaviors are connected.
\begin{proposition}
	\label{prop:app_ito_lemma}
	Any diffusion process defined as \eqref{eq:app_new_sde} can be transformed into its VE counterpart with the change-of-variables formula $\bfx_t=\bfz_t / s_t$ ($t\in (0, T]$), keeping the SNR unchanged.
\end{proposition}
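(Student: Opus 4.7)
The plan is to view $\bfx_t = \bfz_t/s_t$ as a smooth, time-dependent transformation of the It\^{o} process $\bfz_t$ and directly apply It\^{o}'s lemma to derive the SDE satisfied by $\bfx_t$. Writing $\bfx_t = \phi(t, \bfz_t)$ with $\phi(t,\bfz) = \bfz/s_t$, the relevant derivatives are $\partial_t \phi = -(\rmd\log s_t/\rmd t)\,\phi$, $\nabla_\bfz \phi = s_t^{-1}\bfI$, and $\nabla^2_\bfz \phi = 0$. Since the second-order term vanishes, It\^{o}'s lemma will reduce to the usual chain rule applied to the drift and a rescaling of the Brownian term.

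Next I would substitute the drift $f(t)\bfz_t = (\rmd\log s_t/\rmd t)\bfz_t$ and diffusion $g(t) = s_t\sqrt{\rmd\sigma_t^2/\rmd t}$ from \eqref{eq:app_new_sde} into the It\^{o} expansion. The drift of $\bfx_t$ becomes
\begin{equation*}
\left(-\frac{\rmd \log s_t}{\rmd t}\right)\frac{\bfz_t}{s_t} + \frac{1}{s_t}\cdot\frac{\rmd \log s_t}{\rmd t}\bfz_t = 0,
\end{equation*}
so the deterministic part cancels exactly, while the diffusion term becomes $(1/s_t)\cdot s_t\sqrt{\rmd\sigma_t^2/\rmd t}\,\rmd\bfw_t = \sqrt{\rmd\sigma_t^2/\rmd t}\,\rmd\bfw_t$. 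This matches the standardized VE-SDE~\eqref{eq:new_sde}, which is exactly what we want.

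To close out the SNR claim, I would note that the original transition kernel \eqref{eq:app_kernel} gives $\bfz_t = s_t \bfz_0 + s_t\sigma_t\bfeps_t$ with $\bfeps_t\sim\calN(\mathbf{0},\bfI)$, so dividing by $s_t$ yields $\bfx_t = \bfz_0 + \sigma_t\bfeps_t$. The transformed process has signal scale $1$ and noise scale $\sigma_t$, hence SNR $= 1/\sigma_t^2$, identical to that of $\bfz_t$. This also confirms that $\bfx_t$ is a bona fide VE process with the same noise schedule $\sigma_t$ as the original model.

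I do not anticipate a serious obstacle: the proof is essentially a one-line application of It\^{o}'s lemma because the transformation $\bfz\mapsto \bfz/s_t$ is linear in $\bfz$, killing the quadratic-variation correction, and the partial-$t$ term is engineered to cancel the drift. The only care needed is to keep the transformation well-defined, which requires $s_t > 0$ for $t\in(0,T]$ (so the restriction to this interval in the statement); this is automatic from $s_t = \exp(\int_0^t f(\xi)\,\rmd\xi)$. The bookkeeping step most prone to sign errors is verifying the drift cancellation, but it follows transparently once one writes $\rmd\log s_t/\rmd t$ as $s_t'/s_t$.
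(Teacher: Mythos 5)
Your proof is correct and takes essentially the same approach as the paper: apply It\^{o}'s lemma to the linear map $\bfz\mapsto\bfz/s_t$, observe that the vanishing second derivative and the cancellation between $\partial_t\phi$ and the drift leave only the rescaled diffusion term $\sqrt{\rmd\sigma_t^2/\rmd t}\,\rmd\bfw_t$, and note that $\sigma_t$ (hence the SNR) is unchanged. The only cosmetic difference is that you work vectorially while the paper applies the lemma coordinate-by-coordinate, and you make the SNR preservation slightly more explicit via the transition kernel.
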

\begin{proof}
    We adopt the change-of-variables formula $\bfx_t=\bfphi(t, \bfz_t)=\bfz_t / s_t$ with $\bfphi:(0, T]\times\bbR^n \rightarrow \bbR^n$, and we denote the $i$-th dimension of $\bfz_t$, $\bfx_t$ and $\bfw_t$ as $\bfz_t[i]$, $\bfx_t[i]$, and $w_t$ respectively; $\bfphi = [\phi_1, \cdots, \phi_i, \cdots, \phi_n]^T$ with a twice differentiable scalar function $\phi_i(t, z)=z/s_t$ of two real variables $t$ and $z$. Since each dimension of $\bfz_t$ is independent, we can apply It\^{o}'s lemma~\cite{oksendal2013stochastic} to each dimension with $\phi_i(t, \bfz_t[i])$ separately. We have
    \begin{equation}
        \frac{\partial \phi_i}{\partial t}=-\frac{z}{s_t^2}\frac{\rmd s_t}{\rmd t}, \quad
        \frac{\partial \phi_i}{\partial z}=\frac{1}{s_t}, \quad
        \frac{\partial^2 \phi_i}{\partial z^2}=0, \quad
        \rmd \bfz_t[i]=\frac{\rmd \log s_t}{\rmd t}\bfz_t[i]\rmd t + s_t\sqrt{\frac{\rmd \sigma^2_t}{\rmd t}} \rmd w_t,
    \end{equation}
    then
    \begin{equation}
    \begin{aligned}
        \rmd \phi_i(t, \bfz_t[i])&=\left(\frac{\partial \phi_i}{\partial t}+ f(t)\bfz_t[i]\frac{\partial \phi_i}{\partial z}+\frac{g^2(t)}{2}\frac{\partial^2 \phi_i}{\partial z^2}\right) \rmd t + g(t)\frac{\partial \phi_i}{\partial z}\rmd w_t\\
        &=\left(\frac{\partial \phi_i}{\partial t}+ \frac{g^2(t)}{2}\frac{\partial^2 \phi_i}{\partial z^2}\right) \rmd t 
        +\frac{\partial \phi_i}{\partial z}\rmd \bfz_t[i]\\
        \rmd \bfx_t[i] & =-\frac{\bfz_t[i]}{s_t}\frac{\rmd \log s_t}{\rmd t}\rmd t+ \frac{1}{s_t}\left(\frac{\rmd \log s_t}{\rmd t}\bfz_t[i]\rmd t + s_t\sqrt{\frac{\rmd \sigma^2_t}{\rmd t}} \rmd w_t\right)\\
        \rmd \bfx_t[i] &= \sqrt{\frac{\rmd \sigma^2_t}{\rmd t}}\ \rmd w_t, \quad \rightarrow \quad
        \rmd \bfx_t = \sqrt{\frac{\rmd \sigma^2_t}{\rmd t}}\ \rmd \bfw_t,
    \end{aligned}
    \end{equation}
with the initial condition $\bfx_0=\bfz_0 \sim p_d$. Since $\sigma_t$ in the above VE-SDE ($\bfx$-space) is exactly the same as that used in the original SDE ($\bfz$-space, \eqref{eq:app_new_sde}), the SNR remains unchanged. 
\end{proof}
Similarly, we provide the PF-ODE in the $\bfx$-space as follows
\begin{equation}
    \rmd \bfx_t = - \sigma_t\nablaxt \log p_t(\bfx_t) \rmd \sigma_t,
\end{equation}
with the score function for $t\in (0, T]$
\begin{equation}
	\begin{aligned}
        \nablaxt \log p_t(\bfx_t)
        &=s_t\nablazt \log \int \calN\left(\bfz_t/s_t; \bfz_0, \sigma_t^2\bfI\right)p_d(\bfz_0) \rmdz_0 \\ 
        &=s_t\nablazt \log \int s_t^{d}\calN\left(\bfz_t; s_t\bfz_0, s_t^2\sigma_t^2\bfI\right)p_d(\bfz_0) \rmdz_0 \\ 
		&=s_t\nablazt \log p_t(\bfz_t).
	\end{aligned}
 \end{equation}
This equation also holds for $t=0$ since $s_0=1$. Thus, $\nablaxt \log p_t(\bfx_t)=s_t\nablazt \log p_t(\bfz_t)$ for $t\in [0, T]$.
\begin{corollary}
    With the same numerical method, the results obtained by using \eqref{eq:app_exact_direct} or \eqref{eq:app_exact_semi} are not equal in general cases. But they become exactly the same for VE-SDEs in the $\bfx$-space.  
\end{corollary}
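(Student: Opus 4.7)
The plan is to split the corollary into its two halves. First I would verify the easier half: substituting the defining condition $s_t \equiv 1$ for VE-SDEs (in the $\bfx$-space) into both integral representations and showing they collapse to the same expression, after which any fixed numerical quadrature must agree. Second I would argue the generic inequality by exhibiting how the linear drift is treated differently by the two schemes when $s_t$ is non-trivial.

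For the VE half, the argument is essentially algebraic. Under $s_t\equiv 1$ we have $f(t)=\rmd\log s_t/\rmd t = 0$, so the direct form \eqref{eq:app_exact_direct} becomes
\begin{equation}
\bfx_{t_n}=\bfx_{t_{n+1}}+\int_{t_{n+1}}^{t_n}\Bigl(-\sigma_t\sigma_t^{\prime}\nablaxt\log p_t(\bfx_t)\Bigr)\rmd t.
\end{equation}
Meanwhile in the semi-linear form \eqref{eq:app_exact_semi}, the prefactor $s_{t_n}/s_{t_{n+1}}$ equals $1$ and the integrand $s_t\sigma_t\sigma_t^{\prime}\nablaxt\log p_t$ reduces to $\sigma_t\sigma_t^{\prime}\nablaxt\log p_t$, producing the same integral. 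Since both representations give one and the same scalar integrand against $\rmd t$, any numerical method that samples the integrand at the same nodes and combines the values with the same weights must yield identical outputs.

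For the generic half, I would emphasize that although the continuous integrals in \eqref{eq:app_exact_direct} and \eqref{eq:app_exact_semi} are both exact (and hence equal), their integrands are structurally different functions of $t$: the direct version carries $f(t)\bfz_t$ coupled to the state itself, whereas the semi-linear version isolates the score contribution after factoring out the exact exponential flow of the linear drift via $s_{t_n}/s_{t_{n+1}}$. Applying, say, one Euler step starting from the same $\bfz_{t_{n+1}}$ gives, from the direct form, $\bfz_{t_{n+1}}+(t_n-t_{n+1})\bigl[f(t_{n+1})\bfz_{t_{n+1}}-\tfrac{1}{2}g^2(t_{n+1})\nablazt\log p_{t_{n+1}}(\bfz_{t_{n+1}})\bigr]$, and from the semi-linear form, $(s_{t_n}/s_{t_{n+1}})\bfz_{t_{n+1}}-s_{t_n}(t_n-t_{n+1})\,s_{t_{n+1}}\sigma_{t_{n+1}}\sigma_{t_{n+1}}^{\prime}\nablazt\log p_{t_{n+1}}(\bfz_{t_{n+1}})$. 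These disagree whenever $s_{t_n}/s_{t_{n+1}}\ne 1+f(t_{n+1})(t_n-t_{n+1})$, which fails as soon as $s_t$ is not affine in $t$, i.e.\ in any realistic VP schedule. A concrete one-step VP example suffices for a counterexample.

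The main obstacle is keeping the general-case argument clean: one must be careful to interpret ``the same numerical method'' consistently across two different integrands, and to make clear that the discrepancy is structural (exponential integrator versus direct integration of the drift) and not merely a choice of discretization that could be hand-tuned away. Once that interpretive point is settled, the VE collapse and the generic one-step counterexample together close the proof.
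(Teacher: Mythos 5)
The paper itself supplies no proof for this corollary; the ``Sketch of proof'' that appears immediately afterward belongs to Corollary~\ref{cor:equal}, so this statement is left as a routine verification and there is no in-paper argument to compare against. Your blind proof fills that gap correctly and in the natural way. The VE half is an exact algebraic collapse: once $s_t\equiv 1$ (hence $f\equiv 0$), the two integral representations \eqref{eq:app_exact_direct} and \eqref{eq:app_exact_semi} have the same prefactor and the same integrand, so any fixed quadrature rule evaluated at the same nodes with the same weights must produce identical iterates. The generic half, a one-step Euler comparison, is the right kind of counterexample and correctly locates the discrepancy in the exponential-integrator structure of \eqref{eq:app_exact_semi} versus the direct quadrature of the full drift in \eqref{eq:app_exact_direct}.

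One minor imprecision worth tightening: the disagreement condition you display, $s_{t_n}/s_{t_{n+1}}\neq 1+f(t_{n+1})(t_n-t_{n+1})$, only controls the coefficient multiplying $\bfz_{t_{n+1}}$. When $s_t$ happens to be affine on the step this becomes an equality, yet the two Euler updates still differ because the score-term coefficients are $s_{t_{n+1}}^{2}\sigma_{t_{n+1}}\sigma_{t_{n+1}}^{\prime}$ for the direct form and $s_{t_n}s_{t_{n+1}}\sigma_{t_{n+1}}\sigma_{t_{n+1}}^{\prime}$ for the semi-linear form, and these disagree whenever $s$ is non-constant across the step. A fully watertight counterexample should note that both discrepancies are present generically and cannot cancel for an arbitrary score; with that observation added, your argument closes the proof.
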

\begin{corollary}
    \label{cor:equal}
	With the same numerical method, the result obtained by using \eqref{eq:app_exact_semi} in the $\bfz$-space is exactly the same as the result obtained by using \eqref{eq:app_exact_direct} or \eqref{eq:app_exact_semi} in the $\bfx$-space.
\end{corollary}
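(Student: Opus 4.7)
The strategy is to apply the change-of-variables $\bfx_t=\bfz_t/s_t$ from Proposition~\ref{prop:app_ito_lemma} directly to the semi-linear formula~\eqref{eq:app_exact_semi} written in the $\bfz$-space, and to observe that the scaling factor $s_t$ cancels cleanly inside the integrand. Combined with the preceding corollary, which says that~\eqref{eq:app_exact_direct} and~\eqref{eq:app_exact_semi} coincide in the $\bfx$-space (where $s_t\equiv 1$ makes the drift in~\eqref{eq:app_new_ode} vanish), this reduces the claim to verifying that the integrand in the $\bfz$-space semi-linear formula, expressed through $\bfx_t$, is exactly the function of $t$ that the $\bfx$-space scheme would integrate.

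\textbf{Key cancellation.} Writing out the $\bfz$-space semi-linear formula and substituting both $\bfz_t=s_t\bfx_t$ and the identity $\nablazt\log p_t(\bfz_t)=s_t^{-1}\nablaxt\log p_t(\bfx_t)$ established in Section~\ref{subsubsec:equivalence} gives
\begin{equation*}
    s_{t_n}\bfx_{t_n} = \frac{s_{t_n}}{s_{t_{n+1}}}\,s_{t_{n+1}}\bfx_{t_{n+1}} - s_{t_n}\int_{t_{n+1}}^{t_n} s_t\sigma_t\sigma_t^{\prime}\cdot s_t^{-1}\nablaxt\log p_t(\bfx_t)\,\rmd t,
\end{equation*}
in which the two $s_t$ factors inside the integral cancel. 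Dividing through by $s_{t_n}$ yields $\bfx_{t_n}=\bfx_{t_{n+1}}-\int_{t_{n+1}}^{t_n}\sigma_t\sigma_t^{\prime}\nablaxt\log p_t(\bfx_t)\,\rmd t$, which is exactly the $\bfx$-space PF-ODE integrated from $t_{n+1}$ to $t_n$, i.e., the common value of~\eqref{eq:app_exact_direct} and~\eqref{eq:app_exact_semi} applied in the $\bfx$-space by the previous corollary.

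\textbf{From exact to numerical equivalence.} To promote the continuous identity to numerical equivalence, I would use a descending induction on the step index $n$. The base case is the consistent initial condition $\hatz_{t_N}=s_{t_N}\hatx_{t_N}$; the inductive hypothesis $\hatz_{t_{n+1}}=s_{t_{n+1}}\hatx_{t_{n+1}}$, combined with the pointwise integrand identity $s_t\sigma_t\sigma_t^{\prime}\nablazt\log p_t(\bfz_t)=\sigma_t\sigma_t^{\prime}\nablaxt\log p_t(\bfx_t)$, ensures that any numerical quadrature rule (Euler, Runge-Kutta, or linear multistep) will, at each node used to approximate the integral, produce numerically identical values in the two representations. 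The exact linear prefactor $s_{t_n}/s_{t_{n+1}}$ in~\eqref{eq:app_exact_semi} is preserved by the scheme, and after dividing by $s_{t_n}$ this advances the induction to $\hatz_{t_n}=s_{t_n}\hatx_{t_n}$, which is the asserted equivalence.

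\textbf{Main obstacle.} The one delicate point is making the inductive step airtight for multistep solvers that cache and reuse previously evaluated score values and for Runge-Kutta schemes that evaluate intermediate stages. One must verify that each stored or stage evaluation of $\nablazt\log p_t$ at a previously visited node translates, under the change of variables, to precisely the evaluation of $\nablaxt\log p_t$ that the $\bfx$-space scheme reuses at the same node, which is exactly what the inductive hypothesis together with the score identity provides. Beyond this careful bookkeeping of cached values and quadrature weights, the proof reduces to the algebraic cancellation above.
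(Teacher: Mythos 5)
Your proof follows the same core route as the paper's own ``sketch of proof'': substitute the change of variables $\bfx_t=\bfz_t/s_t$ together with the score identity $\nablazt\log p_t(\bfz_t)=s_t^{-1}\nablaxt\log p_t(\bfx_t)$ into the $\bfz$-space semi-linear formula, observe that the $s_t$ factors cancel in the integrand, and recognize the remainder as the $\bfx$-space PF-ODE integral, giving $\bfz_{t_n}=s_{t_n}\bfx_{t_n}$. Your final section adds an inductive bookkeeping argument (handling cached score evaluations and intermediate stages) to promote the exact-solution identity to equality of the actual numerical iterates; the paper omits this step and simply labels its argument a sketch, so your version is a slightly more careful rendering of the same idea rather than a different proof.
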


\begin{proof}[Sketch of proof.]
    Given the sample $\bfz_{t_n}$ obtained by solving the equation \eqref{eq:app_exact_semi} in $\bfz$-space starting from $\bfz_{t_{n+1}}$, we prove that $\bfz_{t_n}/s_{t_n}$ is exactly equal to sampling with the equation~\eqref{eq:app_exact_direct} in $\bfx$-space starting from $\bfx_{t_{n+1}}=\bfz_{t_{n+1}}/s_{t_{n+1}}$ to $\bfx_{t_n}$. We have
\begin{equation}
	\begin{aligned}
        \bfz_{t_n} &= s_{t_n}\left(\frac{\bfz_{t_{n+1}}}{s_{t_{n+1}}}-\int_{t_{n+1}}^{t_n} s_t\sigma_t\sigma_t^{\prime} \nablazt \log p_t(\bfz_t)\rmd t \right)
        = s_{t_n}\left(\bfx_{t_{n+1}}+\int_{t_{n+1}}^{t_n} - \sigma_t\nablaxt \log p_t(\bfx_t)\sigma_t^{\prime} \rmd t\right) \\
		&= s_{t_n}\left(\bfx_{t_{n+1}}+\int_{t_{n+1}}^{t_n} \frac{\rmd \bfx_t}{\rmd t}\rmd t\right) = s_{t_{n}}\bfx_{t_n}.
	\end{aligned}
\end{equation}
\end{proof}

\subsection{Generalized Denoising Output}
\label{subsec:generalized_denoising_output}
All following proofs are conducted in the context of a VE-SDE $\rmdx_t =\sqrt{2t} \, \rmd \bfw_t$, \ie, $\sigma_t=t$ for notation simplicity, and the sampling trajectory always starts from $\hatx_{t_N}\sim \calN(\mathbf{0}, T^2\bfI)$ and ends at $\hatx_{t_0}$. 

The PF-ODEs of the sampling trajectory and denoising trajectory are provided as follows
\begin{equation}
	\text{sampling-ODE:}\, \frac{\rmdx_t}{\rmd t} = \bfeps_{\bftheta}(\bfx_t; t) = \frac{\bfx_t - r_{\bftheta}(\bfx_t; t)}{t}, \quad 
    \text{denoising-ODE:}\,
    \frac{\rmd r_{\bftheta}(\bfx_t; t)}{\rmd t} = - t \frac{\rmd^2 \bfx_t}{\rmd t^2}. 
\end{equation}
The denoising-ODE above is derived by simply rearranging the sampling-ODE as $r_{\bftheta}(\bfx_t; t)=\bfx_t - t \frac{\rmd \bfx_t}{\rmd t}$, and then take the derivative of both sides, \ie, 
$\frac{\rmd r_{\bftheta}(\bfx_t; t)}{\rmd t}=\frac{\rmd \bfx_t}{\rmd t}-\left(\frac{\rmd \bfx_t}{\rmd t}+t\frac{\rmd^2 \bfx_t}{\rmd t^2}\right)= - t \frac{\rmd^2 \bfx_t}{\rmd t^2}$. 

\subsubsection{Convex Combination}
\begin{proposition}
	Given the probability flow ODE~\eqref{eq:epf_ode} and a current position $\hatx_{t_{n+1}}$, $n\in[0, N-1]$ in the sampling trajectory, the next position $\hatx_{t_{n}}$ predicted by a $k$-order Taylor expansion with the time step size ${t_{n+1}}-{t_n}$ equals 
	\begin{equation}
        \begin{aligned}
		\hatx_{t_{n}}&=\frac{{t_n}}{{t_{n+1}}} \hatx_{t_{n+1}} +  \frac{{t_{n+1}}-{t_n}}{{t_{n+1}}} h_{\bftheta}(\hatx_{t_{n+1}}).
        \end{aligned}
	\end{equation}
which is a convex combination of $\hatx_{t_{n+1}}$ and the generalized denoising output $h_{\bftheta}(\hatx_{t_{n+1}})$, 
\begin{equation}
    h_{\bftheta}(\hatx_{t_{n+1}})=r_{\bftheta}(\hatx_{t_{n+1}})- \sum_{i=2}^{k}\frac{1}{i!}\frac{\rmd^{(i)} \bfx_t}{\rmd t^{(i)}}\Big|_{\hatx_{t_{n+1}}}{t_{n+1}}({t_n} - {t_{n+1}})^{i-1}.
\end{equation}
We have $h_{\bftheta}(\hatx_{t_{n+1}})=r_{\bftheta}(\hatx_{t_{n+1}})$ for Euler method ($k=1$), and $h_{\bftheta}(\hatx_{t_{n+1}})=r_{\bftheta}(\hatx_{t_{n+1}})+\frac{{t_{n}}-{t_{n+1}}}{2}\frac{\rmd r_{\bftheta}(\hatx_{t_{n+1}})}{\rmd t}$ for second-order methods ($k=2$). 
\end{proposition}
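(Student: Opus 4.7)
The plan is to derive the formula by directly expanding $\bfx_t$ in a Taylor series about $t_{n+1}$, substituting the PF-ODE in for the first derivative, and then algebraically rearranging into the advertised convex-combination form. Since the empirical PF-ODE is $\rmd \bfx_t/\rmd t = (\bfx_t - r_{\bftheta}(\bfx_t;t))/t$, the first derivative at $\hatx_{t_{n+1}}$ is $(\hatx_{t_{n+1}} - r_{\bftheta}(\hatx_{t_{n+1}}))/t_{n+1}$, which is exactly what manufactures the factors $t_n/t_{n+1}$ and $(t_{n+1}-t_n)/t_{n+1}$ in the desired identity. All higher-order derivatives of $\bfx_t$ then have to be bundled into a single corrective term $h_{\bftheta}$.

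Concretely, I would first write
\begin{equation*}
\hatx_{t_n} = \hatx_{t_{n+1}} + \sum_{i=1}^{k} \frac{1}{i!}\frac{\rmd^{(i)}\bfx_t}{\rmd t^{(i)}}\Big|_{\hatx_{t_{n+1}}}(t_n - t_{n+1})^i,
\end{equation*}
plug in the first-order term via the PF-ODE, and factor:
\begin{equation*}
\hatx_{t_{n+1}} + \frac{t_n - t_{n+1}}{t_{n+1}}\bigl(\hatx_{t_{n+1}} - r_{\bftheta}(\hatx_{t_{n+1}})\bigr) = \frac{t_n}{t_{n+1}}\hatx_{t_{n+1}} + \frac{t_{n+1}-t_n}{t_{n+1}}\,r_{\bftheta}(\hatx_{t_{n+1}}).
\end{equation*}
Isolating the residual higher-order contribution, I would then divide the Taylor tail by the common coefficient $(t_{n+1}-t_n)/t_{n+1}$ to fold it into $h_{\bftheta}$, giving
\begin{equation*}
h_{\bftheta}(\hatx_{t_{n+1}}) = r_{\bftheta}(\hatx_{t_{n+1}}) + \frac{t_{n+1}}{t_{n+1}-t_n}\sum_{i=2}^{k}\frac{1}{i!}\frac{\rmd^{(i)}\bfx_t}{\rmd t^{(i)}}\Big|_{\hatx_{t_{n+1}}}(t_n-t_{n+1})^i,
\end{equation*}
and pulling out one factor of $(t_n - t_{n+1}) = -(t_{n+1}-t_n)$ from each summand recovers the stated form with the minus sign.

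For the two advertised specializations: the $k=1$ case is immediate since the sum is empty. For $k=2$, I would invoke the denoising-ODE identity $\rmd r_{\bftheta}(\hatx_{t};t)/\rmd t = -t\,\rmd^2 \bfx_t/\rmd t^2$ (itself a one-line consequence of differentiating $r_{\bftheta}(\bfx_t;t)=\bfx_t - t\,\rmd\bfx_t/\rmd t$), use it to replace the second $\bfx_t$-derivative, and simplify. The factors of $t_{n+1}$ cancel cleanly and a single $1/2$ survives, yielding $h_{\bftheta}(\hatx_{t_{n+1}}) = r_{\bftheta}(\hatx_{t_{n+1}}) + \tfrac{t_n - t_{n+1}}{2}\,\rmd r_{\bftheta}(\hatx_{t_{n+1}})/\rmd t$.

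The argument is essentially bookkeeping rather than analysis, so the only real pitfall is sign-tracking: one must be careful that $t_n < t_{n+1}$ in the backward-in-time sampler makes $t_n - t_{n+1}$ negative, so the step size $t_{n+1}-t_n$ appearing in the convex coefficient must be kept distinct from the Taylor-expansion increment $(t_n - t_{n+1})$. I would perform the sign reduction explicitly once, in the step that extracts $(t_n - t_{n+1})^{i-1}$ from the sum, to make sure the final form matches the statement. Nothing deeper is needed: no regularity assumptions on $r_{\bftheta}$ beyond $k$-fold differentiability of the induced trajectory are required, since the proposition is a statement about the numerical predictor rather than about convergence.
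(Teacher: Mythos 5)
Your proof is correct and follows the same route as the paper's: Taylor-expand $\hatx_{t_n}$ about $\hatx_{t_{n+1}}$, substitute the empirical PF-ODE for the first derivative to manufacture the convex coefficients $t_n/t_{n+1}$ and $(t_{n+1}-t_n)/t_{n+1}$, absorb the Taylor tail into $h_{\bftheta}$ by dividing through by $(t_{n+1}-t_n)/t_{n+1}$, and invoke $\rmd r_{\bftheta}/\rmd t = -t\,\rmd^2\bfx_t/\rmd t^2$ for the $k=2$ specialization. The sign bookkeeping you flag and resolve is exactly the step the paper carries out implicitly when moving from the Taylor increment $(t_n - t_{n+1})^i$ to $t_{n+1}(t_n - t_{n+1})^{i-1}$ with a leading minus sign.
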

\begin{proof}
    The $k$-order Taylor expansion at $\hatx_{t_{n+1}}$ is
    \begin{equation}
    \begin{aligned}
    \hatx_{t_{n}}
    &= \sum_{i=0}^{k}\frac{1}{i!}\frac{\rmd^{(i)} \bfx_t}{\rmd t^{(i)}}\Big|_{\hatx_{t_{n+1}}}(t_n - t_{n+1})^{i}\\
    &=\hatx_{t_{n+1}} +  (t_n - t_{n+1})\frac{\rmdx_t}{\rmd t}\Big|_{\hatx_{t_{n+1}}}
    +\sum_{i=2}^{k}\frac{1}{i!}\frac{\rmd^{(i)} \bfx_t}{\rmd t^{(i)}}\Big|_{\hatx_{t_{n+1}}}(t_n - t_{n+1})^i\\
    &=\hatx_{t_{n+1}} +  \frac{t_n - t_{n+1}}{t_{n+1}} \left(\hatx_{t_{n+1}} - r_{\bftheta}(\hatx_{t_{n+1}})\right)
    + \sum_{i=2}^{k}\frac{1}{i!}\frac{\rmd^{(i)} \bfx_t}{\rmd t^{(i)}}\Big|_{\hatx_{t_{n+1}}}(t_n - t_{n+1})^{i}\\
    &=\frac{{t_n}}{{t_{n+1}}} \hatx_{t_{n+1}} +  \frac{{t_{n+1}}-{t_n}}{{t_{n+1}}} h_{\bftheta}(\hatx_{t_{n+1}}), 
    \end{aligned}
\end{equation}
where $h_{\bftheta}(\hatx_{t_{n+1}})=r_{\bftheta}(\hatx_{t_{n+1}})- \sum_{i=2}^{k}\frac{1}{i!}\frac{\rmd^{(i)} \bfx_t}{\rmd t^{(i)}}\Big|_{\hatx_{t_{n+1}}}{t_{n+1}}(t_n - t_{n+1})^{i-1}$. As for first-order approximation ($k=1$), we have $h_{\bftheta}(\hatx_{t_{n+1}})=r_{\bftheta}(\hatx_{t_{n+1}})$. As for second-order approximation ($k=2$), we have 
\begin{equation}
    \label{eq:second-order}
    h_{\bftheta}(\hatx_{t_{n+1}})=r_{\bftheta}(\hatx_{t_{n+1}})-\frac{t_{n+1}(t_{n}-t_{n+1})}{2}\frac{\rmd^{2}\bfx_t }{\rmd t^2}\Big|_{\hatx_{t_{n+1}}}=r_{\bftheta}(\hatx_{t_{n+1}})+\frac{t_{n}-t_{n+1}}{2} \frac{\rmd r_{\bftheta}(\hatx_{t_{n+1}})}{\rmd t}.
\end{equation} 
\end{proof}
\begin{corollary}
	The denoising output $r_{\bftheta}(\hatx_{t_{n+1}})$ reflects the prediction made by a single Euler step from $\hatx_{t_{n+1}}$ with the time step size $t_{n+1}$.
\end{corollary}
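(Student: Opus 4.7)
The plan is to derive the corollary as an immediate specialization of Proposition~\ref{prop:convex} with $k=1$ and the target time set to $0$. Concretely, I would begin from the Euler form of the empirical PF-ODE, namely $\rmd \bfx_t/\rmd t = (\bfx_t - r_{\bftheta}(\bfx_t; t))/t$, and evaluate a single Euler update issued at time $t_{n+1}$ with step size equal to $t_{n+1}$ itself, i.e., stepping from $t_{n+1}$ down to $0$. Substituting into $\hatx_{t_{n+1}} + (0 - t_{n+1})\,\bfeps_{\bftheta}(\hatx_{t_{n+1}}; t_{n+1})$ and expanding $\bfeps_{\bftheta}(\hatx_{t_{n+1}}; t_{n+1}) = (\hatx_{t_{n+1}} - r_{\bftheta}(\hatx_{t_{n+1}}))/t_{n+1}$ makes the two $\hatx_{t_{n+1}}$ terms cancel, leaving exactly $r_{\bftheta}(\hatx_{t_{n+1}})$.

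Equivalently, and more in the spirit of the paper's presentation, I would invoke the convex combination identity of Proposition~\ref{prop:convex} with $k=1$ (so $\calR_{\bftheta} = r_{\bftheta}$) and formally take $t_n = 0$. The coefficient $t_n/t_{n+1}$ in front of $\hatx_{t_{n+1}}$ vanishes, while the complementary coefficient $(t_{n+1} - t_n)/t_{n+1}$ becomes $1$, so the convex combination collapses onto the denoising output $r_{\bftheta}(\hatx_{t_{n+1}})$. This gives the corollary in one line and also makes transparent why the denoising output is naturally interpreted as a ``jump to $t=0$'' at the current sample.

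I expect no genuine obstacle: the statement is essentially a reading of the definition of $r_{\bftheta}$ in light of the Euler update. The only minor subtlety worth flagging in the write-up is that taking $t_n = 0$ sits at the boundary of the sampling interval $[\epsilon, T]$ used elsewhere, but since the formula $(\bfx_t - r_{\bftheta}(\bfx_t; t))/t$ is evaluated at $t = t_{n+1} > 0$ the step size $t_{n+1}$ is well-defined and no singularity is encountered. I would therefore keep the proof to a short two-line derivation, presented either as a substitution into the Euler update or as a corollary of Proposition~\ref{prop:convex} with $t_n = 0$, and note that this viewpoint justifies the pictorial role of $r_{\bftheta}(\hatx_{t_{n+1}})$ as the ``pendulum endpoint'' in Figure~\ref{fig:convex}.
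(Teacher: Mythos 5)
Your proposal is correct and follows essentially the same route as the paper: the paper's proof is exactly the one-line substitution $\hatx_{t_{n+1}} + (0-t_{n+1})(\hatx_{t_{n+1}} - r_{\bftheta}(\hatx_{t_{n+1}}))/t_{n+1} = r_{\bftheta}(\hatx_{t_{n+1}})$, which is your first derivation. Your alternative reading via Proposition~\ref{prop:convex} with $t_n=0$ is just a restatement of the same cancellation.
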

\begin{proof}
	The prediction of such an Euler step equals to
	$\hatx_{t_{n+1}} + (0 - {t_{n+1}}) \left(\hatx_{t_{n+1}} - r_{\bftheta}(\hatx_{t_{n+1}})\right)/{t_{n+1}}=r_{\bftheta}(\hatx_{t_{n+1}})$. 
\end{proof}

\begin{corollary}
     Each previously proposed second-order ODE-based accelerated sampling method corresponds to a specific first-order finite difference of $\rmd r_{\bftheta}(\hatx_{t_{n+1}})/\rmd t$. 
\end{corollary}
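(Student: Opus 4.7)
The plan is to start from the second-order ($k=2$) instance of the convex-combination formula already established in Proposition~\ref{prop:convex}, namely
\begin{equation}
    \hatx_{t_n}=\frac{t_n}{t_{n+1}}\hatx_{t_{n+1}}+\frac{t_{n+1}-t_n}{t_{n+1}}\Bigl(r_{\bftheta}(\hatx_{t_{n+1}})+\tfrac{t_n-t_{n+1}}{2}\tfrac{\rmd r_{\bftheta}(\hatx_{t_{n+1}})}{\rmd t}\Bigr),
\end{equation}
and show that every popular second-order ODE-based accelerated sampler in the literature is obtained by replacing the exact derivative $\rmd r_{\bftheta}(\hatx_{t_{n+1}})/\rmd t$ by a concrete first-order finite difference evaluated at two well-chosen points. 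The proof is therefore a unification-by-case-analysis: for each canonical scheme I rewrite its update in the ``current position $+$ denoising direction'' form of Proposition~\ref{prop:convex} and read off the finite difference.

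Concretely, I would enumerate three representative families. First, Heun's method: its corrector step averages $\bfeps_{\bftheta}$ at $\hatx_{t_{n+1}}$ and at the Euler prediction $\tilde{\bfx}_{t_n}$, which in data-prediction form is equivalent to the forward difference $\bigl(r_{\bftheta}(\tilde{\bfx}_{t_n})-r_{\bftheta}(\hatx_{t_{n+1}})\bigr)/(t_n-t_{n+1})$. Substituting into the displayed formula recovers the Heun update. Second, DPM-Solver-2 (and AMED-Solver in spirit): it evaluates $\bfeps_{\bftheta}$ at an intermediate timestamp $t_{\mathrm{mid}}\in(t_n,t_{n+1})$, which corresponds to the centered-type difference $\bigl(r_{\bftheta}(\tilde{\bfx}_{t_{\mathrm{mid}}})-r_{\bftheta}(\hatx_{t_{n+1}})\bigr)/(t_{\mathrm{mid}}-t_{n+1})$; substituting this into the formula and rescaling by the appropriate factor (arising because the finite difference is computed over a step of size $t_{\mathrm{mid}}-t_{n+1}$ rather than $t_n-t_{n+1}$) reproduces its update. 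Third, the linear multistep family (iPNDM, Adams--Bashforth-2): these use a backward difference from the previous model evaluation, $\bigl(r_{\bftheta}(\hatx_{t_{n+1}})-r_{\bftheta}(\hatx_{t_{n+2}})\bigr)/(t_{n+1}-t_{n+2})$, and plugging this in yields the standard AB2 coefficients.

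For each case the mechanical verification is routine: expand the known update, isolate the coefficients of $\hatx_{t_{n+1}}$ and of the $r_{\bftheta}$ evaluations, and match them against the right-hand side of the displayed equation above. The only care needed is in DPM-Solver-2, where the intermediate sample $\tilde{\bfx}_{t_{\mathrm{mid}}}$ is itself produced by an Euler sub-step, so the claimed finite difference is an approximation of $\rmd r_{\bftheta}/\rmd t$ up to the local truncation error consistent with a second-order scheme---this must be stated explicitly rather than glossed over.

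The main obstacle I anticipate is purely bookkeeping: different papers parameterize their solvers in terms of $\sigma_t$, $\log\sigma_t$, or $t$ directly, and in predicted-noise rather than predicted-data form, so translating each published recipe into a single common coordinate system (the VE-SDE $\sigma_t=t$ used throughout Appendix~\ref{subsec:generalized_denoising_output}) and verifying that the rescalings cancel correctly is where small sign or factor-of-two errors are easy to make. Once the translation is done uniformly via $r_{\bftheta}(\bfx_t;t)=\bfx_t-t\,\bfeps_{\bftheta}(\bfx_t;t)$ and the relation $\rmd r_{\bftheta}/\rmd t=-t\,\rmd^2\bfx_t/\rmd t^2$ from the denoising-ODE, each case reduces to matching coefficients in a single line.
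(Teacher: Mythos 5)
Your approach is the same as the paper's: start from the $k=2$ instance of Proposition~\ref{prop:convex}, rewrite each published second-order sampler in the ``convex combination of $\hatx_{t_{n+1}}$ and generalized denoising output'' form, and read off the implied finite-difference approximation of $\rmd r_{\bftheta}(\hatx_{t_{n+1}})/\rmd t$. That is exactly the paper's case analysis (Heun in EDM, DPM-Solver-2, S-PNDM, DEIS $\rho$AB1), collected in Table~\ref{tab:sampler}.

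However, one concrete claim in your sketch is wrong and would trip up the ``routine'' verification. For Heun you assert the implied finite difference is the bare forward difference $\bigl(r_{\bftheta}(\hatx'_{t_n})-r_{\bftheta}(\hatx_{t_{n+1}})\bigr)/(t_n-t_{n+1})$. It is not: because Heun averages in predicted-noise space and $r_{\bftheta}(\bfx_t;t)=\bfx_t-t\,\bfeps_{\bftheta}(\bfx_t;t)$ carries a time-dependent prefactor, the conversion at the two distinct timestamps $t_n$ and $t_{n+1}$ leaves a residual multiplicative correction $\gamma=t_{n+1}/t_n$ in front of the forward difference. Similarly for DPM-Solver-2 the correction is $\gamma=t_{n+1}/s_n$ with $s_n=\sqrt{t_nt_{n+1}}$, and this is \emph{not} the step-size rescaling you attribute it to (the step size $(t_n-t_{n+1})/2$ already sits in the denominator of that finite difference); it again comes from the noise-to-data conversion across timestamps. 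The multistep solvers (S-PNDM, DEIS) have $\gamma=1$ precisely because their two history evaluations are Euler outputs already in data-prediction form, with no cross-time $\bfeps$-to-$r$ conversion needed. So the bookkeeping you worry about does not ``cancel correctly'' in general; the point of the paper's computation (and the $\gamma$ column of Table~\ref{tab:sampler}) is that a nontrivial $\gamma$ survives for the single-step schemes. With that corrected, your case-by-case plan goes through as intended.
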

\begin{proof}
    The proof is provided in the next section.
\end{proof}
\subsubsection{Second-Order ODE Samplers as Finite Differences of Denoising Trajectory}

To accelerate the sampling speed of diffusion models, various numerical solver-based samplers have been developed in the past several years \cite{song2021ddim,song2021sde,karras2022edm,lu2022dpm,zhang2023deis}. In particular, second-order ODE-based samplers are relatively promising in the practical use since they strike a good balance between fast sampling and decent visual quality \cite{rombach2022ldm,balaji2022ediffi}. 

We next demonstrate that each of them can be rewritten as a specific way to perform finite difference of the denoising trajectory $\rmd r_{\bftheta}(\hatx_{t_{n+1}})/\rmd t$, as shown in Table~\ref{tab:sampler}. We assume that history points are already available for samplers in PNDM and DEIS and they are calculated with Euler method (otherwise, we need to consider  Taylor expansion in terms of the data-prediction model $r_{\bftheta}(\bfx_t)$ rather than the noise-prediction model $\bfeps_{\bftheta}(\bfx_t)$ as the implementation in the original papers). 

\subsubsection{EDMs \cite{karras2022edm}}
EDMs employ Heun’s 2\textsuperscript{nd} order method, where one Euler step is first applied and followed by a second order correction, which can be written as
\begin{equation}
    \begin{aligned}
    \hatx'_{t_{n}}
    & = \hatx_{t_{n+1}} + (t_n - t_{n+1}) \bfeps_{\bftheta}(\hatx_{t_{n+1}}), \\
    \hatx_{t_{n}}
    & = \hatx_{t_{n+1}} + (t_n - t_{n+1}) \left(0.5\bfeps_{\bftheta}(\hatx_{t_{n+1}}) + 0.5\bfeps_{\bftheta}(\hatx'_{t_n})\right) \\
    & = \hatx_{t_{n+1}} + (t_n - t_{n+1}) \bfeps_{\bftheta}(\hatx_{t_{n+1}})
    + \frac{1}{2}(t_n - t_{n+1})^2 \frac{\bfeps_{\bftheta}(\hatx'_{t_n}) - \bfeps_{\bftheta}(\hatx_{t_{n+1}})}{t_n - t_{n+1}}.
    \end{aligned}
\end{equation}
By using $r_{\bftheta}(\bfx_t; t)=\bfx_t - t \bfeps_{\bftheta}(\bfx_t; t)$, the sampling iteration above is equivalent to
\begin{equation}
\begin{aligned}
    \hatx_{t_{n}}
    & = \hatx_{t_{n+1}} + (t_n - t_{n+1}) \frac{\hatx_{t_{n+1}} - r_{\bftheta}(\hatx_{t_{n+1}})}{t_{n+1}}
    + \frac{1}{2}(t_n - t_{n+1})^2 \frac{\frac{\hatx'_{t_n} - r_{\bftheta}(\hatx'_{t_n})}{t_n} - \frac{\hatx_{t_{n+1}} - r_{\bftheta}(\hatx_{t_{n+1}})}{t_{n+1}}}{t_n - t_{n+1}} \\
    & = \frac{t_n}{t_{n+1}}\hatx_{t_{n+1}} + \frac{t_{n+1} - t_n}{t_{n+1}} r_{\bftheta}(\hatx_{t_{n+1}}) 
    - \frac{1}{2}\frac{(t_n - t_{n+1})^2}{t_{n+1}} \frac{t_{n+1}}{t_n} \frac{r_{\bftheta}(\hatx'_{t_n}) - r_{\bftheta}(\hatx_{t_{n+1}})}{t_n - t_{n+1}}\\
    & = \frac{t_n}{t_{n+1}}\hatx_{t_{n+1}} + \frac{t_{n+1} - t_n}{t_{n+1}} 
    \left(
    r_{\bftheta}(\hatx_{t_{n+1}}) 
    + \frac{t_n - t_{n+1}}{2} \frac{t_{n+1}}{t_n} \frac{r_{\bftheta}(\hatx'_{t_n}) - r_{\bftheta}(\hatx_{t_{n+1}})}{t_n - t_{n+1}}
    \right).
\end{aligned}
\end{equation}
Compared with the generalized denoising output \eqref{eq:second-order}, we have $\frac{\rmd r_{\bftheta}(\hatx_{t_{n+1}})}{\rmd t}\approx \frac{t_{n+1}}{t_n} \frac{r_{\bftheta}(\hatx'_{t_n}) - r_{\bftheta}(\hatx_{t_{n+1}})}{t_n - t_{n+1}}$.

\subsubsection{DPM-Solver \cite{lu2022dpm}}
According to (3.3) in DPM-Solver, the exact solution of PF-ODE in the VE-SDE setting is given by
\begin{equation}
    \hatx_{t_{n}} = \hatx_{t_{n+1}} + \int_{t_{n+1}}^{t_n} \bfeps_{\bftheta}(\hatx_t) \rmd t.
\end{equation}
The Taylor expansion of $\bfeps_{\bftheta}(\hatx_t)$ \textit{w.r.t.}\ time at $t_{n+1}$ is
\begin{equation}
    \bfeps_{\bftheta}(\hatx_t) = \sum_{m=0}^{k-1} \frac{(t - t_{n+1})^m}{m!} \bfeps_{\bftheta}^{(m)}(\hatx_{t_{n+1}}) + \mathcal{O} \left( (t - t_{n+1})^k \right).
\end{equation}
Then, the DPM-Solver-k sampler can be written as
\begin{equation}
\begin{aligned}
    \hatx_{t_{n}} 
    & = \hatx_{t_{n+1}} + \sum_{m=0}^{k-1} \bfeps_{\bftheta}^{(m)}(\hatx_{t_{n+1}}) \int_{t_{n+1}}^{t_n} \frac{(t - t_{n+1})^m}{m!} \rmd t + \mathcal{O} \left((t_n - t_{n+1})^k \right) \\
    & = \hatx_{t_{n+1}} + \sum_{m=0}^{k-1} \frac{(t_n - t_{n+1})^{m+1}}{(m+1)!} \bfeps_{\bftheta}^{(m)}(\hatx_{t_{n+1}}) + \mathcal{O} \left((t_n - t_{n+1})^{k+1} \right).
\end{aligned}
\end{equation}
Specifically, when $k=2$, the DPM-Solver-2 sampler is given by
\begin{equation}
    \hatx_{t_{n}}
    = \hatx_{t_{n+1}} + (t_n - t_{n+1}) \bfeps_{\bftheta}(\hatx_{t_{n+1}})
    + \frac{1}{2}(t_n - t_{n+1})^2\frac{\rmd \bfeps_{\bftheta}(\hatx_{t_{n+1}})}{\rmd t},
\end{equation}
The above second-order term in \cite{lu2022dpm} is approximated by 
\begin{equation}
    \frac{\rmd \bfeps_{\bftheta}(\hatx_{t_{n+1}})}{\rmd t} \approx \frac{\bfeps_{\bftheta}(\hatx_{s_n}) - \bfeps_{\bftheta}(\hatx_{t_{n+1}})}{(t_n - t_{n+1}) / 2},
\end{equation}
where $s_n = \sqrt{t_n t_{n+1}}$ and $\hatx_{s_n} = \hatx_{t_{n+1}} + (s_n - t_{n+1}) \bfeps_{\bftheta}(\hatx_{t_{n+1}})$. We have
\begin{equation}
\begin{aligned}
    \hatx_{t_{n}}
    & = \hatx_{t_{n+1}} + (t_n - t_{n+1}) \bfeps_{\bftheta}(\hatx_{t_{n+1}})
    + \frac{1}{2}(t_n - t_{n+1})^2 \frac{\bfeps_{\bftheta}(\hatx_{s_n}) - \bfeps_{\bftheta}(\hatx_{t_{n+1}})}{(t_n - t_{n+1}) / 2} \\
    & = \frac{t_n}{t_{n+1}}\hatx_{t_{n+1}} + \frac{t_{n+1} - t_n}{t_{n+1}} r_{\bftheta}(\hatx_{t_{n+1}}) - \frac{1}{2}\frac{(t_n - t_{n+1})^2}{t_{n+1}} \frac{t_{n+1}}{s_n} \frac{r_{\bftheta}(\hatx_{s_n}) - r_{\bftheta}(\hatx_{t_{n+1}})}{(t_n - t_{n+1}) / 2}\\
    & = \frac{t_n}{t_{n+1}}\hatx_{t_{n+1}} + \frac{t_{n+1} - t_n}{t_{n+1}} 
    \left(
    r_{\bftheta}(\hatx_{t_{n+1}}) +\frac{t_n - t_{n+1}}{2} \frac{t_{n+1}}{s_n} \frac{r_{\bftheta}(\hatx_{s_n}) - r_{\bftheta}(\hatx_{t_{n+1}})}{(t_n - t_{n+1}) / 2}
    \right).
\end{aligned}
\end{equation}
Compared with the generalized denoising output \eqref{eq:second-order}, we have $\frac{\rmd r_{\bftheta}(\hatx_{t_{n+1}})}{\rmd t}\approx \frac{t_{n+1}}{s_n} \frac{r_{\bftheta}(\hatx_{s_n}) - r_{\bftheta}(\hatx_{t_{n+1}})}{(t_n - t_{n+1}) / 2}$.

\begin{table*}[t]
    \caption{Each second-order ODE-based sampler listed below corresponds to a specific finite difference of the denoising trajectory. $\gamma$ denotes a correction coefficient of forward differences. 
    DDIM is a first-order sampler listed for comparison. 
    GENIE trains a neural network to approximate high-order derivatives.
    $r_{\bftheta}(\hatx_{t_{n+2}})$ in S-PNDM and DEIS denotes a previous denoising output. 
    $s_n = \sqrt{t_n t_{n+1}}$ in DPM-Solver-2. 
    $\hatx'_{t_n}$ in EDMs denotes the output of an intermediate Euler step.}
    \label{tab:sampler}
    \begin{center}
    \begin{tabular}{lrr}
    \toprule
    \multicolumn{1}{c}{\bf ODE solver-based samplers} & \multicolumn{1}{c}{$\rmd r_{\bftheta}(\hatx_{t_{n+1}}) / \rmd t$} & \multicolumn{1}{c}{$\gamma$} \\
    \midrule
    DDIM \cite{song2021ddim} & None & None \\
    GENIE \cite{dockhorn2022genie} & Neural Networks & None\\ 
    S-PNDM \cite{liu2022pseudo} & $\gamma\left(r_{\bftheta}(\hatx_{t_{n+1}}) - r_{\bftheta}(\hatx_{t_{n+2}})\right)/(t_n - t_{n+1})$ & 1 \\
    DEIS ($\rho$AB1) \cite{zhang2023deis} & $\gamma\left(r_{\bftheta}(\hatx_{t_{n+1}}) - r_{\bftheta}(\hatx_{t_{n+2}})\right) / (t_{n+1} - t_{n+2})$ & 1\\
    DPM-Solver-2 \cite{lu2022dpm} & $\gamma\left(r_{\bftheta}(\hatx_{s_n}) - r_{\bftheta}(\hatx_{t_{n+1}})\right)/\left((t_n - t_{n+1}) / 2\right)$ & $t_{n+1}/s_n $\\
    EDMs (Heun) \cite{karras2022edm} & $\gamma\left(r_{\bftheta}(\hatx'_{t_n}) - r_{\bftheta}(\hatx_{t_{n+1}})\right) / (t_n - t_{n+1})$ & $t_{n+1}/t_n$\\
    \bottomrule
    \end{tabular}
    \end{center}
\end{table*}

\subsubsection{PNDM \cite{liu2022pseudo}}
Assume that the previous denoising output $r_{\bftheta}(\hatx_{t_{n+2}})$ is available, then one S-PNDM sampler step can be written as
\begin{equation}
\begin{aligned}
    \hatx_{t_{n}}
    & = \hatx_{t_{n+1}} + (t_n - t_{n+1}) \frac{1}{2} \left( 3\bfeps_{\bftheta}(\hatx_{t_{n+1}}) - \bfeps_{\bftheta}(\hatx_{t_{n+2}}) \right) \\
    & = \hatx_{t_{n+1}} + (t_n - t_{n+1}) \bfeps_{\bftheta}(\hatx_{t_{n+1}})
    + \frac{1}{2}(t_n - t_{n+1})^2 \frac{\bfeps_{\bftheta}(\hatx_{t_{n+1}}) - \bfeps_{\bftheta}(\hatx_{t_{n+2}})}{t_n - t_{n+1}} \\
    & = \frac{t_n}{t_{n+1}}\hatx_{t_{n+1}} + \frac{t_{n+1} - t_n}{t_{n+1}} r_{\bftheta}(\hatx_{t_{n+1}}) - \frac{1}{2} \frac{(t_n - t_{n+1})^2}{t_{n+1}} \frac{r_{\bftheta}(\hatx_{t_{n+1}}) - r_{\bftheta}(\hatx_{t_{n+2}})}{t_n - t_{n+1}}\\
    & = \frac{t_n}{t_{n+1}}\hatx_{t_{n+1}} + \frac{t_{n+1} - t_n}{t_{n+1}} 
    \left(
    r_{\bftheta}(\hatx_{t_{n+1}}) + \frac{t_n - t_{n+1}}{2} \frac{r_{\bftheta}(\hatx_{t_{n+1}}) - r_{\bftheta}(\hatx_{t_{n+2}})}{t_n - t_{n+1}}
    \right).
\end{aligned}
\end{equation}
Compared with the generalized denoising output \eqref{eq:second-order}, we have $\frac{\rmd r_{\bftheta}(\hatx_{t_{n+1}})}{\rmd t}\approx \frac{r_{\bftheta}(\hatx_{t_{n+1}}) - r_{\bftheta}(\hatx_{t_{n+2}})}{t_n - t_{n+1}}$.

\subsubsection{DEIS \cite{zhang2023deis}}
In DEIS paper, the solution of PF-ODE in the VE-SDE setting is given by
\begin{equation}
\begin{aligned}
    \hatx_{t_{n}}
    & = \hatx_{t_{n+1}} + \sum_{j=0}^{r} C_{(n+1)j} \bfeps_{\bftheta}(\hatx_{t_{n+1+j}}), \\
    C_{(n+1)j}
    & = \int_{t_{n+1}}^{t_n} \prod_{k \neq j} \frac{\tau - t_{n+1+k}}{t_{n+1+j} - t_{n+1+k}} \rmd \tau.
\end{aligned}
\end{equation}
$r=1$ yields the $\rho$AB1 sampler:
\begin{equation}
\begin{aligned}
    \hatx_{t_{n}}
    & = \hatx_{t_{n+1}} + \bfeps_{\bftheta}(\hatx_{t_{n+1}}) \int_{t_{n+1}}^{t_n} \frac{\tau - t_{n+2}}{t_{n+1} - t_{n+2}} \rmd \tau
    + \bfeps_{\bftheta}(\hatx_{t_{n+2}}) \int_{t_{n+1}}^{t_n} \frac{\tau - t_{n+1}}{t_{n+2} - t_{n+1}} \rmd \tau \\
    & = \hatx_{t_{n+1}} + \bfeps_{\bftheta}(\hatx_{t_{n+1}}) \frac{(t_n - t_{n+2})^2-(t_{n+1} - t_{n+2})^2}{2(t_{n+1} - t_{n+2})}
    + \bfeps_{\bftheta}(\hatx_{t_{n+2}}) \frac{(t_n - t_{n+1})^2}{2(t_{n+2} - t_{n+1})} \\
    & = \hatx_{t_{n+1}} + (t_n - t_{n+1}) \bfeps_{\bftheta}(\hatx_{t_{n+1}})
    + \frac{1}{2}(t_n - t_{n+1})^2 \frac{\bfeps_{\bftheta}(\hatx_{t_{n+1}}) - \bfeps_{\bftheta}(\hatx_{t_{n+2}})}{t_{n+1} - t_{n+2}} \\
    & = \frac{t_n}{t_{n+1}}\hatx_{t_{n+1}} + \frac{t_{n+1} - t_n}{t_{n+1}} r_{\bftheta}(\hatx_{t_{n+1}}) - \frac{1}{2} \frac{(t_n - t_{n+1})^2}{t_{n+1}} \frac{r_{\bftheta}(\hatx_{t_{n+1}}) - r_{\bftheta}(\hatx_{t_{n+2}})}{t_{n+1} - t_{n+2}}\\
    & = \frac{t_n}{t_{n+1}}\hatx_{t_{n+1}} + \frac{t_{n+1} - t_n}{t_{n+1}} 
    \left(
    r_{\bftheta}(\hatx_{t_{n+1}}) + \frac{t_n - t_{n+1}}{2} \frac{r_{\bftheta}(\hatx_{t_{n+1}}) - r_{\bftheta}(\hatx_{t_{n+2}})}{t_{n+1} - t_{n+2}}
    \right).
\end{aligned}
\end{equation}
Compared with the generalized denoising output \eqref{eq:second-order}, we have $\frac{\rmd r_{\bftheta}(\hatx_{t_{n+1}})}{\rmd t}\approx \frac{r_{\bftheta}(\hatx_{t_{n+1}}) - r_{\bftheta}(\hatx_{t_{n+2}})}{t_{n+1} - t_{n+2}}$.

\subsection{Theoretical Analysis of the Trajectory Structure}

\subsubsection{The Optimal Denoising Output}
\label{subsubsec:optimal_denoising_output}

\begin{lemma}
	\label{lemma:dae}
	The optimal estimator $r_{\bftheta}^{\star}\left(\bfx_t; \sigma_t\right)$, also known as Bayesian least squares estimator, of the minimization of denoising autoencoder (DAE) objective is the conditional expectation $\bbE\left(\bfx_0|\bfx_t\right)$
	\begin{equation}
		\label{eq:app_dae}
		\begin{aligned}
			\calL_{\DAE} 
            =\bbE_{\bfx_0 \sim p_d(\bfx_0)} \bbE_{\bfx_t \sim p_{0t}(\bfx_t|\bfx_0)} \lVert r_{\bftheta}\left(\bfx_t; \sigma_t\right) - \bfx_0  \rVert^2_2
            =\int{p_t(\bfx_t) p_{t0}(\bfx_0|\bfx_t)} \lVert r_{\bftheta}\left(\bfx_t; \sigma_t\right) - \bfx_0 \rVert^2_2.
		\end{aligned}
	\end{equation} 
\end{lemma}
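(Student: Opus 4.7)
The plan is to reduce this to the classical fact that the conditional mean is the Bayes optimal estimator under squared loss, which follows once the double expectation is rewritten as a pointwise minimization in $\bfx_t$. First, I would apply Bayes' rule to swap the order of conditioning, $p_d(\bfx_0)\,p_{0t}(\bfx_t|\bfx_0) = p_t(\bfx_t)\,p_{t0}(\bfx_0|\bfx_t)$ (already displayed in the statement), and invoke Fubini to write
\begin{equation*}
\calL_{\DAE} = \int p_t(\bfx_t)\left[\int p_{t0}(\bfx_0|\bfx_t)\,\lVert r_{\bftheta}(\bfx_t;\sigma_t) - \bfx_0\rVert_2^2\,\rmd\bfx_0\right]\rmd\bfx_t.
\end{equation*}
Since $p_t(\bfx_t)\geq 0$ and $r_{\bftheta}(\bfx_t;\sigma_t)$ depends only on $\bfx_t$, it suffices to minimize the inner integral pointwise for each $\bfx_t$.

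For the pointwise problem I would use the standard bias-variance decomposition. Adding and subtracting $\bbE(\bfx_0|\bfx_t)$ inside the norm gives
\begin{equation*}
\lVert r_{\bftheta} - \bfx_0\rVert_2^2 = \lVert r_{\bftheta} - \bbE(\bfx_0|\bfx_t)\rVert_2^2 + 2\bigl\langle r_{\bftheta} - \bbE(\bfx_0|\bfx_t),\,\bbE(\bfx_0|\bfx_t) - \bfx_0\bigr\rangle + \lVert \bbE(\bfx_0|\bfx_t) - \bfx_0\rVert_2^2.
\end{equation*}
Taking $\int p_{t0}(\bfx_0|\bfx_t)[\,\cdot\,]\,\rmd\bfx_0$, the cross term vanishes because $\int p_{t0}(\bfx_0|\bfx_t)(\bbE(\bfx_0|\bfx_t) - \bfx_0)\,\rmd\bfx_0 = 0$ by the definition of conditional expectation, while the third term is independent of $r_{\bftheta}$. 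Hence the inner integral equals $\lVert r_{\bftheta}(\bfx_t;\sigma_t) - \bbE(\bfx_0|\bfx_t)\rVert_2^2$ plus a constant (in $r_{\bftheta}$), and is uniquely minimized by $r_{\bftheta}^{\star}(\bfx_t;\sigma_t) = \bbE(\bfx_0|\bfx_t)$. An alternative, equally short route is to take the Gateaux derivative of the inner integral with respect to $r_{\bftheta}$ and set it to zero, yielding the same identity directly.

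This proof is essentially a textbook calculation, so no step poses a genuine obstacle; the main thing to be careful about is simply that the optimum is taken over measurable functions of $\bfx_t$ with sufficient capacity, so that $r_{\bftheta}$ can in principle realize the conditional expectation, and that the resulting equality $r_{\bftheta}^{\star}(\bfx_t;\sigma_t) = \bbE(\bfx_0|\bfx_t)$ holds for $p_t$-almost every $\bfx_t$. This regularity is enough to subsequently combine with Lemma~\ref{lemma:posterior} and recover the score-denoiser identity $r_{\bftheta}^{\star}(\bfx_t;\sigma_t) = \bfx_t + \sigma_t^2\,\nablaxt\log p_t(\bfx_t)$ used throughout the paper.
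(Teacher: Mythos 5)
Your proof is correct. The paper's own proof is a one-liner: it sets the (functional) derivative of $\calL_{\DAE}$ with respect to $r_{\bftheta}(\bfx_t;\sigma_t)$ to zero and reads off $r_{\bftheta}^{\star}(\bfx_t;\sigma_t)=\bbE(\bfx_0|\bfx_t)$, i.e.\ exactly the Gateaux-derivative route you mention as your alternative. Your primary argument instead completes the square: after Bayes and Fubini reduce the problem to a pointwise minimization in $\bfx_t$, you add and subtract $\bbE(\bfx_0|\bfx_t)$, observe the cross term integrates to zero, and conclude that the inner integral is $\lVert r_{\bftheta}-\bbE(\bfx_0|\bfx_t)\rVert_2^2$ plus a constant. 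Both are textbook and lead to the same place; the bias--variance decomposition has the small advantage of exhibiting the minimizer's uniqueness directly without invoking convexity or second-order conditions, and of avoiding any appeal to a well-defined derivative of the population loss. Your closing remark about the optimum being taken over measurable functions with sufficient capacity, and holding $p_t$-a.e., is the right caveat and is implicitly assumed by the paper as well.
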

\begin{proof}
	The solution can be easily obtained by setting the derivative of $\calL_{\DAE}$ equal to zero.
\end{proof}
Suppose we have a training dataset $\calD=\{\bfy_i\}_{i=1}^{|\calI|}$ where each data $\bfy_i$ is sampled from an unknown data distribution $p_d$. 
The empirical data distribution $\hat{p}_{d}$ is denoted as a summation of multiple \textit{Dirac delta functions} (\textit{a.k.a} a mixture of Gaussian distribution): $\hat{p}_{d}(\bfy)=\frac{1}{|\calI|}\sum_{i=1}^{|\calI|}\delta(\lVert \bfy - \bfy_i\rVert)$, and the Gaussian kernel density estimate (KDE) is 
\begin{equation}
    \hat{p}_t(\bfx_t)=\int p_{0t}(\bfx_t|\bfy)\hat{p}_{d}(\bfy) = \frac{1}{|\calI|}\sum_{i}\calN\left(\bfx_t;\bfy_i, \sigma_t^2\bfI\right).
\end{equation}
\begin{proposition}
The optimal denoising output of training a denoising autoencoder with the empirical data distribution is a convex combination of original data points, where each weight $u_i$ is calculated based on the time-scaled and normalized $\ell_2$ distance between the input $\bfx_t$ and $\bfy_i$ belonging to the dataset $\calD$:
\begin{equation}
	\label{eq:app_optimal}
	\begin{aligned}
		r_{\bftheta}^{\star}(\bfx_t; \sigma_t)
		&= \min_{r_{\bftheta}} \bbE_{\bfy \sim \hat{p}_d} \bbE_{\bfx_t \sim p_{0t}(\bfx_t|\bfy)} \lVert r_{\bftheta}(\bfx_t; \sigma_t) - \bfy  \rVert^2_2
		= \sum_{i} \frac{\exp \left(-\lVert \bfx_t - \bfy_i \rVert^2_2/2\sigma_t^2\right)}{\sum_{j}\exp \left(-\lVert \bfx_t - \bfy_j \rVert^2_2/2\sigma_t^2\right)} \bfy_i=\sum_{i} u_i \bfy_i,
	\end{aligned}
\end{equation}
with the coefficients satisfying $\sum_{i} u_i =1$.
\end{proposition}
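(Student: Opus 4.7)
The plan is to invoke Lemma~\ref{lemma:dae} to reduce the optimization problem to computing a conditional expectation, and then to evaluate that expectation explicitly using Bayes' rule with the empirical (Dirac-mixture) prior $\hat{p}_d$ and the Gaussian transition kernel $p_{0t}(\bfx_t \mid \bfy) = \calN(\bfx_t; \bfy, \sigma_t^2 \bfI)$.

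First, I would apply Lemma~\ref{lemma:dae} directly in the empirical setting, substituting $\hat{p}_d$ for $p_d$ in the DAE objective \eqref{eq:app_dae}. The pointwise minimizer is then
\[
r_{\bftheta}^{\star}(\bfx_t; \sigma_t) \;=\; \bbE_{\hat{p}_{t0}(\bfy \mid \bfx_t)}[\bfy],
\]
where the posterior is determined by Bayes' rule: $\hat{p}_{t0}(\bfy \mid \bfx_t) \propto p_{0t}(\bfx_t \mid \bfy)\,\hat{p}_d(\bfy)$. The next step is to substitute the Gaussian kernel
\[
p_{0t}(\bfx_t \mid \bfy) \;=\; (2\pi\sigma_t^2)^{-d/2}\exp\!\left(-\tfrac{1}{2\sigma_t^2}\lVert \bfx_t - \bfy\rVert_2^2\right),
\]
and the Dirac-mixture prior $\hat{p}_d(\bfy)=\tfrac{1}{|\calI|}\sum_i \delta(\lVert \bfy-\bfy_i\rVert)$. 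The normalization constants $(2\pi\sigma_t^2)^{-d/2}$ and $1/|\calI|$ cancel between numerator and denominator, and the sifting property of the delta function collapses the integral over $\bfy$ into a discrete sum over the data points $\{\bfy_i\}$.

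Concretely, writing $w_i(\bfx_t) \coloneqq \exp(-\lVert\bfx_t-\bfy_i\rVert_2^2/2\sigma_t^2)$, the posterior becomes the discrete distribution $\hat{p}_{t0}(\bfy_i \mid \bfx_t) = u_i$ with $u_i = w_i(\bfx_t)/\sum_j w_j(\bfx_t)$. Taking expectations term-by-term gives
\[
\bbE_{\hat{p}_{t0}(\bfy \mid \bfx_t)}[\bfy] \;=\; \sum_i u_i\,\bfy_i,
\]
which is precisely \eqref{eq:app_optimal}. The constraint $\sum_i u_i = 1$ holds by construction since $u_i$ is a softmax over the normalized-distance logits, making $r_{\bftheta}^{\star}$ a convex combination of data points.

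There is no serious obstacle here; the only mildly delicate point is the rigorous handling of the Dirac measure under the integral, which is standard once one views $\hat{p}_d$ as an empirical measure $\tfrac{1}{|\calI|}\sum_i \delta_{\bfy_i}$ and replaces the integral $\int (\cdot)\hat{p}_d(\bfy)\,\rmd\bfy$ by the finite sum $\tfrac{1}{|\calI|}\sum_i (\cdot)\big|_{\bfy=\bfy_i}$. Everything else is cancellation of common factors in the softmax.
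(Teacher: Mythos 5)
Your proof is correct. Both you and the paper begin by invoking Lemma~\ref{lemma:dae} to reduce the problem to computing the posterior mean $\bbE[\bfy \mid \bfx_t]$, but the routes to that expectation differ. The paper proceeds indirectly via Tweedie's formula (Lemma~\ref{lemma:posterior}): it writes $\bbE[\bfy\mid\bfx_t]=\bfx_t+\sigma_t^2\nablaxt\log\hat{p}_t(\bfx_t)$, then differentiates the Gaussian KDE $\hat{p}_t(\bfx_t)=\tfrac{1}{|\calI|}\sum_i\calN(\bfx_t;\bfy_i,\sigma_t^2\bfI)$ and watches the $\bfx_t$ terms cancel to leave the softmax. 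You instead compute the posterior $\hat{p}_{t0}(\bfy\mid\bfx_t)$ directly by Bayes' rule against the Dirac-mixture prior, use the sifting property to reduce the normalizing integral to a finite sum of Gaussian kernel evaluations, and then take the expectation term by term. Your route is more elementary and self-contained — it never touches the score function — and the cancellation that produces a convex combination is structurally transparent from the start rather than emerging after the $\bfx_t$ terms drop out. The paper's route pays the small price of the score-function detour but has the advantage of reusing a lemma already established for the general setting and of exhibiting the annealed-mean-shift connection (the score-gradient view is precisely what makes the mean-shift analogy in Section~\ref{subsubsec:meanshift} visible). Both are rigorous and arrive at identically the stated formula with $\sum_i u_i=1$.
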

This equation appears to be highly similar to the iterative formula used in mean shift \cite{fukunaga1975estimation,cheng1995mean,comaniciu2002mean,yamasaki2020mean}, especially annealed mean shift \cite{shen2005annealedms}. 
\begin{proof}
    
Based on Lemmas~\ref{lemma:posterior} and~\ref{lemma:dae}, the optimal denoising output is
\begin{equation}
    \begin{aligned}
        r_{\bftheta}^{\star}\left(\bfx_t; \sigma_t\right) =
        \bbE\left(\bfy|\bfx_t\right)&=\bfx_t + \sigma_t^2\nablaxt \log \hat{p}_t(\bfx_t)\\
        &=\bfx_t + \sigma_t^2 \sum_{i}\frac{\nablaxt\calN\left(\bfx_t;\bfy_i, \sigma_t^2\bfI\right)}{\sum_{j}\calN\left(\bfx_t;\bfy_j, \sigma_t^2\bfI\right)}\\
        &=\bfx_t + \sigma_t^2 \sum_{i}\frac{\calN\left(\bfx_t;\bfy_i, \sigma_t^2\bfI\right)}{\sum_{j}\calN\left(\bfx_t;\bfy_j, \sigma_t^2\bfI\right)}\left( \frac{\bfy_i-\bfx_t}{\sigma_t^2}\right)\\ 
        &=\bfx_t + \sum_{i}\frac{\calN\left(\bfx_t;\bfy_i, \sigma_t^2\bfI\right)}{\sum_{j}\calN\left(\bfx_t;\bfy_j, \sigma_t^2\bfI\right)}\left(\bfy_i-\bfx_t\right)\\ 
        &=\sum_{i}\frac{\calN\left(\bfx_t;\bfy_i, \sigma_t^2\bfI\right)}{\sum_{j}\calN\left(\bfx_t;\bfy_j, \sigma_t^2\bfI\right)}\bfy_i\\ 
        &=\sum_{i} \frac{\exp \left(-\lVert \bfx_t - \bfy_i \rVert^2_2/2\sigma_t^2\right)}{\sum_{j}\exp \left(-\lVert \bfx_t - \bfy_j \rVert^2_2/2\sigma_t^2\right)} \bfy_i.
    \end{aligned}
\end{equation}
\end{proof}

\subsubsection{Theoretical Connection to Mean Shift}
\label{subsubsec:meanshift}
Mean shift is a well-known non-parametric algorithm designed to seek modes of a density function, typical a KDE, via iteratively gradient ascent with adaptive step sizes. 
Given a current position $\bfx$, mean shift with a Gaussian kernel and bandwidth $h$ iteratively adds a vector $\bfm(\bfx)-\bfx$, which points toward the maximum increase in the kernel density estimate $p_h(\bfx)=\frac{1}{|\calI|}\sum_{i=1}^{|\calI|}\calN(\bfx; \bfy_i, h^2\bfI)$, to itself, \ie, $\bfx \leftarrow \left[\bfm(\bfx)-\bfx\right] + \bfx$. The \textit{mean vector} is
\begin{equation}
	\label{eq:mean_shift}
	\bfm (\bfx, h)=
	\sum_{i} v_i \bfy_i = \sum_{i} \frac{\exp \left(-\lVert \bfx - \bfy_i \rVert^2_2/2h^2\right)}{\sum_{j}\exp \left(-\lVert \bfx - \bfy_j \rVert^2_2/2h^2\right)} \bfy_i,
\end{equation}
with the coefficients satisfying $\sum_{i} v_i =1$. 
As a mode-seeking algorithm, mean shift has shown particularly successful in clustering \cite{cheng1995mean,carreira2015review}, image segmentation \cite{comaniciu1999mean,comaniciu2002mean} and video tracking \cite{comaniciu2000real,comaniciu2003kernel}. 
By treating the bandwidth $\sigma_t$ in \eqref{eq:app_optimal} as the bandwidth $h$ in \eqref{eq:mean_shift}, we build a connection between the optimal denoising output of a diffusion model and annealed mean shift under the KDE-based data modeling.
\subsubsection{The Constant Magnitude of Vector Field and Sampling Trajectory Length}
\label{subsubsec:rotation}
\begin{lemma}[see Section 3.1 in \cite{vershynin2018high}]
    \label{lemma:concentration}
    Given a high-dimensional isotropic Gaussian noise $\bfz \sim \calN(\mathbf{0} ; {\sigma^2\bfI_d})$, $\sigma>0$, we have $\bbE \left\lVert \bfz \right\rVert^2=\sigma^2 d$, and with high probability, $\bfz$ stays within a ``thin spherical shell'': $\lVert \bfz \rVert = \sigma \sqrt{d} \pm O(1)$. 
\end{lemma}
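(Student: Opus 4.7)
The plan is to split the lemma into its two assertions and handle them in sequence: first the exact second-moment identity, then the high-probability thin-shell concentration.

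For the expectation part, I would simply write $\lVert \bfz \rVert^2 = \sum_{i=1}^d z_i^2$ with independent coordinates $z_i \sim \calN(0,\sigma^2)$. Linearity of expectation combined with $\bbE[z_i^2] = \sigma^2$ gives $\bbE \lVert \bfz \rVert^2 = \sigma^2 d$ in one line. This is the easy half.

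For the concentration part, the plan is to apply the Gaussian concentration of Lipschitz functions (Borell–TIS) to $f(\bfz) = \lVert \bfz \rVert$. The map $f$ is $1$-Lipschitz by the reverse triangle inequality, so applying the inequality to the standard Gaussian $\bfz/\sigma$ and rescaling yields
\begin{equation*}
\bbP\bigl(\lvert \lVert \bfz \rVert - \bbE \lVert \bfz \rVert \rvert > t\bigr) \leq 2\exp\!\bigl(-t^2/(2\sigma^2)\bigr).
\end{equation*}
It then remains to locate $\bbE \lVert \bfz \rVert$ near $\sigma\sqrt{d}$. The upper bound $\bbE \lVert \bfz \rVert \leq \sqrt{\bbE \lVert \bfz \rVert^2} = \sigma\sqrt{d}$ follows from Jensen. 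For the matching lower bound, I would use the decomposition $\bbE \lVert \bfz \rVert^2 = (\bbE \lVert \bfz \rVert)^2 + \mathrm{Var}(\lVert \bfz \rVert)$ together with the variance bound $\mathrm{Var}(\lVert \bfz \rVert) = O(\sigma^2)$ obtained by integrating the tail inequality above. This gives $\bbE \lVert \bfz \rVert \geq \sigma\sqrt{d} - O(\sigma/\sqrt{d})$. Combining the two and picking $t$ a moderate multiple of $\sigma$ yields $\lVert \bfz \rVert = \sigma\sqrt{d} \pm O(1)$ with probability exponentially close to one, which is exactly the thin-shell statement.

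The main subtlety is the apparent circularity between bounding $\mathrm{Var}(\lVert \bfz \rVert)$ and locating $\bbE \lVert \bfz \rVert$. This is harmless: the variance bound is a direct consequence of the Lipschitz concentration inequality and does not require a priori knowledge of the mean, since for any random variable $Y$ satisfying a sub-Gaussian tail $\bbP(|Y - \bbE Y| > t) \leq 2e^{-t^2/(2\sigma^2)}$, the identity $\mathrm{Var}(Y) = \int_0^\infty 2t \bbP(|Y - \bbE Y| > t)\, \mathrm{d}t$ gives $\mathrm{Var}(Y) = O(\sigma^2)$ immediately. An alternative route would bypass Borell–TIS and instead invoke a chi-square tail bound (Laurent–Massart) on $\lVert \bfz \rVert^2$, followed by a Taylor expansion of the square root; this gives the same conclusion but is less streamlined.
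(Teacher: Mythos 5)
Your proposal is correct, but it takes a different route from the paper. The paper's argument is a bare second-moment computation on $\lVert \bfz \rVert^2$: it computes $\bbE\lVert\bfz\rVert^2 = \sigma^2 d$ and $\bbV[\lVert\bfz\rVert^2] = 2d\sigma^4$ from the fourth moment of a Gaussian coordinate, declares that $\lVert\bfz\rVert^2 = \sigma^2 d \pm \sigma^2\sqrt{2d}$ "with high probability" (implicitly Chebyshev within one standard deviation), and then takes square roots, $\sqrt{\sigma^2 d \pm O(\sqrt{d})} = \sigma\sqrt{d}\pm O(1)$ --- essentially the "alternative route" you sketch at the end, minus the sharp chi-square tail. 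You instead apply Gaussian Lipschitz concentration to $\lVert\bfz\rVert$ itself and then locate $\bbE\lVert\bfz\rVert$ via Jensen plus the variance identity; your handling of the apparent circularity (the sub-Gaussian tail around the mean bounds the variance without knowing the mean) is sound, and this is in fact the argument in the cited reference (Vershynin, Theorem 3.1.1). What each buys: the paper's computation is fully elementary and self-contained but only yields a polynomial (Chebyshev-type) probability guarantee, and its one-standard-deviation statement is not literally "high probability" without widening the window by a constant factor; your route requires importing Borell--TIS but delivers a genuinely exponential tail, $1 - 2e^{-ct^2/\sigma^2}$, which is the stronger and more standard form of the thin-shell phenomenon. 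Either suffices for the informal way the lemma is used downstream (Remark~\ref{remark:eps_norm}).
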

\begin{proof}
	We denote $\bfz_i$ as the $i$-th dimension of random variable $\bfz$, then the expectation and variance is $\bbE \left[\bfz_i\right]=0$, $\bbV \left[\bfz_i\right]=\sigma^2$, respectively. The fourth central moment is $\bbE\left[ \bfz_i^4\right] = 3{\sigma ^4}$. Additionally, 
    \begin{equation}
        \begin{aligned}
            \bbE \left[\bfz_i^2\right]=\bbV \left[\bfz_i\right]+\bbE\left[\bfz_i\right]^2
            &=\sigma^2,\qquad
            \bbE \left[\lVert \bfz \rVert^2\right]=\bbE \left[\sum_{i=1}^{d}\bfz_i^2\right]=\sum_{i=1}^{d}\bbE \left[\bfz_i^2\right]=\sigma^2 d, \\
            \bbV\left[{\lVert \bfz  \rVert^2} \right] 
            &= \mathbb{E} \left[ {\lVert \bfz  \rVert}^4\right] - \left(\bbE\left[
    		  \lVert \bfz  \rVert^2 \right] \right)^2 
    		= 2d \sigma ^4,
            \end{aligned}
    \end{equation}
	Then, we have 
	\begin{equation}
		\begin{aligned}
		\bbE \left[ 
		\lVert \bfx + \bfz \rVert^2 - 
		\lVert \bfx \rVert^2
		\right] 
		& = \bbE \left[ 
		\lVert \bfz \rVert^2 + 2 \bfx^T\bfz 
		\right]
        =
        \bbE \left[ 
		\lVert \bfz \rVert^2
		\right]=\sigma^2 d. 
	\end{aligned}	
	\end{equation}

Furthermore, the standard deviation of $\lVert \bfz  \rVert^2$ is ${\sigma ^2}\sqrt {2d}$, which means
\begin{equation}
	\begin{aligned}
		\lVert \bfz  \rVert^2 = {\sigma ^2}d \pm {\sigma ^2}\sqrt {2d}  = {\sigma ^2}d \pm O\left( \sqrt{d} \right), \qquad
		\lVert \bfz \rVert \hspace{0.4em} = \sigma \sqrt{d} \pm O\left( 1 \right),
	\end{aligned}
\end{equation}
holds with high probability.
\end{proof}
We denote the optimal denoising output in the KDE-based data modeling as $r_{\bftheta}^{\star}(\bfx_t; \sigma_t)$ (see Section~\ref{subsubsec:optimal_denoising_output}). In this case, the optimal noise prediction is denoted as $\bfeps_{\bftheta}^{\star}(\bfx_t; \sigma_t)=\frac{\bfx_t - r_{\bftheta}^{\star}(\bfx_t)}{\sigma_t}$, and the optimal empirical PF-ODE in \eqref{eq:epf_ode} becomes 
\begin{equation}
    \rmd \bfx_t = \bfeps_{\bftheta}^{\star}(\bfx_t; \sigma_t) \rmd \sigma_t.
\end{equation}
\begin{remark}
    \label{remark:eps_norm}
    Intriguingly, the magnitude of $\bfeps_{\bftheta}^{\star}(\bfx_t; \sigma_t)$ approximately distributes around $\sqrt{d}$. The total trajectory length approximately equals $\sigma_T \sqrt{d}$, where $d$ denotes the data dimension.   
\end{remark}
We next provide a sketch of proof. 
Suppose the data distribution lies in a smooth real low-dimensional manifold with the intrinsic dimension as $m$. According to the \textit{Whitney embedding theorem}~\cite{whitney1936differentiable}, it can be smoothly embedded in a real $2m$ Euclidean space. We then decompose each $\bfeps_{\bftheta}^{\star}\in \bbR^{d}$ vector as $\bfeps_{\bftheta, \parallel}^{\star}$ and $\bfeps_{\bftheta, \perp}^{\star}$, which are parallel and perpendicular to the $2m$ Euclidean space, respectively. Therefore, we have $\lVert \bfeps_{\bftheta}^{\star} \rVert_2=\lVert \bfeps_{\bftheta, \parallel}^{\star}+\bfeps_{\bftheta, \perp}^{\star}\rVert_2
\geq\lVert\bfeps_{\bftheta, \perp}^{\star}\rVert_2\approx\sqrt{d-2m}$. 

We provide a upper bound for the $\lVert \bfeps_{\bftheta}^{\star} \rVert_2$ below
\begin{equation}
    \begin{aligned}
        \bbE_{p_t(\bfx_t)}\lVert \bfeps_{\bftheta}^{\star} \rVert_2
        &=\bbE_{p_t(\bfx_t)}\lVert \frac{\bfx_t - r_{\bftheta}^{\star}(\bfx_t)}{\sigma_t} \rVert_2
        =\bbE_{p_t(\bfx_t)}\lVert \frac{\bfx_t - \bbE(\bfx_0|\bfx_t)}{\sigma_t} \rVert_2\\
        &=\bbE_{p_t(\bfx_t)}\lVert \bbE(\frac{\bfx_t - \bfx_0}{\sigma_t}|\bfx_t) \rVert_2
        =\bbE_{p_t(\bfx_t)}\lVert \bbE_{p_{t0}(\bfx_0|\bfx_t)}\bfeps\rVert_2\\
        &\leq\bbE_{p_t(\bfx_t)}\bbE_{p_{t0}(\bfx_0|\bfx_t)} \lVert \bfeps\rVert_2\\
        &\leq\bbE_{p_0(\bfx_0)}\bbE_{p_{0t}(\bfx_t|\bfx_0)} \lVert \bfeps\rVert_2\\
        &\approx \sqrt{d} \qquad\qquad (\text{concentration of measure, Lemma~\ref{lemma:concentration}}). 
    \end{aligned}
\end{equation}
Additionally, the variance of $\lVert \bfeps_{\bftheta}^{\star} \rVert_2$ is relatively small.
\begin{equation}
    \begin{aligned}
        \mathrm{Var}_{p_t(\bfx_t)}\lVert \bfeps_{\bftheta}^{\star} \rVert_2
        &=\mathrm{Var}_{p_t(\bfx_t)}\lVert \bbE_{p_{t0}(\bfx_0|\bfx_t)}\bfeps\rVert_2
        = \bbE_{p_t(\bfx_t)}\lVert \bbE_{p_{t0}(\bfx_0|\bfx_t)}\bfeps\rVert_2^2 - \left[\bbE_{p_t(\bfx_t)}\lVert \bbE_{p_{t0}(\bfx_0|\bfx_t)}\bfeps\rVert_2\right]^2\\
        &\leq \bbE_{p_t(\bfx_t)}\bbE_{p_{t0}(\bfx_0|\bfx_t)}\lVert \bfeps\rVert_2^2 - (d-2m)
        = \bbE_{p_0(\bfx_0)}\bbE_{p_{0t}(\bfx_0|\bfx_t)}\lVert \bfeps\rVert_2^2 - (d-2m)\\
        &= d - (d-2m)\\
        &= 2m
    \end{aligned}
\end{equation}
Therefore, the standard deviation of $\lVert \bfeps_{\bftheta}^{\star} \rVert_2$ is upper bounded by $\sqrt{2m}$.
Since $d\gg m$, we can conclude that in the optimal case, the magnitude of vector field is approximately constant, \ie, $\lVert \bfeps_{\bftheta}^{\star} \rVert_2\approx \sqrt{d}$.

The total sampling trajectory length is
$\sum_{n=0}^{N-1} (\sigma_{t_{n+1}} - \sigma_{t_{n}}) \lVert \bfeps_{\bftheta}^{\star}(\bfx_{t_{n+1}}) \rVert_2\approx \sigma_{T} \sqrt{d}$. Therefore, in the optimal case, we have $\lVert r_{\bftheta}(\hatx_{t_{n+1}}) - \hatx_{t_{n}}\rVert_2 = (\sigma_{t_n}/\sigma_{t_{n+1}})\lVert r_{\bftheta}(\hatx_{t_{n+1}}) - \hatx_{t_{n+1}}\rVert_2=\sigma_{t_{n}}\lVert\bfeps_{\bftheta}^{\star}(\hatx_{t_{n+1}})\rVert_2\approx\sigma_{t_{n}}\lVert\bfeps_{\bftheta}^{\star}(\hatx_{t_{n}})\rVert_2=\lVert r_{\bftheta}(\hatx_{t_{n}}) - \hatx_{t_{n}}\rVert_2$. In this scenario, the denoising output $r_{\bftheta}(\hatx_{t_{n+1}})$ appears to be oscillating toward $r_{\bftheta}(\hatx_{t_{n}})$ around $\hatx_{t_n}$, akin to a simple gravity pendulum~\cite{young1996university}. The length of this pendulum effectively shortens by the coefficient $\sigma_{t_{n}}/\sigma_{t_{n+1}}$, starting from roughly $\sigma_T\sqrt{d}$. This specific structure is common to all sampling trajectories. 
Empirical verification of the constant magnitude of vector field is illustrated in Figure~\ref{fig:traj_len} and the trajectory length is illustrated in Figure~\ref{fig:cos_and_cumu}. 

\begin{figure*}[t]
    \centering
    \begin{subfigure}[t]{0.45\textwidth}
        \centering
        \includegraphics[width=\columnwidth]{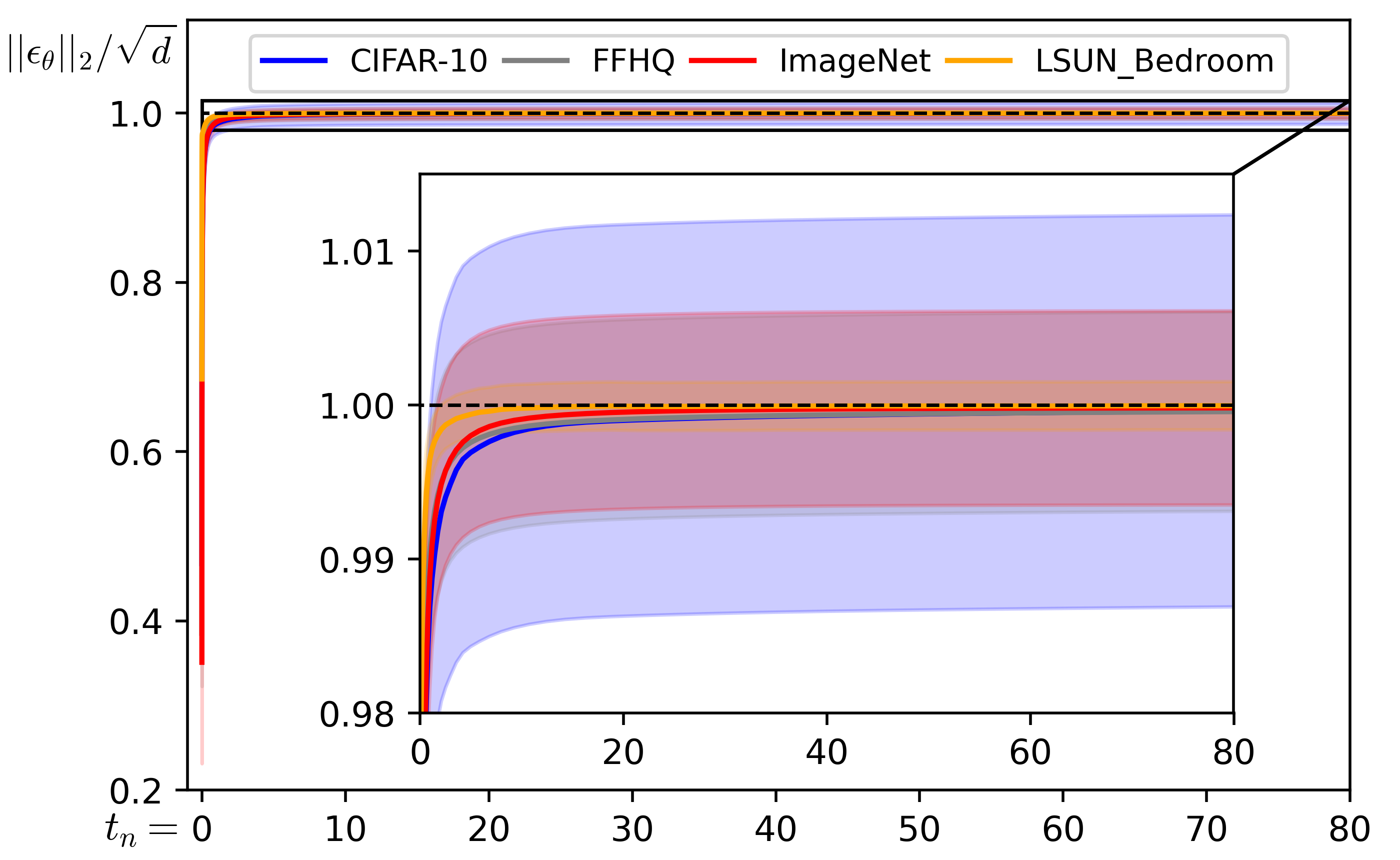}
        \caption{The $L^2$ norm of $\bfeps_{\bftheta}$.}
        \label{fig:eps}
    \end{subfigure}
    \quad
    \begin{subfigure}[t]{0.45\textwidth}
        \centering
        \includegraphics[width=\columnwidth]{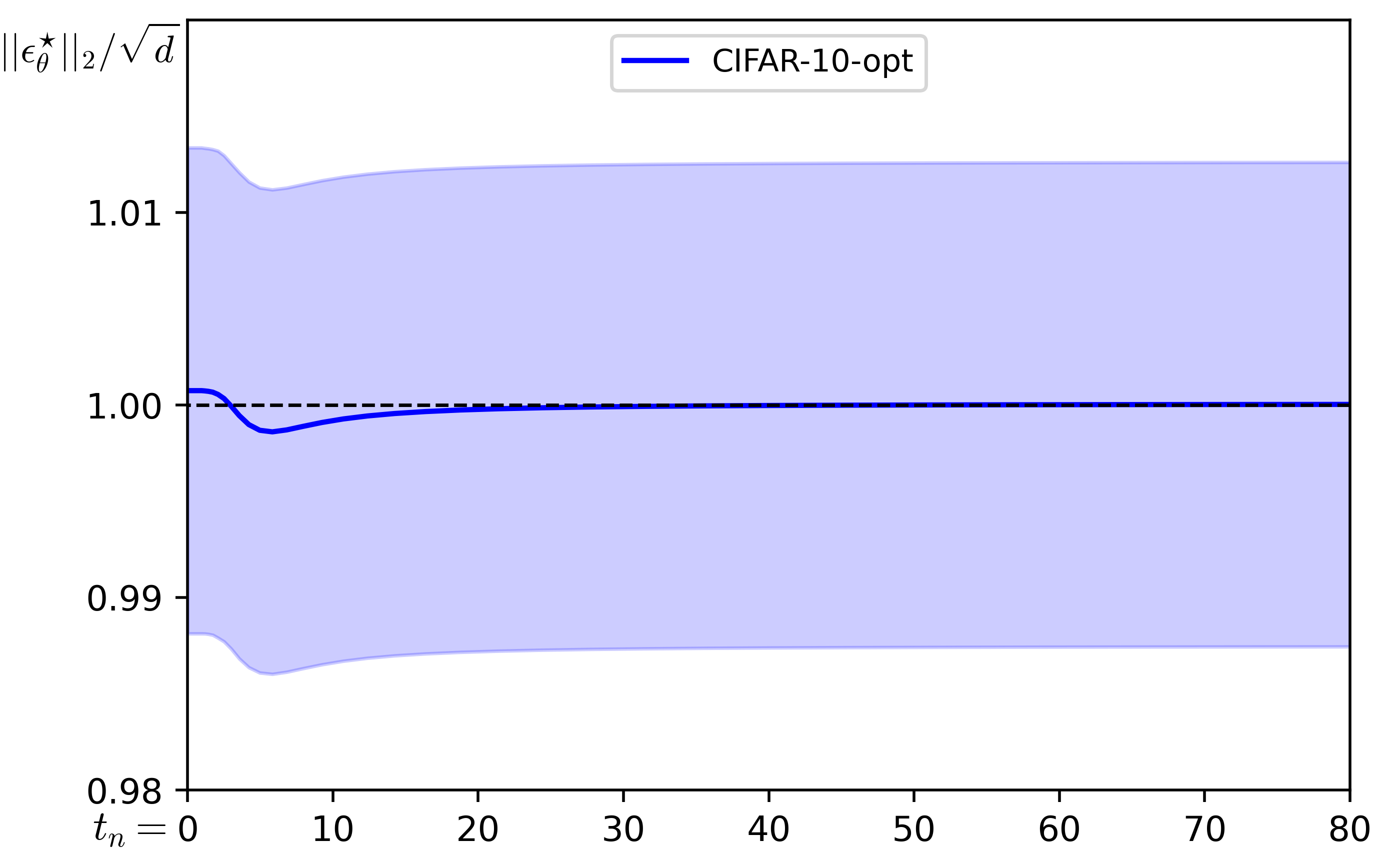}
        \caption{The $L^2$ norm of $\bfeps_{\bftheta}^{\star}$.}
        \label{fig:eps_opt}
    \end{subfigure}
    \caption{The optimal noise prediction $\lVert \bfeps_{\bftheta}^{\star}\rVert_2\approx \sqrt{d}$ in the whole sampling process, as the theoretical results guarantee. The actual noise prediction $\lVert \bfeps_{\bftheta}\rVert_2\approx \sqrt{d}$ in the most timestamps, but shrinks in the final stage (when the timestamp is very close to zero). Such norm shrinkage almost does not affect the trajectory length as the discretized time steps are very small in the final stage.}
    \label{fig:traj_len}
\end{figure*}

\begin{figure*}[t]
    \centering
    \begin{subfigure}[t]{0.45\textwidth}
        \centering
        \includegraphics[width=\columnwidth]{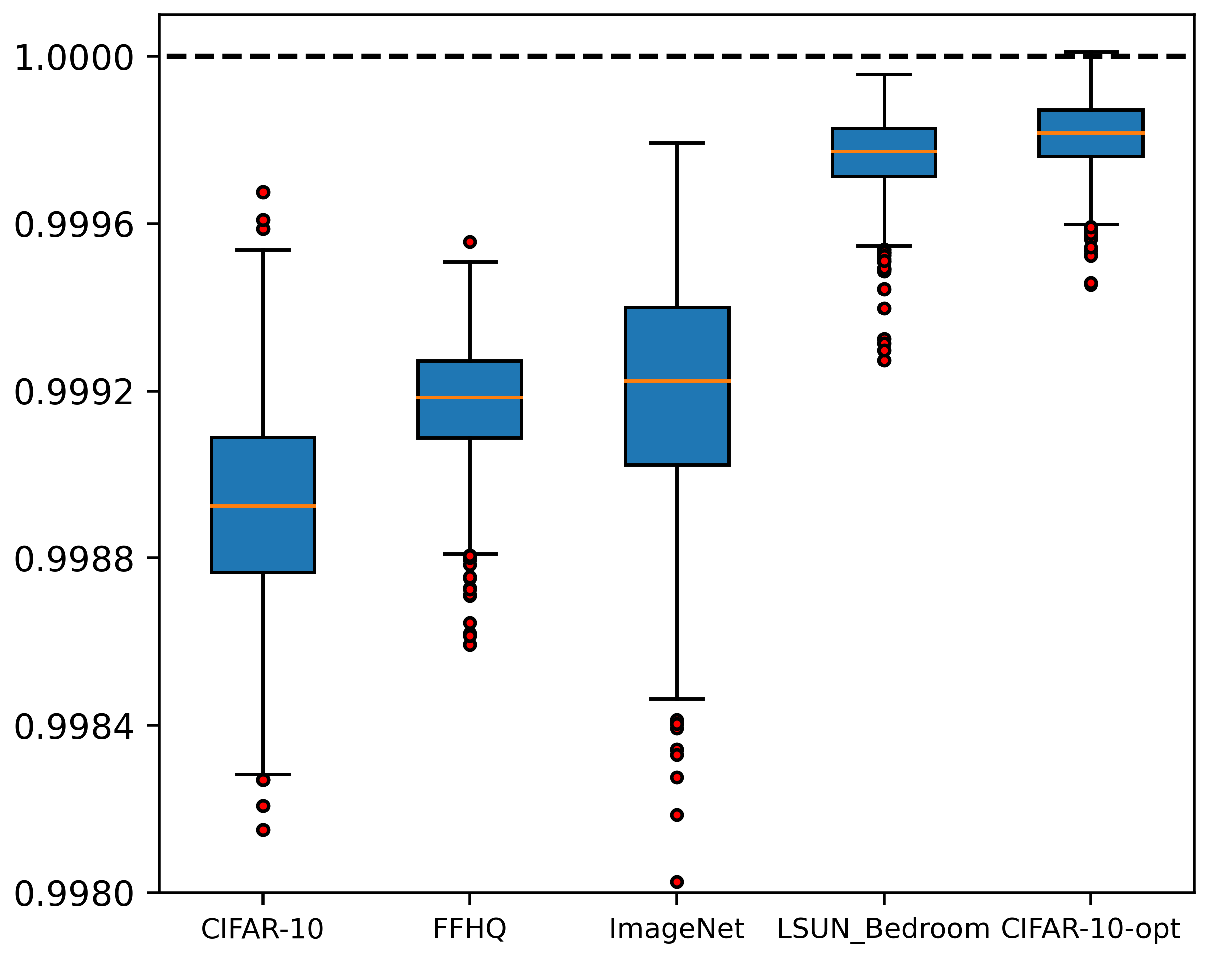}
        \label{fig:cumu}
    \end{subfigure}
    \begin{subfigure}[t]{0.51\textwidth}
        \centering
        \includegraphics[width=\columnwidth]{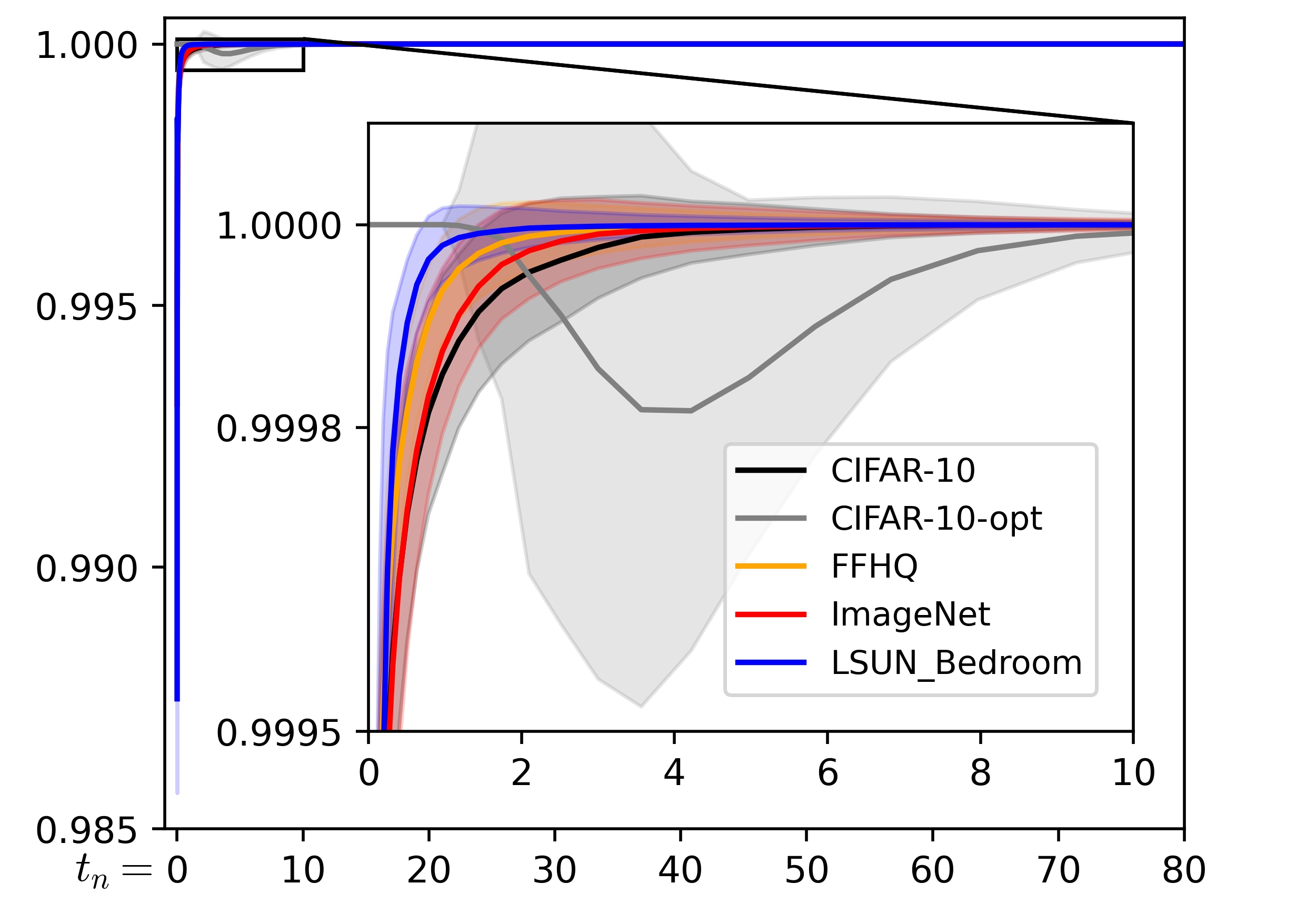}
        \label{fig:cos}
    \end{subfigure}
    \caption{\textit{Left:} The trajectory length is around $\sigma_T\sqrt{d}$ for both the optimal and actual diffusion models. \textit{Right:} The cosine value between two consecutive Euler steps is very small, which indicates the magnitude of each oscillation is extremely small (around 0$^\circ$).}
    \label{fig:cos_and_cumu}
\end{figure*}

\subsubsection{Monotone Increase in Sample Likelihood}
\label{subsubsec:likelihood}
In this section, we characterize the local behavior of the sampling process of diffusion models. To simplify notations, we denote the deviation of denoising output from the optimal counterpart as $d_1(\hatx_{t_{n}}) = \left\lVert r_{\bftheta}^{\star}(\hatx_{t_{n}}) - r_{\bftheta}(\hatx_{t_{n}}) \right\rVert_2$ and the distance between the optimal denoising output and the current position as $d_2(\hatx_{t_{n}}) = \left\lVert r_{\bftheta}^{\star}(\hatx_{t_{n}}) - \hatx_{t_{n}}\right\rVert_2$.
\begin{assumption}
    \label{assumption:ratio}
	We assume $d_1(\hatx_{t_{n}}) \le d_2(\hatx_{t_{n}})$ for all $\hatx_{t_{n}}$, $n\in[1, N]$ in the sampling trajectory.
\end{assumption}
This assumption requires that our learned denoising output $r_{\bftheta}(\hatx_{t_{n}})$ falls within a sphere centered at the optimal denoising output $r_{\bftheta}^{\star}(\hatx_{t_{n}})$ with a radius of $d_2(\hatx_{t_{n}})$. This radius controls the maximum deviation of the learned denoising output and shrinks during the sampling process. In practice, the assumption is relatively easy to satisfy for a well-trained diffusion model.
A visual illustration is provided in Figure~\ref{fig:meanshift}. 
\begin{proposition}
    \label{prop:likelihood}
    We have $p_{h}(r_{\bftheta}(\hatx_{t_n}))\ge p_{h}(\hatx_{t_n})$ and $p_{h}(\hatx_{t_{n-1}})\ge p_{h}(\hatx_{t_n})$ in terms of the KDE $p_{h}(\bfx)=\frac{1}{|\calI|}\sum_{i\in\calI}\calN(\bfx; \bfy_i, h^2\bfI)$ with any positive bandwidth $h$.
\end{proposition}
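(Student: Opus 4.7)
The plan is to split the claim into the ``denoising ascent'' $p_h(r_{\bftheta}(\hatx_{t_n})) \ge p_h(\hatx_{t_n})$ and the ``sampling ascent'' $p_h(\hatx_{t_{n-1}}) \ge p_h(\hatx_{t_n})$, handling the first via a Comaniciu--Meer-style inequality derived from the mean-shift interpretation of Section~\ref{subsec:meanshift}, then reducing the second to the first via the convex-combination identity of Proposition~\ref{prop:convex}.

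For the first inequality I would begin with the optimal denoising output $r_{\bftheta}^{\star}(\hatx_{t_n}) = \sum_i w_i \bfy_i$ with softmax weights $w_i \propto \exp(-\lVert \hatx_{t_n} - \bfy_i \rVert_2^2 / (2\sigma_{t_n}^2))$, which coincides with the mean-shift vector for the bandwidth matching $\sigma_{t_n}$. Using the elementary inequality $e^u \ge 1 + u$ inside the ratio $p_h(r_{\bftheta}^{\star})/p_h(\hatx_{t_n})$ and collapsing the resulting cross term via $\sum_i w_i (\bfy_i - \hatx_{t_n}) = r_{\bftheta}^{\star}(\hatx_{t_n}) - \hatx_{t_n}$ gives the quantitative ascent
\begin{equation*}
    p_h(r_{\bftheta}^{\star}(\hatx_{t_n})) \;\ge\; p_h(\hatx_{t_n})\left(1 + \frac{d_2(\hatx_{t_n})^2}{2 h^2}\right),
\end{equation*}
which is the classical mean-shift ascent lemma. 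I would then transfer the bound from $r_{\bftheta}^{\star}$ to the learned $r_{\bftheta}$: Assumption~\ref{assumption:ratio} places $r_{\bftheta}(\hatx_{t_n})$ inside a ball of radius $d_1 \le d_2$ around $r_{\bftheta}^{\star}(\hatx_{t_n})$, and a first-order Taylor bound on $\log p_h$ restricted to this ball shows that the quadratic ascent margin absorbs the first-order perturbation caused by replacing $r_{\bftheta}^{\star}$ by $r_{\bftheta}$.

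For the second inequality, the Euler case of Proposition~\ref{prop:convex} writes $\hatx_{t_{n-1}} = \alpha \hatx_{t_n} + (1-\alpha) r_{\bftheta}(\hatx_{t_n})$ with $\alpha = \sigma_{t_{n-1}}/\sigma_{t_n} \in (0, 1)$. I would show that the map $s \mapsto p_h\!\left(\hatx_{t_n} + s\,(r_{\bftheta}(\hatx_{t_n}) - \hatx_{t_n})\right)$ is non-decreasing on $[0, 1-\alpha]$, by arguing that its initial directional derivative is non-negative (its sign is governed by the inner product of $r_{\bftheta}(\hatx_{t_n}) - \hatx_{t_n}$ with the score $\nabla \log p_h(\hatx_{t_n})$, which itself aligns with $r_{\bftheta}^{\star}(\hatx_{t_n}) - \hatx_{t_n}$) and that the density at the endpoint $s=1$ satisfies the ascent from the first part, so by a mean-value argument the density at the interior point $s = 1-\alpha$ cannot drop below $p_h(\hatx_{t_n})$.

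The main obstacle I anticipate is the mismatch between the bandwidth $\sigma_{t_n}$ that defines the mean-shift vector $r_{\bftheta}^{\star}(\hatx_{t_n})$ and the arbitrary bandwidth $h$ appearing in the claim: the ascent inequality is cleanest when $h = \sigma_{t_n}$, and extending to general $h$ requires either a bookkeeping-heavy rederivation with both bandwidths present or a continuity-in-$h$ argument on the ascent margin. A secondary difficulty is that the Gaussian KDE $p_h$ is not log-concave, so the monotonicity of $p_h$ along the segment from $\hatx_{t_n}$ to $r_{\bftheta}(\hatx_{t_n})$ cannot be obtained by a convexity shortcut and needs an explicit computation of the derivative along this ray.
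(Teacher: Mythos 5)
Your plan diverges from the paper's in a way that introduces two genuine gaps. The paper does not separately prove a mean-shift ascent lemma and then perturb; instead, it establishes a single \emph{ball} lemma: for any vector $\bfv$ with $\lVert\bfv\rVert_2\le d_2(\hatx_{t_n})=\lVert r_{\bftheta}^{\star}(\hatx_{t_n})-\hatx_{t_n}\rVert_2$, one has $p_h\bigl(r_{\bftheta}^{\star}(\hatx_{t_n})+\bfv\bigr)\ge p_h(\hatx_{t_n})$. The inequality is obtained by applying convexity of $u\mapsto e^u$ to each Gaussian term and then using the mean-shift stationarity identity $\sum_i\calN(\hatx_{t_n};\bfy_i,h^2\bfI)\,\bfy_i=\bigl(\sum_j\calN(\hatx_{t_n};\bfy_j,h^2\bfI)\bigr)r_{\bftheta}^{\star}(\hatx_{t_n})$ to collapse the cross terms, yielding the exact cancellation $\lVert r_{\bftheta}^{\star}-\hatx_{t_n}\rVert^2-\lVert\bfv\rVert^2\ge 0$. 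Both claims then follow by choosing $\bfv=r_{\bftheta}(\hatx_{t_n})-r_{\bftheta}^{\star}(\hatx_{t_n})$ (giving $r_{\bftheta}^{\star}+\bfv=r_{\bftheta}$, with $\lVert\bfv\rVert=d_1\le d_2$ by Assumption~\ref{assumption:ratio}) and $\bfv=r_{\bftheta}(\hatx_{t_n})-r_{\bftheta}^{\star}(\hatx_{t_n})+\tfrac{t_{n-1}}{t_n}(\hatx_{t_n}-r_{\bftheta}(\hatx_{t_n}))$ (giving $r_{\bftheta}^{\star}+\bfv=\hatx_{t_{n-1}}$, and the convex combination $\lVert\bfv\rVert\le(1-\alpha)d_1+\alpha d_2\le d_2$). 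In other words, $\hatx_{t_{n-1}}$ lies inside the ball of radius $d_2$ centered at $r_{\bftheta}^{\star}$ and $\hatx_{t_n}$ lies on its boundary; no monotonicity along the segment is asserted or needed.

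The first gap in your version is the transfer step. A first-order Taylor bound on $\log p_h$ over the ball of radius $d_1$ around $r_{\bftheta}^{\star}$ gives $\log p_h(r_{\bftheta})\ge\log p_h(r_{\bftheta}^{\star})-Ld_1$ for a local Lipschitz constant $L=\sup\lVert\nablax\log p_h\rVert$, and you then need $\log\bigl(1+d_2^2/(2h^2)\bigr)\ge Ld_1$. But $\nablax\log p_h(\bfx)=(\bfm_h(\bfx)-\bfx)/h^2$ is not uniformly controlled for a Gaussian KDE: $\lVert\bfm_h(\bfx)-\bfx\rVert$ can be as large as the diameter of the dataset, so $L$ need not be dominated by the quadratic ascent margin, especially when $d_1$ is close to $d_2$. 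The paper avoids this by never referencing $p_h$ at $r_{\bftheta}^{\star}$: the convexity step is anchored at $\hatx_{t_n}$ directly, so no local smoothness of $\log p_h$ near $r_{\bftheta}^{\star}$ is needed.

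The second gap is the mean-value argument for the sampling ascent. Having $g(0)=p_h(\hatx_{t_n})$, $g(1)\ge g(0)$ from the first part, and $g'(0)\ge 0$ does not preclude an interior dip: a function can start flat or rising, fall below its initial value, and then recover to finish above it, so no mean-value or intermediate-value argument gives $g(1-\alpha)\ge g(0)$. Moreover, monotonicity of $g$ on $[0,1-\alpha]$ is stronger than what is true in general; what \emph{is} true (and is what the ball lemma delivers directly) is that every point on the segment is inside the $d_2$-ball around $r_{\bftheta}^{\star}$ and hence has density at least $p_h(\hatx_{t_n})$.

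Your bandwidth concern is legitimate: the mean-shift stationarity identity used in step (ii) holds for $\bfm(\hatx_{t_n},h)$, i.e.\ with the same bandwidth $h$ as in $p_h$, whereas $r_{\bftheta}^{\star}(\hatx_{t_n};\sigma_{t_n})$ is the mean vector at bandwidth $\sigma_{t_n}$. The paper's proof tacitly identifies the two, which strictly requires $h=\sigma_{t_n}$ (or a reinterpretation of $r_{\bftheta}^{\star}$ as the $h$-bandwidth mean vector, which then misaligns with Assumption~\ref{assumption:ratio}). This is a real wrinkle you correctly noticed, but it is shared with the paper's own argument and is orthogonal to the two structural issues above, which are specific to your proposal.
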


\begin{figure}
	\centering
    \begin{subfigure}[t]{0.3\textwidth}
        \centering
        \includegraphics[width=\columnwidth]{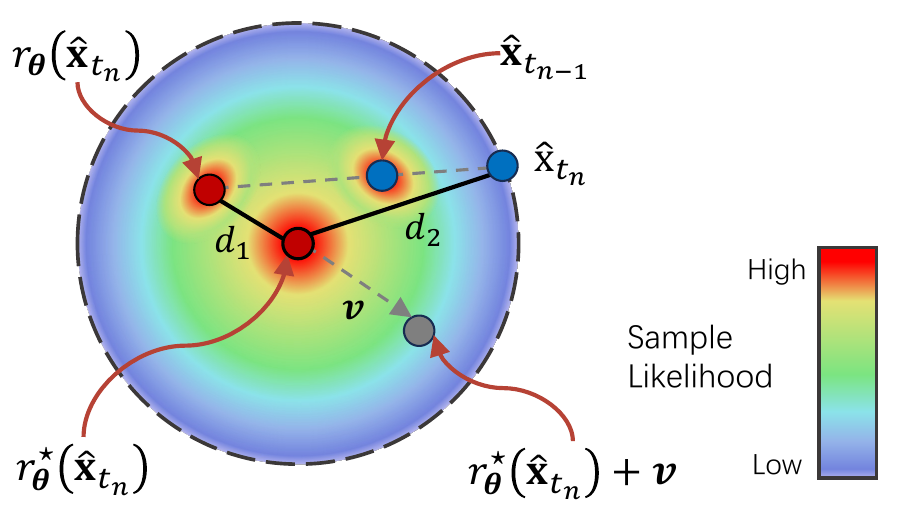}
    \end{subfigure}
    \begin{subfigure}[t]{0.40\textwidth}
        \centering
        \includegraphics[width=\columnwidth]{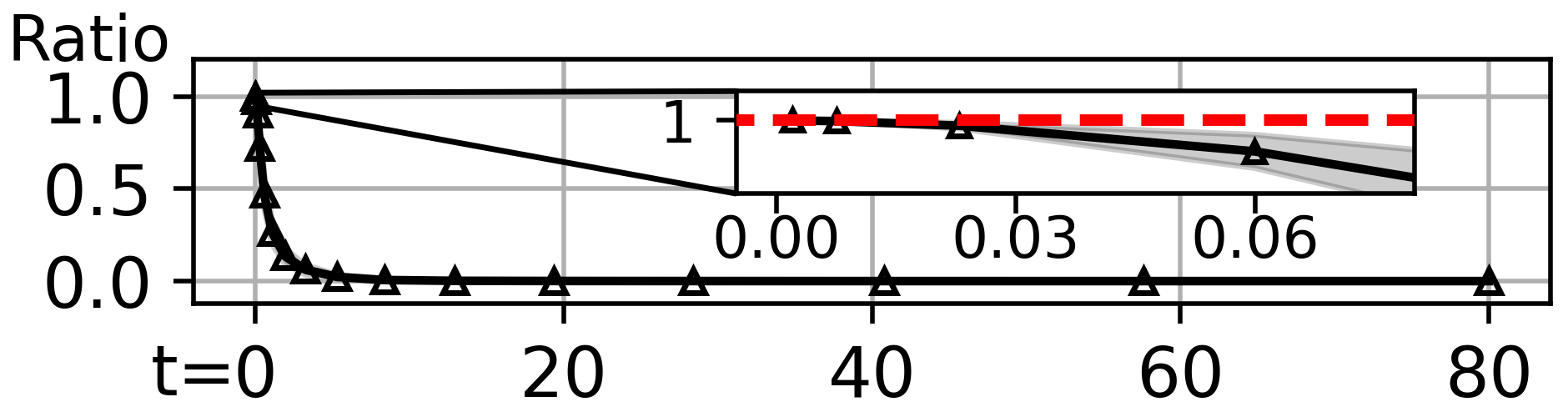}
    \end{subfigure}
	\caption{\textit{Left:} We have three likelihood rankings in the ODE-based diffusion sampling: (1) $p_{h}(r_{\bftheta}(\hatx_{t_n}))\ge p_{h}(\hatx_{t_n})$, (2) $p_{h}(\hatx_{t_{n-1}})\ge p_{h}(\hatx_{t_n})$, and (3) $p_{h}(r_{\bftheta}^{\star}(\hatx_{t_n}))\ge p_{h}(\hatx_{t_n})$. This figure complements the trajectory structure shown in Figure~\ref{fig:convex}. \textit{Right:} We compute the ratios $d_1(\hatx_{t_{n}})/d_2(\hatx_{t_{n}})$ along 50k sampling trajectories on CIFAR-10 (black curve), and find that these ratios are consistently lower than one from $t_N=80$ to $t_0=0.002$. This empirical evidence supports the validity of Assumption~\ref{assumption:ratio} in practice.}
	\label{fig:meanshift}
\end{figure}

\begin{proof}
	We first prove that given a random vector $\bfv$ falling within a sphere centered at the optimal denoising output $r_{\bftheta}^{\star}(\hatx_{t_n})$ with a radius of $\left\lVert r_{\bftheta}^{\star}(\hatx_{t_n})- \hatx_{t_n}\right\rVert_2$, \textit{i.e.}, $\left\lVert r_{\bftheta}^{\star}(\hatx_{t_n}) - \hatx_{t_n} \right\rVert_2 \ge \left\lVert\bfv  \right\rVert_2$, the sample likelihood is non-decreasing from $\hatx_{t_n}$ to  $r_{\bftheta}^{\star}(\hatx_{t_n})+\bfv$, \textit{i.e.}, $p_{h}(r_{\bftheta}^{\star}(\hatx_{t_n}) + \bfv)\ge p_{h}(\hatx_{t_n})$. Then, we provide two settings for $\bfv$ to finish the proof.
	
	The increase of sample likelihood from $\hatx_{t_n}$ to  $r_{\bftheta}^{\star}(\hatx_{t_n})+\bfv$ in terms of $p_{h}(\bfx)$ is
	\begin{equation}
		\begin{aligned}
			&\quad\, p_h(r_{\bftheta}^{\star}(\hatx_{t_n}) + \bfv) - p_h(\hatx_{t_n})=\frac{1}{|\calI|}\sum_{i}
			\left[\calN\left(r_{\bftheta}^{\star}(\hatx_{t_n}) + \bfv;\bfy_i, h^2\bfI\right)-\calN\left(\hatx_{t_n};\bfy_i, h^2\bfI\right)\right]\\
			&\overset{\text{(i)}}{\ge} \frac{1}{2h^2|\calI|}\sum_{i}\calN\left(\hatx_{t_n};\bfy_i, h^2\bfI\right)\left[\lVert \hatx_{t_n} - \bfy_i \rVert^2_2 
			-\lVert r_{\bftheta}^{\star}(\hatx_{t_n}) + \bfv - \bfy_i \rVert^2_2 \right]\\
			&=\frac{1}{2h^2|\calI|}\sum_{i}\calN\left(\hatx_{t_n};\bfy_i, h^2\bfI\right)\left[\lVert \hatx_{t_n}\rVert^2_2   - 2\hatx_{t_n}^{T}\bfy_i -\lVert r_{\bftheta}^{\star}(\hatx_{t_n}) + \bfv \rVert^2_2 + 2\left(r_{\bftheta}^{\star}(\hatx_{t_n}) + \bfv\right)^{T}\bfy_i  \right]\\
			&\overset{\text{(ii)}}{=}\frac{1}{2h^2|\calI|}\sum_{i}\calN\left(\hatx_{t_n};\bfy_i, h^2\bfI\right)\left[\lVert \hatx_{t_n}\rVert^2_2 - 2\hatx_{t_n}^{T}r_{\bftheta}^{\star}(\hatx_{t_n}) -\lVert r_{\bftheta}^{\star}(\hatx_{t_n}) + \bfv \rVert^2_2 + 2\left(r_{\bftheta}^{\star}(\hatx_{t_n}) + \bfv\right)^{T} r_{\bftheta}^{\star}(\hatx_{t_n}) \right]\\
			&=\frac{1}{2h^2|\calI|}\sum_{i}\calN\left(\hatx_{t_n};\bfy_i, h^2\bfI\right)\left[\lVert \hatx_{t_n}\rVert^2_2 - 2\hatx_{t_n}^T r_{\bftheta}^{\star}(\hatx_{t_n})+\lVert r_{\bftheta}^{\star}(\hatx_{t_n})\rVert^2_2 - \lVert \bfv \rVert^2_2 \right]\\
			&=\frac{1}{2h^2|\calI|}\sum_{i}\calN\left(\hatx_{t_n};\bfy_i, h^2\bfI\right)\left[\lVert r_{\bftheta}^{\star}(\hatx_{t_n}) - \hatx_{t_n}\rVert^2_2  - \lVert \bfv \rVert^2_2\right] \ge 0, 
		\end{aligned}
	\end{equation}
	where 
	(i) uses the definition of convex function $f(\bfx_2)\ge f(\bfx_1)+f^{\prime}(\bfx_1)(\bfx_2-\bfx_1)$ with $f(\bfx)=\exp \left(-\frac{1}{2}\lVert\bfx\rVert^2_2\right)$, $\bfx_1=\left(\hatx_{t_n}-\bfy_i \right)/h$ and $\bfx_2=\left(r_{\bftheta}^{\star}(\hatx_{t_n}) + \bfv-\bfy_i \right)/h$; 
	(ii) uses the relationship between two consecutive steps $\hatx_{t_n}$ and $r_{\bftheta}^{\star}(\hatx_{t_n})$ in mean shift with the Gaussian kernel (see \eqref{eq:mean_shift})
	\begin{equation}
		r_{\bftheta}^{\star}(\hatx_{t_n})=\bfm(\hatx_{t_n})=\sum_{i} \frac{\exp \left(-\lVert \hatx_{t_n} - \bfy_i \rVert^2_2/2h^2\right)}{\sum_{j}\exp \left(-\lVert \hatx_{t_n} - \bfy_j \rVert^2_2/2h^2\right)} \bfy_i,
	\end{equation}
	which implies the following equation also holds
	\begin{equation}
		\sum_i \calN\left(\hatx_{t_n};\bfy_i, h^2\bfI\right) \bfx_i = \sum_i \calN\left(\hatx_{t_n};\bfy_i, h^2\bfI\right) r_{\bftheta}^{\star}(\hatx_{t_n}).
	\end{equation}
	Since $\left\lVert r_{\bftheta}^{\star}(\hatx_{t_n}) - \hatx_{t_n} \right\rVert_2 \ge \left\lVert\bfv  \right\rVert_2$, or equivalently, $\left\lVert r_{\bftheta}^{\star}(\hatx_{t_n}) - \hatx_{t_n} \right\rVert^2_2 \ge \left\lVert\bfv  \right\rVert^2_2$, we conclude that the sample likelihood monotonically increases from $\hatx_{t_n}$ to $r_{\bftheta}^{\star}(\hatx_{t_n})+\bfv$ unless $\hatx_{t_n}=r_{\bftheta}^{\star}(\hatx_{t_n})+\bfv$, in terms of the kernel density estimate $p_{h}(\bfx)=\frac{1}{|\calI|}\sum_{i}\calN(\bfx; \bfy_i, h^2\bfI)$ with any positive bandwidth $h$. 
	
	We next provide two settings for $\bfv$, which trivially satisfy the condition $\left\lVert r_{\bftheta}^{\star}(\hatx_{t_n}) - \hatx_{t_n} \right\rVert_2 \ge \left\lVert\bfv  \right\rVert_2$
	, and have the following corollaries:
	\begin{itemize}
		\item $p_{h}(r_{\bftheta}(\hatx_{t_n}))\ge p_{h}(\hatx_{t_n})$, when $\bfv=r_{\bftheta}(\hatx_{t_n})-r_{\bftheta}^{\star}(\hatx_{t_n})$. 
		\item $p_{h}(\hatx_{t_{n-1}})\ge p_{h}(\hatx_{t_n})$, when  $\bfv=r_{\bftheta}(\hatx_{t_n})-r_{\bftheta}^{\star}(\hatx_{t_n})+\frac{t_{n-1}}{t_n}\left(\hatx_{t_n}-r_{\bftheta}(\hatx_{t_n})\right)$. 
	\end{itemize}
\end{proof}
Therefore, each sampling trajectory monotonically converges ($p_{h}(\hatx_{t_{n-1}})\ge p_{h}(\hatx_{t_n})$), and its coupled denoising trajectory converges even faster ($p_{h}(r_{\bftheta}(\hatx_{t_n}))\ge p_{h}(\hatx_{t_n})$) in terms of the sample likelihood.
Given an empirical data distribution, Proposition~\ref{prop:likelihood} applies to any marginal distributions of our forward SDE $\{p_t(\bfx)\}_{t=0}^T$, which should include the optimal bandwidth for the considered dataset. 

We can also obtain the standard monotone convergence property of mean shift~\cite{comaniciu2002mean} from Proposition~\ref{prop:likelihood} when diffusion models are trained to achieve the optimal parameters.
\begin{corollary}
	\label{corollary:meanshift}
	We have $p_{h}(\bfm(\hatx_{t_n}))\ge p_{h}(\hatx_{t_n})$, when $r_{\bftheta}(\hatx_{t_n})=r_{\bftheta}^{\star}(\hatx_{t_n})=\bfm (\hatx_{t_n})$. 
\end{corollary}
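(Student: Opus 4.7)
\textbf{Proof proposal for Corollary~\ref{corollary:meanshift}.} The plan is to recognize this as essentially a degenerate special case of Proposition~\ref{prop:likelihood}, requiring no new machinery. Under the stated hypothesis $r_{\bftheta}(\hatx_{t_n}) = r_{\bftheta}^{\star}(\hatx_{t_n}) = \bfm(\hatx_{t_n})$, the deviation of the learned denoising output from the optimal one vanishes, i.e.\ $d_1(\hatx_{t_n}) = 0 \le d_2(\hatx_{t_n})$, so Assumption~\ref{assumption:ratio} holds trivially and Proposition~\ref{prop:likelihood} is applicable.

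Concretely, I would invoke the setup inside the proof of Proposition~\ref{prop:likelihood}, which established that $p_h(r_{\bftheta}^{\star}(\hatx_{t_n}) + \bfv) \ge p_h(\hatx_{t_n})$ for any $\bfv$ with $\lVert\bfv\rVert_2 \le \lVert r_{\bftheta}^{\star}(\hatx_{t_n}) - \hatx_{t_n}\rVert_2$. Choosing $\bfv = \mathbf{0}$ — which certainly satisfies the norm bound — yields $p_h(r_{\bftheta}^{\star}(\hatx_{t_n})) \ge p_h(\hatx_{t_n})$. By the hypothesis $r_{\bftheta}^{\star}(\hatx_{t_n}) = \bfm(\hatx_{t_n})$, this is exactly the claim $p_h(\bfm(\hatx_{t_n})) \ge p_h(\hatx_{t_n})$. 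Equivalently, one can take $\bfv = r_{\bftheta}(\hatx_{t_n}) - r_{\bftheta}^{\star}(\hatx_{t_n}) = \mathbf{0}$ and appeal directly to the first bullet at the end of the proof of Proposition~\ref{prop:likelihood}.

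There is no real obstacle here: the entire content of the corollary is the observation that classical mean shift's monotone-ascent property~\cite{comaniciu2002mean} is recovered in the optimal limit of our framework, where the learned denoiser coincides with the KDE-induced mean vector. The only thing worth spelling out for the reader is the correspondence $r_{\bftheta}^{\star}(\hatx_{t_n}) = \bfm(\hatx_{t_n})$, which was derived in Section~\ref{subsubsec:optimal_denoising_output} and reiterated in equation~\eqref{eq:mean_shift}; once this identification is made, the inequality is an immediate consequence of the convexity-based estimate already carried out in the proof of Proposition~\ref{prop:likelihood}.
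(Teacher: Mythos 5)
Your argument is correct and matches the paper's intent: the corollary is presented as an immediate consequence of Proposition~\ref{prop:likelihood} (take $\bfv = \mathbf{0}$, or equivalently the first bullet with $d_1(\hatx_{t_n})=0$), combined with the identification $r_{\bftheta}^{\star}(\hatx_{t_n}) = \bfm(\hatx_{t_n})$ from Section~\ref{subsubsec:optimal_denoising_output}. Nothing further is needed.
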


This connection also implies that once a diffusion model has converged to the optimum, all ODE trajectories will be uniquely determined and governed by a bandwidth-varying mean shift. In this case, the forward (encoding) process and backward (decoding) process only depend on the data distribution and the given noise distribution, regardless of model architectures or optimization algorithms. Such a property was previously referred to as \textit{uniquely identifiable encoding} and empirically verified in \cite{song2021sde}, while we theoretically characterize the optimum with annealed mean shift, and thus reveal the asymptotic behavior of diffusion models. 

Besides, the optimal diffusion models simply memorize the dataset and replay a certain discrete data point in sampling. 
We argue that in practice, a slight score deviation from the optimum ensures the generative ability of diffusion models while greatly alleviating the mode collapse issue. Experimental results are provided in the next section.

\subsubsection{Diagnosis of Score Deviation}
\label{subsec:score_deviation}
We simulate four new trajectories based on the optimal denoising output $r_{\bftheta}^{\star}(\cdot)$ to monitor the score deviation from the optimum. The first one is \textit{optimal sampling trajectory} $\{\hatx_{t}^{\star}\}$, where we generate samples as the sampling trajectory $\{\hatx_{t}\}$ by simulating \eqref{eq:epf_ode} but adopt $r^{\star}_{\bftheta}(\cdot)$ rather than $r_{\bftheta}(\cdot)$ for score estimation. The other three trajectories are simulated by tracking the (optimal) denoising output of each sample in $\{\hatx_{t}^{\star}\}$ or $\{\hatx_{t}\}$, and designated as $\{r_{\bftheta}(\hatx_{t}^{\star})\}$, $\{r^{\star}_{\bftheta}(\hatx_{t}^{\star})\}$, $\{r_{\bftheta}^{\star}(\hatx_{t})\}$. 
According to \eqref{eq:convex} and $t_0=0$, we have $\hatx_{t_0}^{\star}=r_{\bftheta}^{\star}(\hatx_{t_1}^{\star})$, and similarly, $\hatx_{t_0}=r_{\bftheta}(\hatx_{t_1})$. As $t\rightarrow 0$, $r^{\star}_{\bftheta}(\hatx_{t}^{\star})$ and $r_{\bftheta}^{\star}(\hatx_{t})$ serve as the approximate nearest neighbors of $\hatx_{t}^{\star}$ and $\hatx_{t}$ to the real data, respectively.

\begin{figure*}[t]
	\centering
	\begin{subfigure}[t]{\textwidth}
		\centering
		\includegraphics[width=0.85\textwidth]{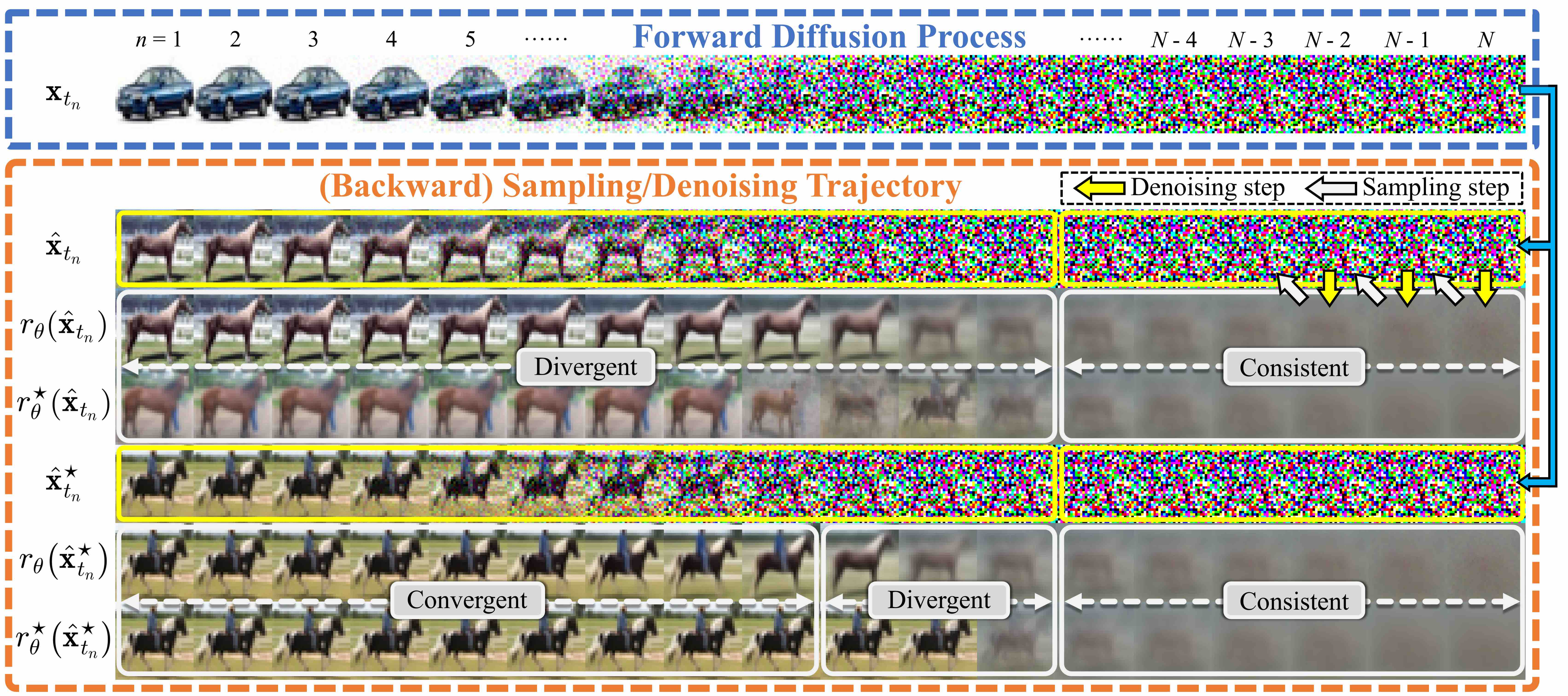}
	\end{subfigure}
	\hfil
	\begin{subfigure}[t]{\textwidth}
		\centering
		\includegraphics[width=0.85\textwidth]{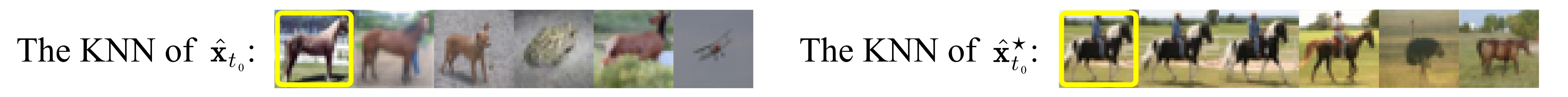}
	\end{subfigure}
	\caption{\textit{Top}: We visualize a forward diffusion process of a randomly-selected image to obtain its encoding $\hatx_{t_N}$ (first row) and simulate multiple trajectories starting from this encoding (other rows). 
		\textit{Bottom}: The k-nearest neighbors (k=5) of $\hatx_{t_0}$ and $\hatx_{t_0}^{\star}$ to real samples in the dataset.
	}
	\label{fig:diagnosis}
\end{figure*}
We calculate the deviation of denoising output to quantify the score deviation across all time steps using the $L^2$ distance, though they should differ by a factor $t^2$, and have the following observation: 
\textit{The learned score is well-matched to the optimal score in the large-noise region, otherwise they may diverge or almost coincide depending on different regions. }
In fact, our learned score has to moderately diverge from the optimum to guarantee the generative ability. Otherwise, the ODE-based sampling reduces to an approximate (single-step) annealed mean shift for global mode-seeking (see Section~\ref{subsubsec:meanshift}), and simply replays the dataset. 
As shown in Figure~\ref{fig:diagnosis}, the nearest sample of $\hatx_{t_0}^{\star}$ to the real data is almost the same as itself, which indicates the optimal sampling trajectory has a very limited ability to synthesize novel samples. Empirically, score deviation in a small region is sufficient to bring forth a decent generative ability. 

From the comparison of $\{r_{\bftheta}(\hatx_t^{\star})\}$, $\{r_{\bftheta}^{\star}(\hatx_t^{\star})\}$ sequences in Figures~\ref{fig:diagnosis} and~\ref{fig:score_deviation}, we can clearly see that \textit{along the optimal sampling trajectory}, the deviation between the learned denoising output $r_{\bftheta}(\cdot)$ and its optimal counterpart $r_{\bftheta}^{\star}(\cdot)$ behaves differently in three successive regions:
the deviation starts off as almost negligible (about $10<t\le 80$), gradually increases (about $3<t\le10$), and then drops down to a low level once again (about $0\le t\le3$).
This phenomenon was also validated by a recent work \cite{xu2023stable} with a different perspective. 
We further observe that \textit{along the sampling trajectory}, this phenomenon disappears and the score deviation keeps increasing (see $\{r_{\bftheta}(\hatx_t)\}$, $\{r_{\bftheta}^{\star}(\hatx_t)\}$ sequences in Figures~\ref{fig:diagnosis} and~\ref{fig:score_deviation}). Additionally, samples in the latter half of $\{r_{\theta}^{\star}(\hatx_t)\}$ appear almost the same as the nearest sample of $\hatx_{t_0}$ to the real data, as shown in Figure~\ref{fig:diagnosis}. This indicates that our score-based model strives to explore novel regions, and synthetic samples in the sampling trajectory are quickly attracted to a real-data mode but do not fall into it.

\begin{figure*}[t]
	\centering
	\includegraphics[width=0.85\textwidth]{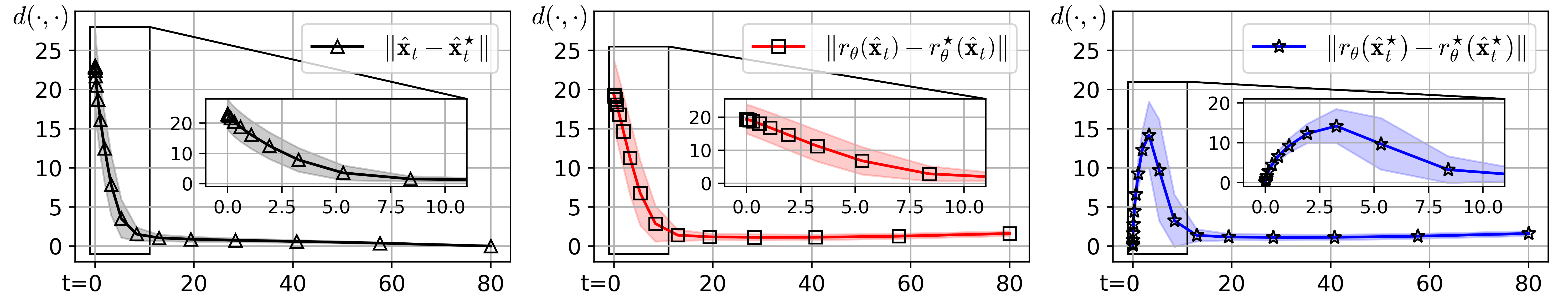}
	\caption{The deviation (measured by $L^2$ distance) of outputs from their corresponding optima.}
	\label{fig:score_deviation}
\end{figure*}

 \subsection{Additionally Experimental Results}
\label{subsec:appendix_results}

\subsubsection{Dynamic Programming}
\label{subsubsec:dp}

In this paper, as a simple illustration, we formulated the seeking of an optimal time schedule as an integer programming problem and solved it with the standard dynamic programming~\cite{cormen2022introduction}, as shown in Algorithm~\ref{alg:gits}. We also tried the Branch and Bound Algorithm and obtained similar results. Still, there also exist many different ways to determine the time schedule (e.g., using a trainable neural network) by leveraging our discovered trajectory regularity.

Specifically, we estimate the local truncation errors of PF-ODE based sampling trajectories, and compute the cost matrix for dynamic programming. In the sampling process of diffusion models, the global truncation error does not equal to the accumulation of local truncation error in each step. We thus introduce a coefficient $\gamma$ to compensate this effect.

\begin{algorithm}[tb]
    \caption{Geometry-Inspired Time Scheduling (standard dynamic programming, similar to~\cite{watson2021learning})}
    \label{alg:gits}
    \begin{algorithmic}[1]
        \STATE {\bfseries Input:} the number of teacher NFE $N_t$ for fine-grained sampling, the number of student NFE $N_{s}$ (NFE budget), the coefficient $\gamma$, a cost matrix with $ c_{jk} \coloneqq \calD(\hatx_{t_{N-j}\rightarrow t_{N-k}}, \bfx_{t_{N-j}\rightarrow t_{N-k}})$, $0 \leq j < k \leq N_t$. 
        \STATE Initialize $V_{ij} = +\infty$ $(0 \leq i \leq N_{s}, 0 \leq j \leq N_t)$, time schedule $\Gamma = [0]$, $m=0$
        \STATE $V_{i1} = c_{iN}$ $(0 \leq i \leq N_{s})$
        \FOR{$k=2$ {\bfseries to} $N_{s}$}
            \FOR{$j=0$ {\bfseries to} $N_t - 1$}
                \FOR{$i=j+1$ {\bfseries to} $N_t - 1$}
                    \STATE $V_{jk} = \min \{V_{jk}, \, \gamma c_{ji} + V_{ik-1} \}$
                \ENDFOR
            \ENDFOR
        \ENDFOR
        \STATE \# Fetch shortest path of $N_s$ steps
        \FOR{$k=N_{s}$ {\bfseries to} 1}
            \FOR{$j=m+1$ {\bfseries to} $N_t$}
                \IF{$V_{mk} == \gamma c_{mj} + V_{jk-1}$}
                    \STATE $\Gamma$.append($j$)
                    \STATE $m = j$
                    \STATE {\bfseries break}
                \ENDIF
            \ENDFOR
        \ENDFOR
        \STATE $\Gamma$.append($N_t$)
        \STATE {\bfseries Return:} $\Gamma$
    \end{algorithmic}
\end{algorithm}

\begin{table*}[t]
	\caption{Ablation study on the ``warmup'' sample size with iPNDM+GITS on CIFAR-10. $\dagger$ denotes that we search for a unique time schedule for each sample of 50k generated samples. This special case is more time-consuming while achieving similar results (due to the strong trajectory regularity).}
	\label{tab:warmup_size}
	\vskip 0.1in
	\centering
	\fontsize{7}{10}\selectfont
	\begin{tabular}{lcccccccc}
		\toprule
		\multirow{2}{*}{NFE} & \multicolumn{8}{c}{SAMPLE SIZE} \\
		\cmidrule{2-9}
		& 1$\dagger$ & 16 & 64 & 128 & \textbf{256} & 512 & 1024 & 2048 \\
		\midrule
		5  & 9.25 & 9.55$\pm$0.75 & 9.57$\pm$0.97 & 9.21 $\pm$0.44 & 8.84$\pm$0.30 & 8.81$\pm$0.04 & 8.89$\pm$0.11 & 8.88$\pm$0.12 \\
		6  & 5.12 & 5.36$\pm$0.61 & 5.16$\pm$0.28 & 4.99$\pm$0.18 & 5.03$\pm$0.25 & 5.20$\pm$0.27 & 5.01$\pm$0.19 & 4.92$\pm$0.08 \\
		8  & 3.13 & 3.25$\pm$0.13 & 3.22$\pm$0.08 & 3.28$\pm$0.10 & 3.27$\pm$0.11 & 3.30$\pm$0.11 & 3.29$\pm$0.08 & 3.33$\pm$0.10 \\
		10 & 2.41 & 2.46$\pm$0.11 & 2.46$\pm$0.05 & 2.45$\pm$0.05 & 2.46$\pm$0.04 & 2.45$\pm$0.04 & 2.44$\pm$0.05 & 2.44$\pm$0.05 \\
		\bottomrule
	\end{tabular}
	\label{tab:reb_ablation}
\end{table*}

\textbf{Discussion.}
\citet{watson2021learning} was the first one leveraging the idea of dynamic programming to re-allocate the time schedule in diffusion models. 
However, our motivation behind using DP significantly differs from this previous work. 
\citet{watson2021learning} exploit the fact that ELBO can be decomposed into separate KL terms and utilize DP to find the optimal discrete-time schedule that maximizes the training ELBO, but this strategy even worsens the sample quality, as admitted by authors. In contrast, we first found a strong trajectory regularity shared by all sampling trajectories (Section~\ref{sec:trajectory_visualization}), and then used some ``warmup'' samples to estimate the trajectory curvature to determine a better time schedule for the sampling of diffusion models. 

\begin{table}[t!]
	\caption{Used time (\textbf{seconds}) under different ``warmup'' sample sizes (iPNDM+GITS on CIFAR-10). ``warmup'' samples are generated by 60 NFE and the NFE budge for dynamic programming is 10.}
	\label{tab:vary_warmup_time}
	\vskip 0.1in
	\centering
	\fontsize{7}{10}\selectfont
	\begin{tabular}{lcccc}
		\toprule
		\multirow{2}{*}{SAMPLE SIZE} & sample & cost & dynamic & total \\
		& generation & matrix & programming & time (s) \\
		\midrule
		16             & 5.68   & 0.73   & 0.015  & 6.42   \\
		64             & 8.59   & 1.08   & 0.015  & 9.68   \\
		128            & 13.22  & 2.61   & 0.015  & 15.84  \\
		\rowcolor[gray]{0.9} 256 (default)   & 27.47  & 5.29   & 0.015  & 32.78  \\
		512            & 40.20  & 14.16  & 0.015  & 54.46  \\
		1024           & 75.90  & 29.58  & 0.015  & 105.58 \\
		2048           & 149.12 & 40.83  & 0.015  & 189.96 \\
		\bottomrule
	\end{tabular}
\end{table}

\textbf{More results.} We provide ablation studies on analytic first step (AFS) and sensitivity analysis of the coefficient $\gamma$ in Table~\ref{tab:fid-full}. We provide ablation studies on the grid size for generating the fine-grained sampling trajectory in Table~\ref{tab:ablation_teacher}, and the number of ``warmup'' sample size and used time in Table~\ref{tab:warmup_size} and Table~\ref{tab:vary_warmup_time}. 

\begin{table}[t!]
    \caption{Sample quality in terms of Fr\'echet Inception Distance (FID~\cite{heusel2017gans}, lower is better) on four datasets (resolutions ranging from $32\times32$ to $256\times256$). $\dagger$: After obtaining the DP schedule, we could further optimize the first time step with AFS, using the same 256 ``warmup'' samples as generating the fine-grained (teacher) sampling trajectory. 
    \textit{The default setting in our main submission dose not use AFS and keeps the coefficient in dynamic programming as 1.1 for LSUN Bedroom and 1.15 otherwise. Although as shown in the Table, the performance can be further improved by carefully tuning the coefficient and using AFS.}
    }
    \label{tab:fid-full}
    \vskip 0.1in
    \centering
    \fontsize{7}{10}\selectfont
    \begin{tabular}{lcccccccccc}
        \toprule
        \multirow{2}{*}{METHOD} & \multirow{2}{*}{Coeff} & \multirow{2}{*}{AFS$\dagger$} & \multicolumn{8}{c}{NFE} \\
        \cmidrule{4-11}
        & & & 3 & 4 & 5 & 6 & 7 & 8 & 9 & 10 \\
        \midrule
        \multicolumn{11}{l}{\fontsize{7}{1}\selectfont \textbf{CIFAR-10 32$\times$32}~\cite{krizhevsky2009learning}} \\
        \midrule
        DDIM~\cite{song2021ddim}          & -    & \usym{2613}      & 93.36 & 66.76 & 49.66 & 35.62 & 27.93 & 22.32 & 18.43 & 15.69 \\
        \rowcolor[gray]{0.9} DDIM + GITS  & 1.10 & \usym{2613}      & 88.68 & 46.88 & 32.50 & 22.04 & 16.76 & 13.93 & 11.57 & 10.09 \\
        \rowcolor[gray]{0.9} DDIM + GITS  (default) & 1.15 & \usym{2613}      & 79.67 & 43.07 & 28.05 & 21.04 & 16.35 & 13.30 & 11.62 & 10.37 \\
        \rowcolor[gray]{0.9} DDIM + GITS  & 1.20 & \usym{2613}      & 77.22 & 43.16 & 29.06 & 22.69 & 18.91 & 14.22 & 12.03 & 11.38 \\
        iPNDM~\cite{zhang2023deis}        & -    & \usym{2613}      & 47.98 & 24.82 & 13.59 & 7.05 & 5.08 & 3.69 & 3.17 & 2.77 \\
        \rowcolor[gray]{0.9} iPNDM + GITS & 1.10 & \usym{2613}      & 51.31 & 17.19 & 12.90 & 5.98 & 6.62 & 4.36 & 3.59 & 3.14 \\
        \rowcolor[gray]{0.9} iPNDM + GITS (default) & 1.15 & \usym{2613}      & 43.89 & 15.10 & 8.38  & 4.88 & 5.11 & 3.24 & 2.70 & 2.49 \\
        \rowcolor[gray]{0.9} iPNDM + GITS & 1.20 & \usym{2613}      & 42.06 & 15.85 & 9.33  & 7.13 & 5.95 & 3.28 & 2.81 & 2.71 \\
        \rowcolor[gray]{0.9} iPNDM + GITS & 1.10 & \usym{1F5F8}     & 34.22 & 11.99 & 12.44 & 6.08 & 6.20 & 3.53 & 3.48 & 2.91  \\
        \rowcolor[gray]{0.9} iPNDM + GITS & 1.15 & \usym{1F5F8}     & 29.63 & 11.23 & 8.08  & 4.86 & 4.46 & 2.92 & 2.46 & \textbf{2.27}  \\
        \rowcolor[gray]{0.9} iPNDM + GITS & 1.20 & \usym{1F5F8}     & \textbf{25.98} & \textbf{10.11} & \textbf{6.77}  & \textbf{4.29} & \textbf{3.43} & \textbf{2.70} & \textbf{2.42} & 2.28  \\
        \midrule
        \multicolumn{11}{l}{\fontsize{7}{1}\selectfont \textbf{FFHQ 64$\times$64}~\cite{karras2019style}} \\
        \midrule
        DDIM~\cite{song2021ddim}          & -    & \usym{2613}      & 78.21 & 57.48 & 43.93 & 35.22 & 28.86 & 24.39 & 21.01 & 18.37  \\
        \rowcolor[gray]{0.9} DDIM + GITS  & 1.10 & \usym{2613}      & 62.70 & 43.12 & 31.01 & 24.62 & 20.35 & 17.19 & 14.71 & 13.01  \\
        \rowcolor[gray]{0.9} DDIM + GITS  (default) & 1.15 & \usym{2613}      & 60.84 & 40.81 & 29.80 & 23.67 & 19.41 & 16.60 & 14.46 & 13.06  \\
        \rowcolor[gray]{0.9} DDIM + GITS  & 1.20 & \usym{2613}      & 59.64 & 40.56 & 30.29 & 23.88 & 20.07 & 17.36 & 15.40 & 14.05  \\
        iPNDM~\cite{zhang2023deis}        & -    & \usym{2613}      & 45.98 & 28.29 & 17.17 & 10.03 & 7.79 & 5.52 & 4.58 & 3.98  \\
        \rowcolor[gray]{0.9} iPNDM + GITS & 1.10 & \usym{2613}      & 34.82 & 18.75 & 13.07 & 7.79  & 8.30 & 4.76 & 5.36 & 3.47  \\
        \rowcolor[gray]{0.9} iPNDM + GITS (default) & 1.15 & \usym{2613}      & 33.09 & 17.04 & 11.22 & 7.00  & 6.72 & 4.52 & 4.33 & 3.62  \\
        \rowcolor[gray]{0.9} iPNDM + GITS & 1.20 & \usym{2613}      & 31.70 & 16.87 & 10.83 & 7.10  & 6.37 & 5.78 & 4.81 & 4.39  \\
        \rowcolor[gray]{0.9} iPNDM + GITS & 1.10 & \usym{1F5F8}     & 33.19 & 19.88 & 12.90 & 8.29 & 7.50 & 4.26 & 4.95 & \textbf{3.13}  \\
        \rowcolor[gray]{0.9} iPNDM + GITS & 1.15 & \usym{1F5F8}     & 30.39 & 15.78 & 10.15 & 6.86 & 5.97 & \textbf{4.09} & \textbf{3.76} & 3.24  \\
        \rowcolor[gray]{0.9} iPNDM + GITS & 1.20 & \usym{1F5F8}     & \textbf{26.41} & \textbf{13.59} & \textbf{8.85}  & \textbf{6.39} & \textbf{5.36} & 4.91 & 3.89 & 3.51  \\
        \midrule
        \multicolumn{11}{l}{\fontsize{7}{1}\selectfont \textbf{ImageNet 64$\times$64}~\cite{russakovsky2015ImageNet}} \\
        \midrule
        DDIM~\cite{song2021ddim}          & -    & \usym{2613}      & 82.96 & 58.43 & 43.81 & 34.03 & 27.46 & 22.59 & 19.27 & 16.72  \\
        \rowcolor[gray]{0.9} DDIM + GITS  & 1.10 & \usym{2613}      & 60.11 & 36.23 & 27.31 & 20.82 & 16.41 & 14.16 & 11.95 & 10.84  \\
        \rowcolor[gray]{0.9} DDIM + GITS (default)  & 1.15 & \usym{2613}      & 57.06 & 35.07 & 24.92 & 19.54 & 16.01 & 13.79 & 12.17 & 10.83  \\
        \rowcolor[gray]{0.9} DDIM + GITS  & 1.20 & \usym{2613}      & 54.24 & 34.27 & 24.67 & 19.46 & 16.66 & 14.15 & 13.41 & 11.87  \\
        iPNDM~\cite{zhang2023deis}        & -    & \usym{2613}      & 58.53 & 33.79 & 18.99 & 12.92 & 9.17 & 7.20 & 5.91 & 5.11  \\
        \rowcolor[gray]{0.9} iPNDM + GITS & 1.10 & \usym{2613}      & 36.18 & 19.64 & 13.18 & 9.58  & 7.68 & 6.44 & 5.24 & 4.59  \\
        \rowcolor[gray]{0.9} iPNDM + GITS (default) & 1.15 & \usym{2613}      & 34.47 & 18.95 & 10.79 & 8.43  & 6.83 & 5.82 & 4.96 & 4.48  \\
        \rowcolor[gray]{0.9} iPNDM + GITS & 1.20 & \usym{2613}      & 32.70 & 18.59 & 11.04 & 9.23  & 7.18 & 6.20 & 5.50 & 5.08  \\
        \rowcolor[gray]{0.9} iPNDM + GITS & 1.10 & \usym{1F5F8}     & 31.50 & 21.50 & 13.73 & 10.74 & 7.99 & 6.88 & 5.29 & 4.64  \\
        \rowcolor[gray]{0.9} iPNDM + GITS & 1.15 & \usym{1F5F8}     & 28.01 & 18.28 & 10.28 & 8.68  & 6.76 & 5.90 & 4.81 & \textbf{4.40}  \\
        \rowcolor[gray]{0.9} iPNDM + GITS & 1.20 & \usym{1F5F8}     & \textbf{26.41} & \textbf{16.41} & \textbf{9.85}  & \textbf{8.39}  & \textbf{6.44} & \textbf{5.64} & \textbf{4.79} & 4.47  \\
        \midrule
        \multicolumn{11}{l}{\fontsize{7}{1}\selectfont \textbf{LSUN Bedroom 256$\times$256}~\cite{yu2015lsun} (pixel-space)} \\
        \midrule
        DDIM~\cite{song2021ddim}          & -    & \usym{2613}      & 86.13 & 54.45 & 34.34 & 25.25 & 19.49 & 15.71 & 13.26 & 11.42  \\
        \rowcolor[gray]{0.9} DDIM + GITS  & 1.05 & \usym{2613}      & 81.77 & 36.89 & 27.46 & 18.78 & 13.60 & 12.23 & 10.29 & 8.77   \\
        \rowcolor[gray]{0.9} DDIM + GITS (default)  & 1.10 & \usym{2613}      & 61.85 & 35.12 & 22.04 & 16.54 & 13.58 & 11.20 & 9.82  & 9.04   \\
        \rowcolor[gray]{0.9} DDIM + GITS & 1.15 & \usym{2613}      & 60.11 & 31.02 & 23.65 & 17.18 & 13.42 & 12.61 & 10.89 & 10.57  \\
        iPNDM~\cite{zhang2023deis}        & -    & \usym{2613}      & 80.99 & 43.90 & 26.65 & 20.73 & 13.80 & 11.78 & 8.38 & 5.57  \\
        \rowcolor[gray]{0.9} iPNDM + GITS & 1.05 & \usym{2613}      & 59.02 & 24.71 & 19.08 & 12.77 & \textbf{8.19}  & \textbf{6.67}  & \textbf{5.58} & \textbf{4.83}  \\
        \rowcolor[gray]{0.9} iPNDM + GITS (default) & 1.10 & \usym{2613}      & 45.75 & 22.98 & \textbf{15.85} & \textbf{10.41} & 8.63  & 7.31  & 6.01 & 5.28  \\
        \rowcolor[gray]{0.9} iPNDM + GITS & 1.15 & \usym{2613}      & \textbf{44.78} & \textbf{21.67} & 17.29 & 11.52 & 9.59  & 8.82  & 7.22 & 5.97  \\
        \bottomrule
    \end{tabular}
	\vspace{-1em}
\end{table}

\begin{table}[t]
    \caption{Ablation study on the grid size used for the fine-grained sampling on CIFAR-10 with iPNDM. The DP coefficient is kept as 1.15.
    }
    \label{tab:ablation_teacher}
    \vskip 0.1in
    \centering
    \fontsize{7}{10}\selectfont
    \begin{tabular}{lccccccc}
        \toprule
        \multirow{2}{*}{GRID SIZE} & \multicolumn{6}{c}{NFE BUDGET} \\
        \cmidrule{2-8}
        & 4 & 5 & 6 & 7 & 8 & 9 & 10 \\
        \midrule
        11       & 20.88 & 10.15 & 5.11 & 4.63 & 3.16 & 2.78 & 2.77  \\
        21       & 16.22 & 9.87  & 4.83 & 3.76 & 3.39 & 3.20 & 2.81  \\
        41       & 15.34 & 9.34  & 4.83 & 5.54 & 3.01 & 2.66 & 2.53  \\
        \rowcolor[gray]{0.9} 61 (default)      & 15.10 & 8.38  & 4.88 & 5.11 & 3.24 & 2.70 & 2.49  \\
        81       & 15.74 & 8.57  & 5.09 & 5.38 & 3.10 & 2.93 & 2.38  \\
        101      & 15.03 & 8.72  & 5.02 & 5.19 & 3.12 & 2.81 & 2.41  \\
        \midrule
        iPNDM & 24.82 & 13.59 & 7.05 & 5.08 & 3.69 & 3.17 & 2.77  \\
        \bottomrule
    \end{tabular}
\end{table}

\begin{table}[t!]
    \caption{Time schedule on CIFAR-10 found by the dynamic programming. See the formula in texts for more details.} 
    \label{tab:schedule}
    \vskip 0.1in
    \centering
    \fontsize{8}{10}\selectfont
    \begin{tabular}{clc}
        \toprule
        NFE & TIME SCHEDULE & FID \\
        \midrule
        \multicolumn{3}{l}{\fontsize{7}{1}\selectfont \textbf{Uniform}} \\
        \midrule
        3  & [80.0000,  6.9503, 1.2867, 0.0020]                                                         & 50.44 \\
        4  & [80.0000, 11.7343, 2.8237, 0.8565, 0.0020]                                                 & 18.73 \\
        5  & [80.0000, 16.5063, 4.7464, 1.7541, 0.6502, 0.0020]                                         & 17.34 \\
        6  & [80.0000, 20.9656, 6.9503, 2.8237, 1.2867, 0.5272, 0.0020]                                 & 9.75 \\
        7  & [80.0000, 25.0154, 9.3124, 4.0679, 2.0043, 1.0249, 0.4447, 0.0020]                         & 12.50 \\
        8  & [80.0000, 28.6496, 11.7343, 5.4561, 2.8237, 1.5621, 0.8565, 0.3852, 0.0020]                 & 7.56 \\
        9  & [80.0000, 31.8981, 14.1472, 6.9503, 3.7419, 2.1599, 1.2867, 0.7382, 0.3401, 0.0020]         & 10.60 \\
        10 & [80.0000, 34.8018, 16.5063, 8.5141, 4.7464, 2.8237, 1.7541, 1.0985, 0.6502, 0.3047, 0.0020] & 7.35 \\
        \midrule
        \multicolumn{3}{l}{\fontsize{7}{1}\selectfont \textbf{LogSNR}} \\
        \midrule
        3  & [80.0000,  2.3392, 0.0684, 0.0020]                                                         & 88.38 \\
        4  & [80.0000,  5.6569, 0.4000, 0.0283, 0.0020]                                                 & 35.59 \\
        5  & [80.0000,  9.6090, 1.1542, 0.1386, 0.0167, 0.0020]                                         & 19.87 \\
        6  & [80.0000, 13.6798, 2.3392, 0.4000, 0.0684, 0.0117, 0.0020]                                 & 10.68 \\
        7  & [80.0000, 17.6057, 3.8745, 0.8527, 0.1876, 0.0413, 0.0091, 0.0020]                         & 6.56 \\
        8  & [80.0000, 21.2732, 5.6569, 1.5042, 0.4000, 0.1064, 0.0283, 0.0075, 0.0020]                 & 4.74 \\
        9  & [80.0000, 24.6462, 7.5929, 2.3392, 0.7207, 0.2220, 0.0684, 0.0211, 0.0065, 0.0020]         & 3.53 \\
        10 & [80.0000, 27.7258, 9.6090, 3.3302, 1.1542, 0.4000, 0.1386, 0.0480, 0.0167, 0.0058, 0.0020] & 2.94 \\
        \midrule
        \multicolumn{3}{l}{\fontsize{7}{1}\selectfont \textbf{Polynomial ($\rho = 7$)}} \\
        \midrule
        3  & [80.0000,  9.7232, 0.4700, 0.0020]                                                         & 47.98 \\
        4  & [80.0000, 17.5278, 2.5152, 0.1698, 0.0020]                                                 & 24.82 \\
        5  & [80.0000, 24.4083, 5.8389, 0.9654, 0.0851, 0.0020]                                         & 13.59 \\
        6  & [80.0000, 30.1833, 9.7232, 2.5152, 0.4700, 0.0515, 0.0020]                                 & 7.05 \\
        7  & [80.0000, 34.9922, 13.6986, 4.6371, 1.2866, 0.2675, 0.0352, 0.0020]                         & 5.08 \\
        8  & [80.0000, 39.0167, 17.5278, 7.1005, 2.5152, 0.7434, 0.1698, 0.0261, 0.0020]                 & 3.69 \\
        9  & [80.0000, 42.4152, 21.1087, 9.7232, 4.0661, 1.5017, 0.4700, 0.1166, 0.0204, 0.0020]         & 3.17 \\
        10 & [80.0000, 45.3137, 24.4083,12.3816, 5.8389, 2.5152, 0.9654, 0.3183, 0.0851, 0.0167, 0.0020] & 2.77 \\
        \midrule
        \multicolumn{3}{l}{\fontsize{7}{1}\selectfont \textbf{GITS (ours)}} \\
        \midrule
        3  & [80.0000,  3.8811, 0.9654, 0.0020]                                                         & 43.89 \\
        4  & [80.0000,  5.8389, 1.8543, 0.4700, 0.0020]                                                 & 15.10 \\
        5  & [80.0000,  6.6563, 2.1632, 0.8119, 0.2107, 0.0020]                                         & 8.38 \\
        6  & [80.0000, 10.9836, 3.8811, 1.5840, 0.5666, 0.1698, 0.0020]                                 & 4.88 \\
        7  & [80.0000, 12.3816, 3.8811, 1.5840, 0.5666, 0.1698, 0.0395, 0.0020]                         & 3.76 \\
        8  & [80.0000, 10.9836, 3.8811, 1.8543, 0.9654, 0.4700, 0.2107, 0.0665, 0.0020]                 & 3.24 \\
        9  & [80.0000, 12.3816, 4.4590, 2.1632, 1.1431, 0.5666, 0.2597, 0.1079, 0.0300, 0.0020]         & 2.70 \\
        10 & [80.0000, 12.3816, 4.4590, 2.1632, 1.1431, 0.5666, 0.3183, 0.1698, 0.0665, 0.0225, 0.0020] & 2.49 \\
        \bottomrule
    \end{tabular}
\end{table}

\textbf{Time schedule.}
The uniform schedule is widely used in papers using a DDPM~\cite{ho2020ddpm} backbone. Following the implementation in EDM~\cite{karras2022edm}, we transfer this schedule from its original range $[\epsilon_s, 1]$ to $[t_0, t_N]$ where $\epsilon_s = 0.001$, $t_0=0.002$ and $t_N=80$. We first uniformly sample $\tau_n$ ($n\in [0,N]$) from $[\epsilon_s, 1]$ and then calculate $t_n$ by
\begin{equation}
    t_n = \sqrt{e^{\frac{1}{2}{\beta_d}{\tau_n}^2 + \beta_{\min}\tau_n} - 1}
\end{equation}
where
\begin{equation}
    \beta_d = \frac{2}{\epsilon_s - 1} \frac{\log(1+{t_0}^2)}{\epsilon_s} - \log(1+{t_N}^2), \quad
    \beta_{\min} = \log(1+{t_N}^2) - \frac{1}{2}\beta_d.
\end{equation}

The losSNR time schedule is proposed for fast sampling in DPM-Solver~\cite{lu2022dpm}. We first uniformly sample $\lambda_n$ ($n\in [0,N]$) from $[\lambda_{\min}, \lambda_{\max}]$ where $\lambda_{\min} = -\log t_N$ and $\lambda_{\max} = -\log t_0$. The logSNR schedule is given by $t_n = e^{-\lambda_n}$.

The polynomial time schedule $t_n=(t_0^{1/\rho}+\frac{n}{N}(t_N^{1/\rho}-t_0^{1/\rho}))^{\rho}$ is proposed in EDM~\cite{karras2022edm}, where $t_0=0.002$, $t_N=80$, $n\in [0,N]$, and $\rho=7$.

The optimized time schedules for stable diffusion 1.5 in Figure~\ref{fig:sd_ays} include
\begin{itemize}
	\item AYS~\cite{sabour2024align}: [999, 850, 736, 645, 545, 455, 343, 233, 124, 24, 0].
	\item GITS: [999, 783, 632, 483, 350, 233, 133, 67, 33, 17, 0].
\end{itemize} 

\subsubsection{Regularity of Sampling Trajectory}

Figure~\ref{fig:app_deviation} provides more experiments about 1-D projections on  CIFAR-10, FFHQ, and LSUN Bedroom. Figure~\ref{fig:recon_cifar10} provides more experiments about Multi-D projections on CIFAR-10 and FFHQ. Figures~\ref{fig:cifar10_comparison}, \ref{fig:ffhq_comparison}, \ref{fig:imagenet_comparison}, and \ref{fig:bedroom_comparison} visualize more generated samples on four datasets.

\begin{figure*}[t!]
    \centering
    \begin{subfigure}[t]{0.33\textwidth}
        \includegraphics[width=\columnwidth]{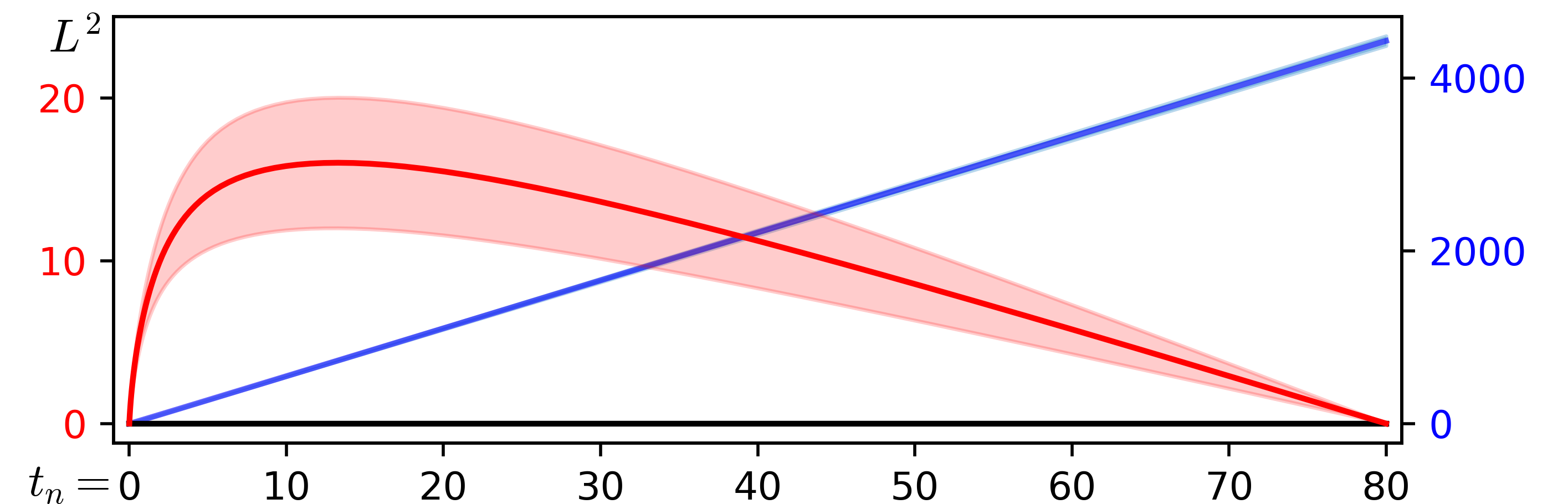}
        \caption{CIFAR-10 32$\times$32.}
        \label{fig:deviation_cifar10}
    \end{subfigure}
    \hfill
    \begin{subfigure}[t]{0.33\textwidth}
        \includegraphics[width=\columnwidth]{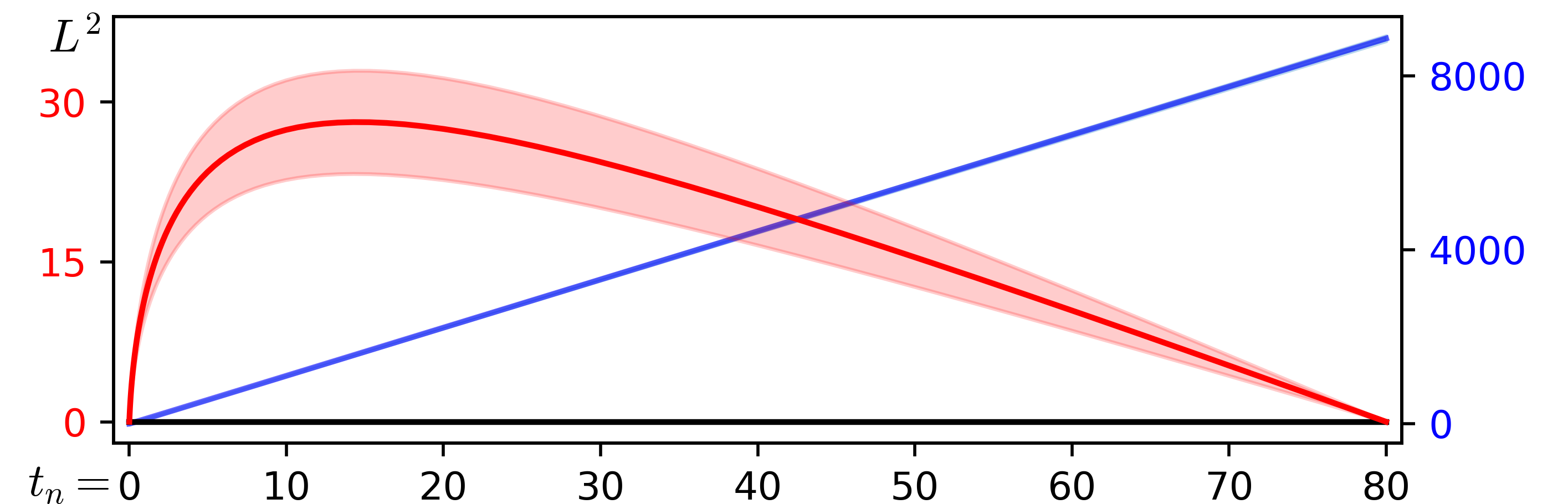}
        \caption{FFHQ 64$\times$64.}
        \label{fig:deviation_ffhq}
    \end{subfigure}
    \hfill
    \begin{subfigure}[t]{0.33\textwidth}
        \includegraphics[width=\columnwidth]{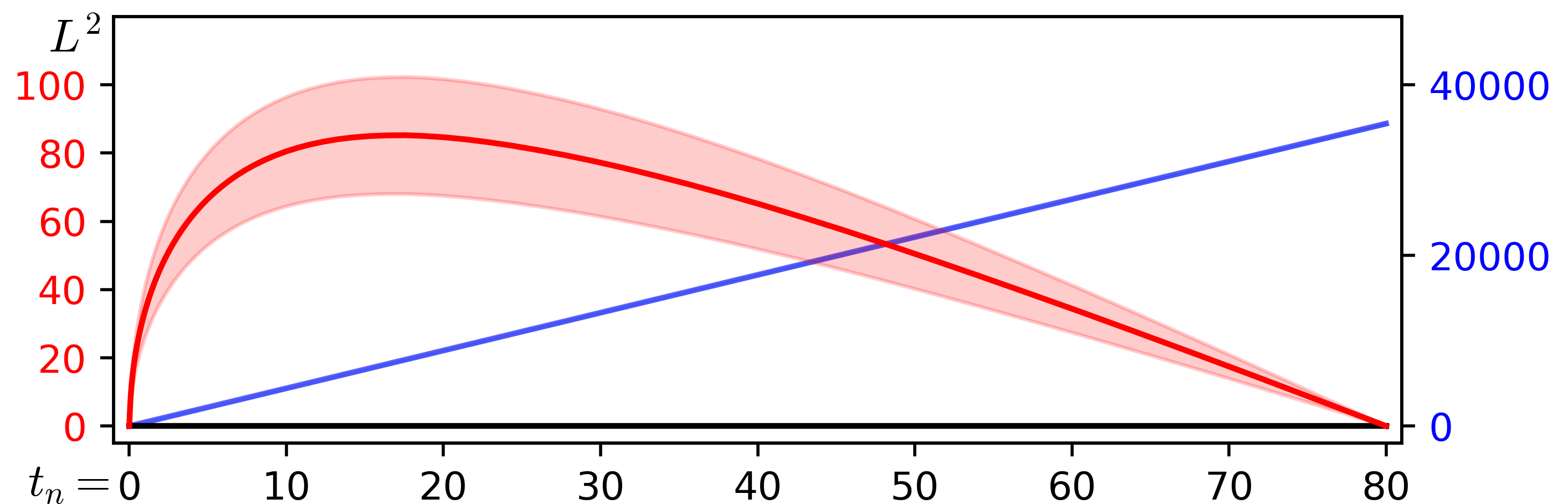}
        \caption{LSUN Bedroom 256$\times$256.}
        \label{fig:deviation_bedroom}
    \end{subfigure}
    \hfill
    \caption{
    Trajectory deviation (red curve) compared to the sample distance (blue curve) in the sampling process starting from $t_{N}=80$ to $t_0=0.002$. Each trajectory is simulated with Euler method and 100 NFE. The results are averaged by 5k generated samples.
    }
    \label{fig:app_deviation}
\end{figure*}

\begin{figure*}[t!]
    \centering
    \begin{subfigure}[t]{0.29\textwidth}
        \includegraphics[width=\columnwidth]{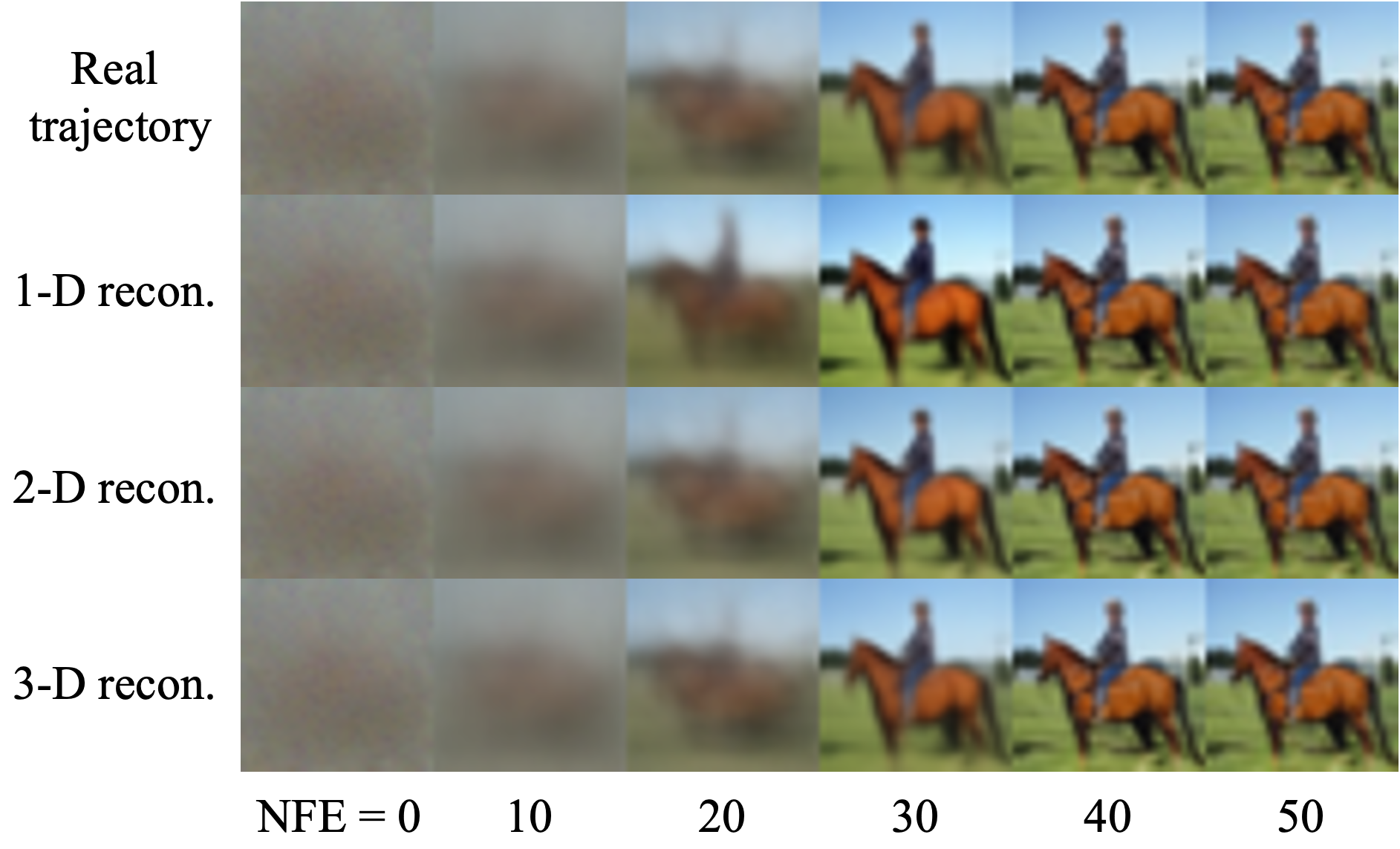}
        \caption{Visual Comparison.}
        \label{fig:recon_visual_cifar10}
    \end{subfigure}
    \hfill
    \begin{subfigure}[t]{0.16\textwidth}
        \includegraphics[width=\columnwidth]{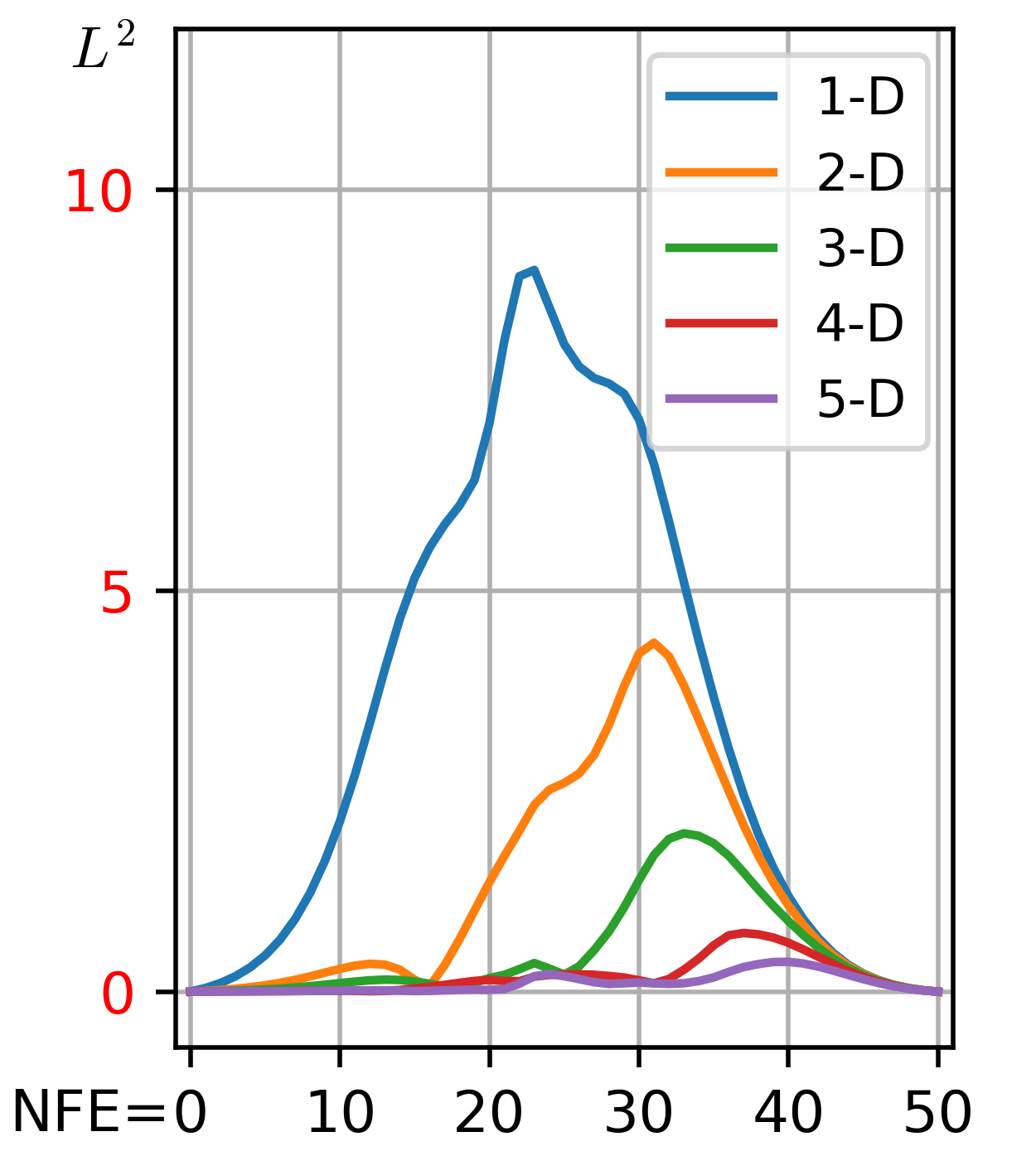}
        \caption{$L^2$ error.}
        \label{fig:recon_l2_cifar10}
    \end{subfigure}
    \hfill
    \begin{subfigure}[t]{0.29\textwidth}
    	\includegraphics[width=\columnwidth]{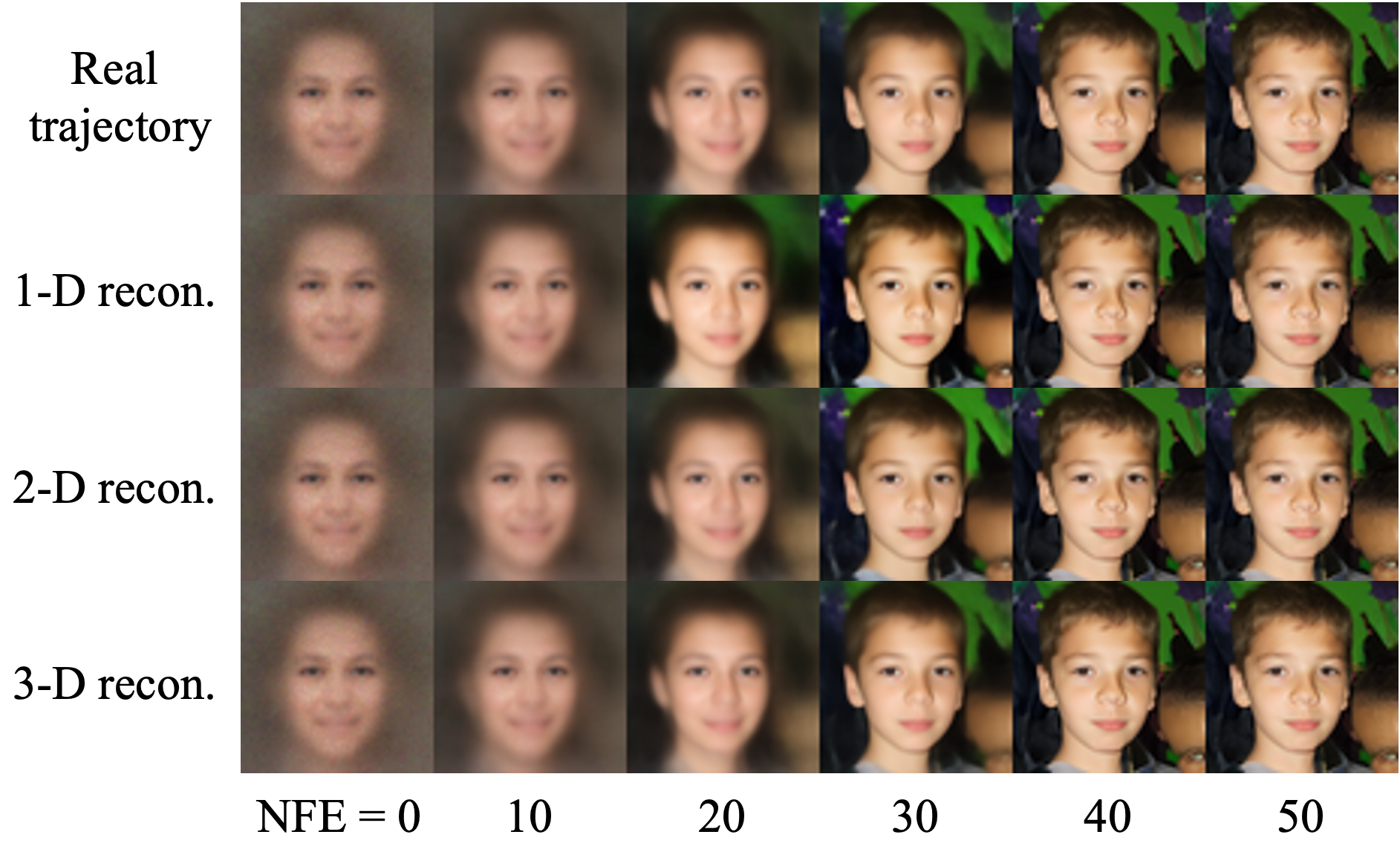}
    	\caption{Visual Comparison.}
    	\label{fig:recon_visual_ffhq}
    \end{subfigure}
    \hfill
    \begin{subfigure}[t]{0.16\textwidth}
    	\includegraphics[width=\columnwidth]{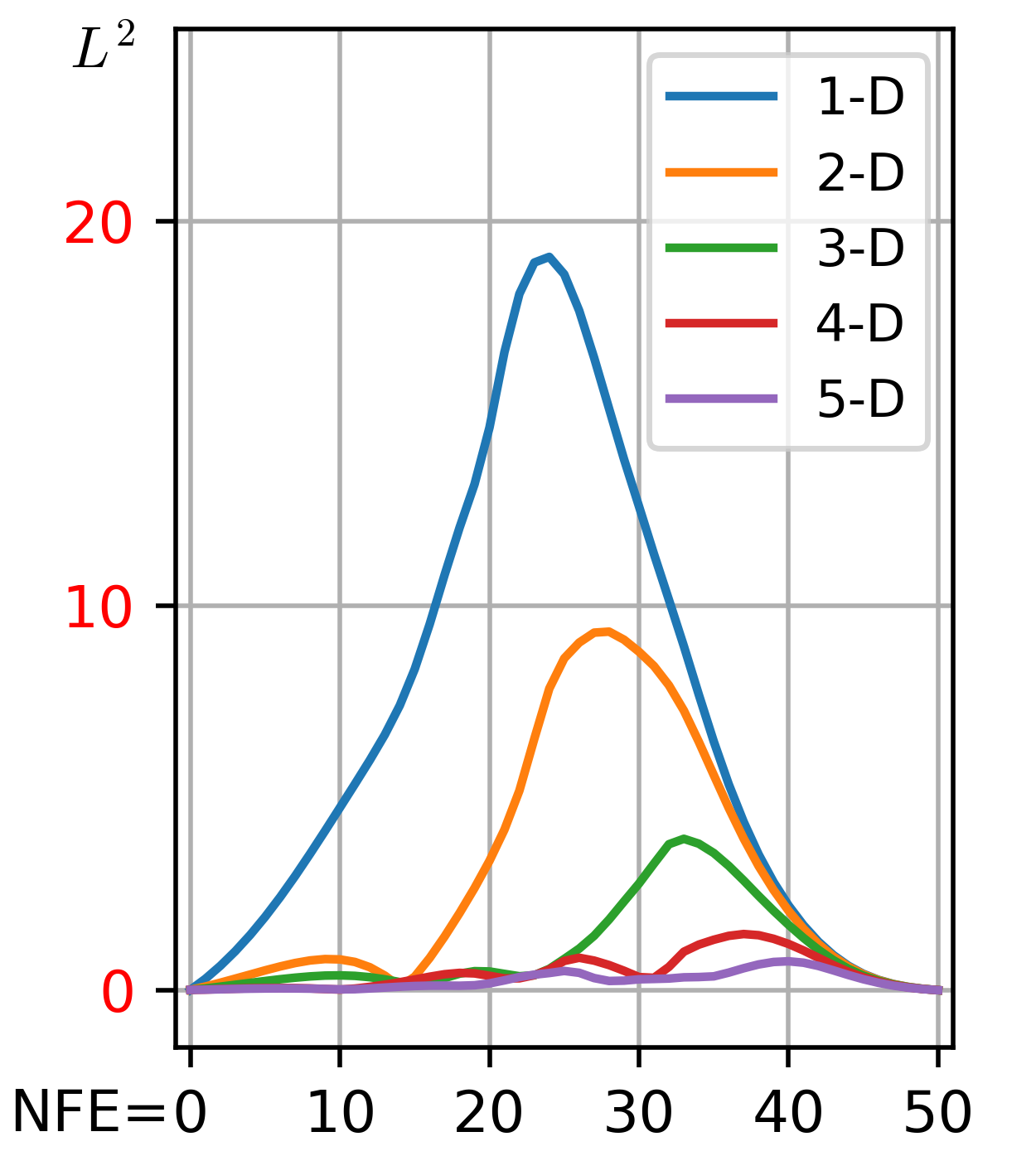}
    	\caption{$L^2$ error.}
    	\label{fig:recon_l2_ffhq}
    \end{subfigure}
    \hfill
    \caption{
    (a/c) The visual comparison of trajectory reconstruction on CIFAR-10, and FFHQ. We reconstruct the real sampling trajectory (top row) using $\hatx_{t_N}-\hatx_{t_0}$ (1-d recon.) along with its top 1 or 2 principal components (2-D or 3-D recon.). To amplify the visual difference, we present the denoising outputs of these trajectories. 
    (b/d) We compute the $L^2$ distance between the real trajectory the reconstructed trajectories up to 5-D reconstruction. 
    }
    \label{fig:recon_cifar10}
\end{figure*}

%

\begin{figure*}[t!]
    \centering
    \begin{subfigure}[t]{0.48\textwidth}
        \centering
        \includegraphics[width=\columnwidth]{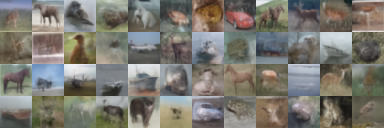}
        \caption{DDIM, NFE = 5, FID = 49.66.}
        \label{fig:cifar10_1}
    \end{subfigure}
    \begin{subfigure}[t]{0.48\textwidth}
        \centering
        \includegraphics[width=\columnwidth]{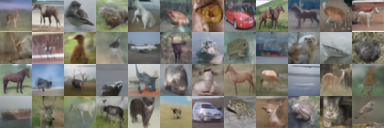}
        \caption{DDIM + GITS, NFE = 5, FID = 28.05.}
        \label{fig:cifar10_2}
    \end{subfigure}
    \\
    \vspace{0.1cm}
    \begin{subfigure}[t]{0.48\textwidth}
        \centering
        \includegraphics[width=\columnwidth]{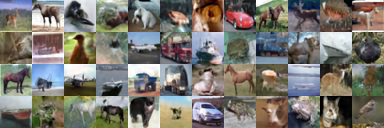}
        \caption{iPNDM, NFE = 5, FID = 13.59.}
        \label{fig:cifar10_3}
    \end{subfigure}
    \begin{subfigure}[t]{0.48\textwidth}
        \centering
        \includegraphics[width=\columnwidth]{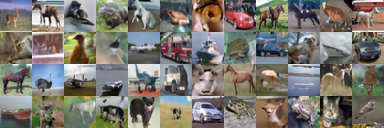}
        \caption{iPNDM + GITS, NFE = 5, FID = 8.38.}
        \label{fig:cifar10_4}
    \end{subfigure}
    \caption{Synthesized samples on CIFAR-10 with DDIM (+ GITS) and iPNDM (+ GITS).}
    \label{fig:cifar10_comparison}
\end{figure*}

\begin{figure*}[t]
    \centering
    \begin{subfigure}[t]{0.48\textwidth}
        \centering
        \includegraphics[width=\columnwidth]{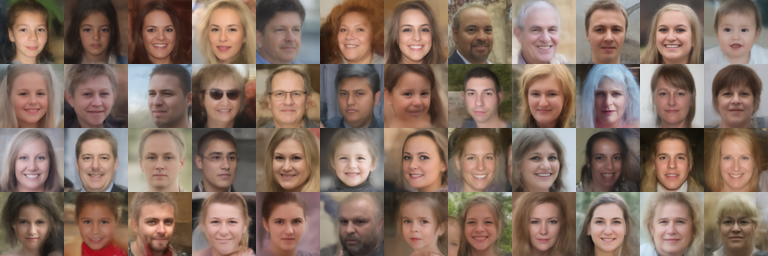}
        \caption{DDIM, NFE = 5, FID = 43.93.}
        \label{fig:ffhq_1}
    \end{subfigure}
    \begin{subfigure}[t]{0.48\textwidth}
        \centering
        \includegraphics[width=\columnwidth]{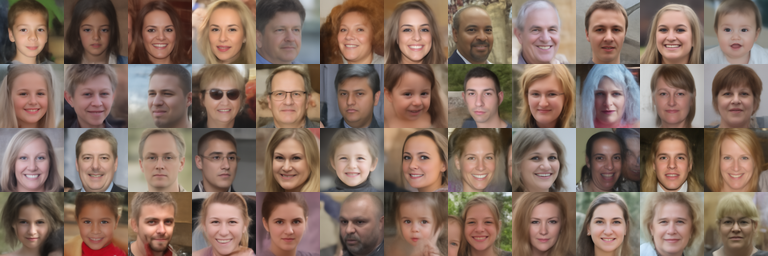}
        \caption{DDIM + GITS, NFE = 5, FID = 29.80.}
        \label{fig:ffhq_2}
    \end{subfigure}
    \\
    \vspace{0.1cm}
    \begin{subfigure}[t]{0.48\textwidth}
        \centering
        \includegraphics[width=\columnwidth]{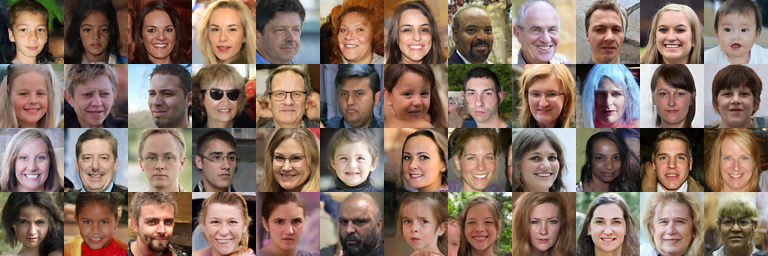}
        \caption{iPNDM, NFE = 5, FID = 17.17.}
        \label{fig:ffhq_3}
    \end{subfigure}
    \begin{subfigure}[t]{0.48\textwidth}
        \centering
        \includegraphics[width=\columnwidth]{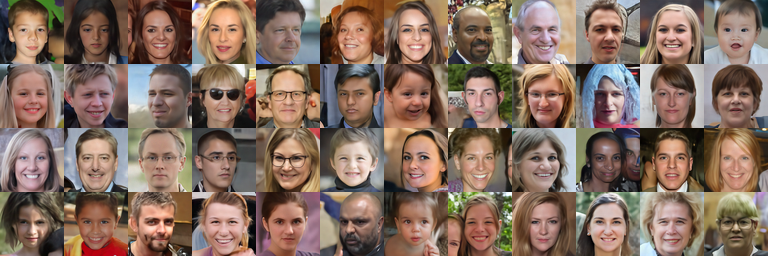}
        \caption{iPNDM + GITS, NFE = 5, FID = 11.22.}
        \label{fig:ffhq_4}
    \end{subfigure}
    \caption{Synthesized samples on FFHQ $64 \times 64$ with DDIM (+ GITS) and iPNDM (+ GITS).}
    \label{fig:ffhq_comparison}
\end{figure*}

\begin{figure*}[t]
    \centering
    \captionsetup[subfigure]{labelformat=simple}
    \begin{subfigure}[t]{0.48\textwidth}
        \centering
        \includegraphics[width=\columnwidth]{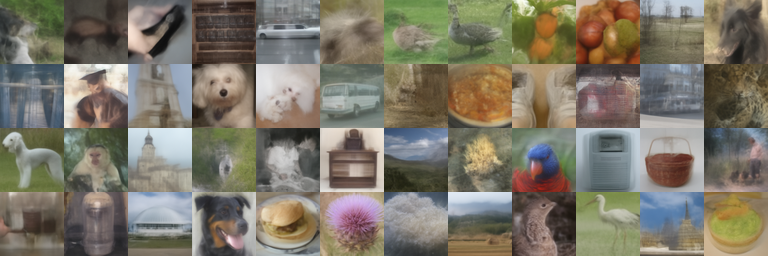}
        \caption{DDIM, NFE = 5, FID = 43.81.}
        \label{fig:imagenet_1}
    \end{subfigure}
    \begin{subfigure}[t]{0.48\textwidth}
        \centering
        \includegraphics[width=\columnwidth]{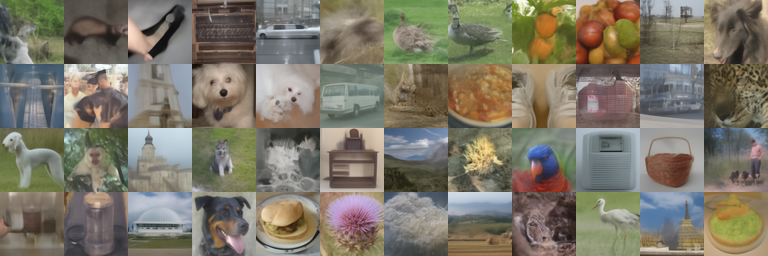}
        \caption{DDIM + GITS, NFE = 5, FID = 24.92.}
        \label{fig:imagenet_2}
    \end{subfigure}
    \\
    \vspace{0.1cm}
    \begin{subfigure}[t]{0.48\textwidth}
        \centering
        \includegraphics[width=\columnwidth]{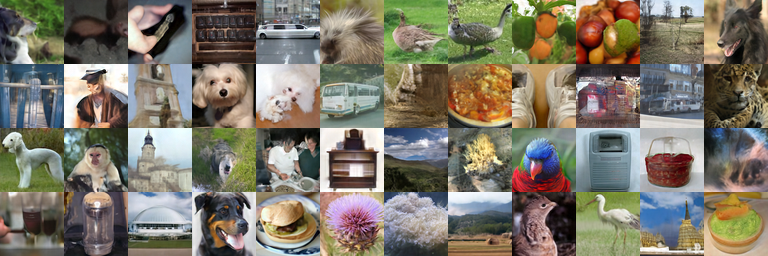}
        \caption{iPNDM, NFE = 5, FID = 18.99.}
        \label{fig:imagenet_3}
    \end{subfigure}
    \begin{subfigure}[t]{0.48\textwidth}
        \centering
        \includegraphics[width=\columnwidth]{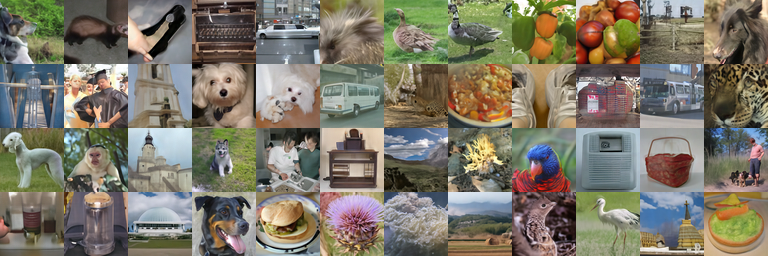}
        \caption{iPNDM + GITS, NFE = 5, FID = 10.79.}
        \label{fig:imagenet_4}
    \end{subfigure}
    \caption{Synthesized samples on ImageNet $64 \times 64$ with DDIM (+ GITS) and iPNDM (+ GITS).}
    \label{fig:imagenet_comparison}
\end{figure*}

\begin{figure*}[t]
    \centering
    \begin{subfigure}[t]{0.48\textwidth}
        \centering
        \includegraphics[width=\columnwidth]{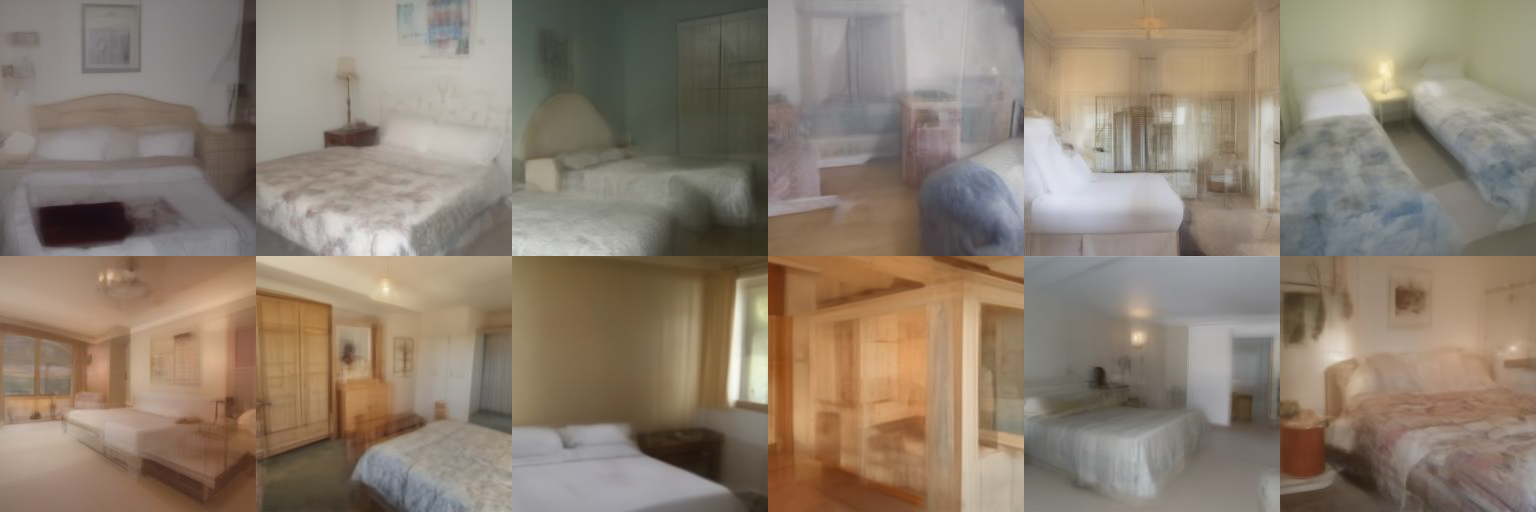}
        \caption{DDIM, NFE = 5, FID = 34.34.}
        \label{fig:bedroom_1}
    \end{subfigure}
    \begin{subfigure}[t]{0.48\textwidth}
        \centering
        \includegraphics[width=\columnwidth]{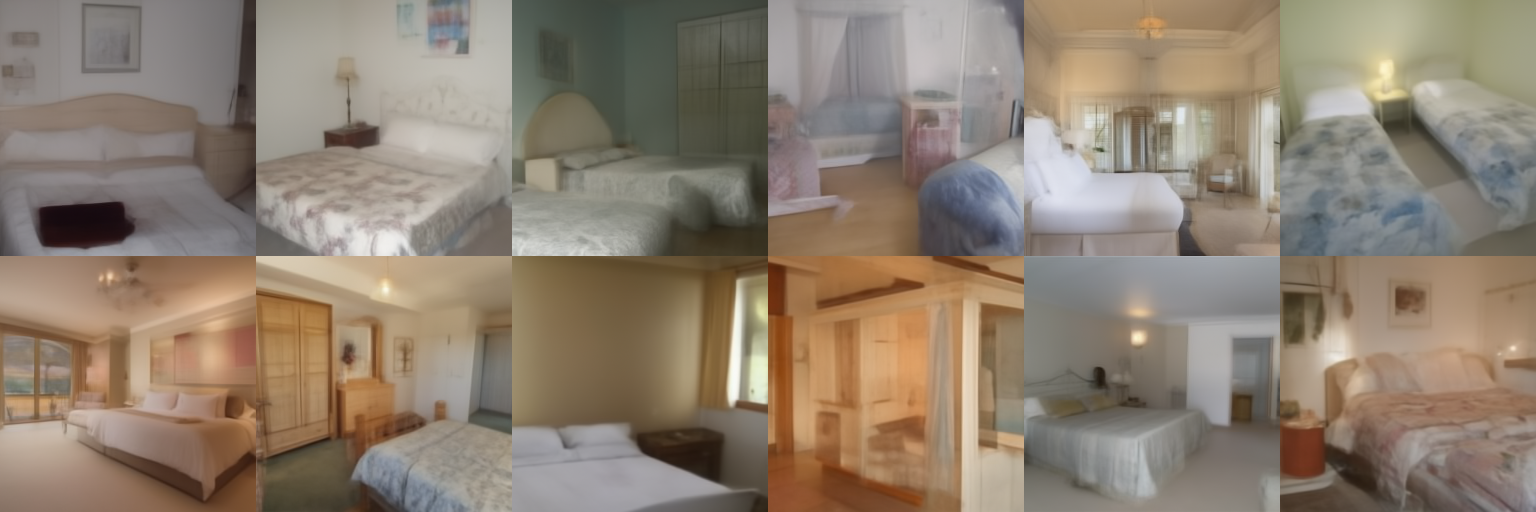}
        \caption{DDIM + GITS, NFE = 5, FID = 22.04.}
        \label{fig:bedroom_2}
    \end{subfigure}
    \\
    \vspace{0.1cm}
    \begin{subfigure}[t]{0.48\textwidth}
        \centering
        \includegraphics[width=\columnwidth]{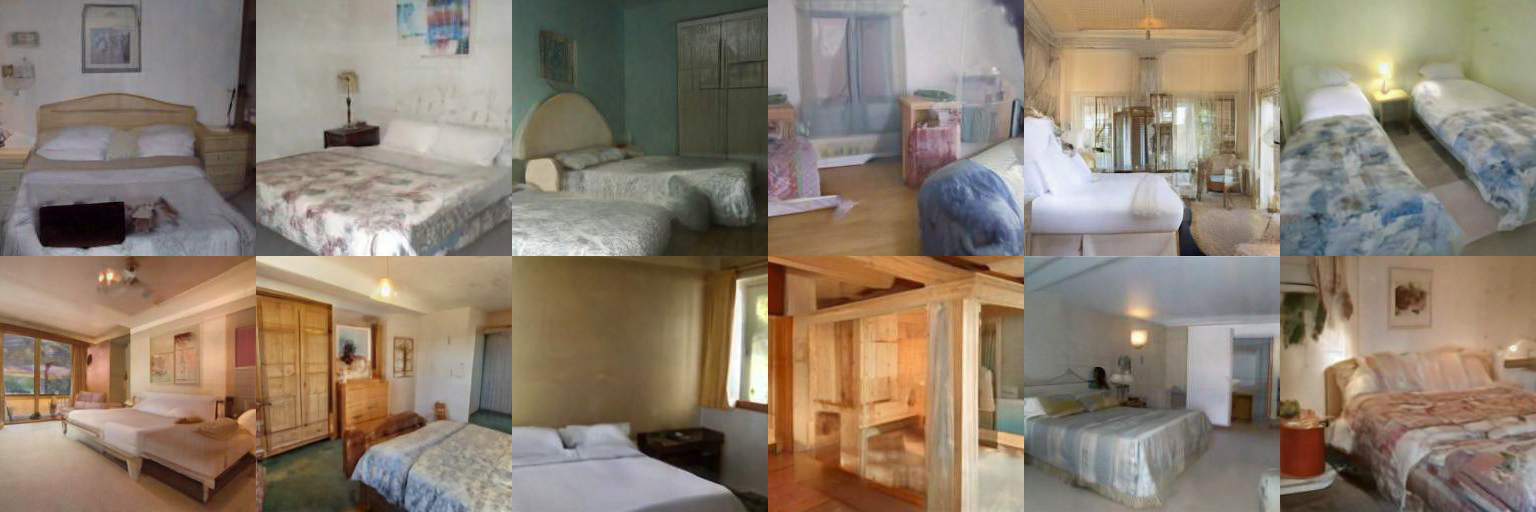}
        \caption{iPNDM, NFE = 5, FID = 26.65.}
        \label{fig:bedroom_3}
    \end{subfigure}
    \begin{subfigure}[t]{0.48\textwidth}
        \centering
        \includegraphics[width=\columnwidth]{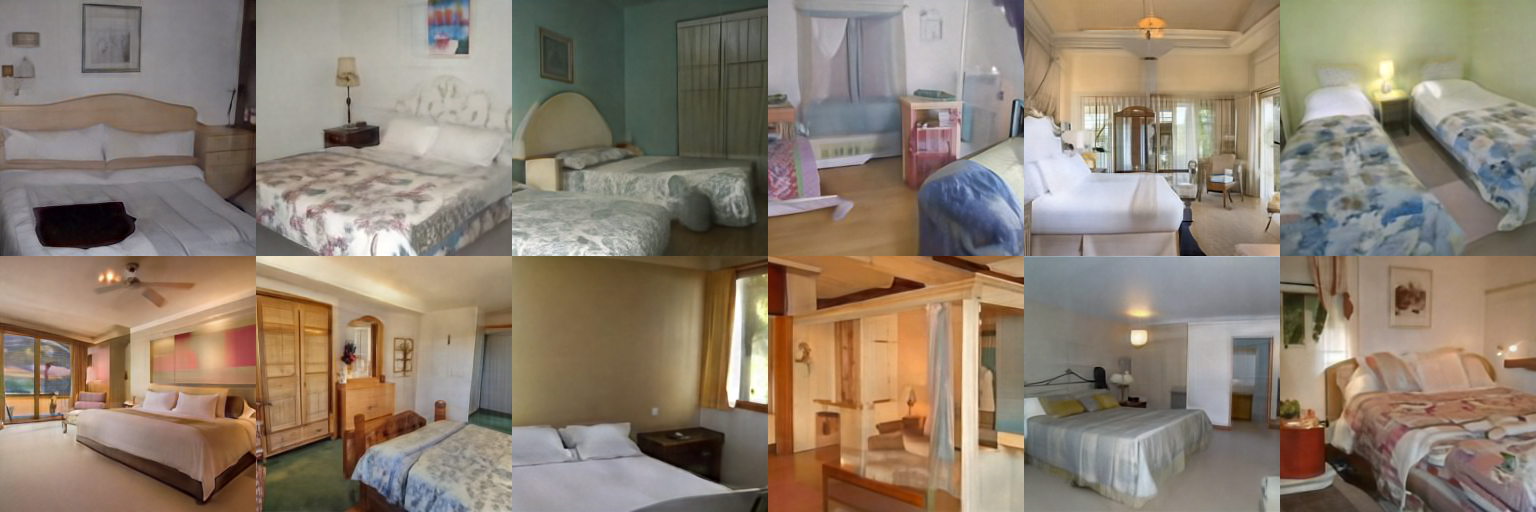}
        \caption{iPNDM + GITS, NFE = 5, FID = 15.85.}
        \label{fig:bedroom_4}
    \end{subfigure}
    \caption{Synthesized samples on LSUN Bedroom $256 \times 256$ (pixel-space) with DDIM (+ GITS) and iPNDM (+ GITS).}
    \label{fig:bedroom_comparison}
\end{figure*}

\end{document}